\DeclareMathOperator*{\argmin}{argmin}
\theoremstyle{plain}
\newtheorem{theorem}{Theorem}[section]
\theoremstyle{definition}
\theoremstyle{remark}
\newcommand{\proj}{DiffPhyCon\xspace}
\newcommand{\ugen}{\mathbf{u}_{[1,T]}}
\newcommand{\wgen}{\mathbf{w}_{[0,T-1]}}
\renewcommand{\u}{\mathbf{u}}
\renewcommand{\c}{\mathbf{c}}
\newcommand{\w}{\mathbf{w}}
\newcommand{\z}{\mathbf{z}}
\newcommand{\x}{\mathbf{x}}
\renewcommand{\v}{\mathbf{v}}
\newcommand{\bepsilon}{\bm{\epsilon}}
\def\J{\mathcal{J}}
\def\controlobj1d{\mathcal{J}_{\text{actual}}}
\newcommand{\cm}[1]{{\color{black}#1}}
\title{\proj: A Generative Approach to Control \\
Complex Physical Systems}
\author{%
    Long Wei$^{1}$\thanks{Equal contribution. $^\mathsection$Work done as an intern at Westlake University. $^\dagger$Corresponding author.} \quad Peiyan Hu$^{2*\mathsection}$ \quad Ruiqi Feng$^{1*}$ \quad
    Haodong Feng$^{1}$ \quad
    Yixuan Du$^{3\mathsection}$ \quad Tao Zhang$^{1}$ \\
    \textbf{Rui Wang$^{4\mathsection}$ \quad Yue Wang$^{5}$ \quad Zhi-Ming Ma$^{2}$ \quad Tailin Wu$^{1\dagger}$} \\
    $^1$School of Engineering, Westlake University, \\
    $^2$Academy of Mathematics and Systems Science, Chinese Academy of Sciences, \\
    $^3$Jilin University, \:
    $^4$Fudan University, \:
    $^5$Microsoft AI4Science \\
    \texttt{\{weilong,hupeiyan,fengruiqi,wutailin\}@westlake.edu.cn}
}
\begin{document}
\maketitle

\doparttoc %
\faketableofcontents %

\begin{abstract}
Controlling the evolution of complex physical systems is a fundamental task across science and engineering. 
Classical techniques suffer from limited applicability or huge computational costs. On the other hand, recent deep learning and reinforcement learning-based approaches often struggle to optimize long-term control sequences under the constraints of system dynamics. In this work, we introduce \underline{Diff}usion \underline{Phy}sical systems \underline{Con}trol (\proj), a new class of method to address the physical systems control problem. \proj excels by simultaneously minimizing both the learned generative energy function and the predefined control objectives across the entire trajectory and control sequence. Thus, it can explore globally and plan near-optimal control sequences. Moreover, we enhance \proj with prior reweighting, enabling the discovery of control sequences that significantly deviate from the training distribution. We test our method on three tasks: 1D Burgers' equation, 2D jellyfish movement control, and 2D high-dimensional smoke control, where our generated jellyfish dataset is released as a benchmark for complex physical system control research. Our method outperforms widely applied classical approaches and state-of-the-art deep learning and reinforcement learning methods. Notably, \proj unveils an intriguing fast-close-slow-open pattern observed in the jellyfish, aligning with established findings in the field of fluid dynamics. The project website, jellyfish dataset, and code can be found at \url{https://github.com/AI4Science-WestlakeU/diffphycon}.
\end{abstract}

\section{Introduction}
Modeling the dynamics of complex physical systems is an important class of problems in science and engineering.
Usually, we are not only interested in predicting a physical system's behavior but also injecting time-variant signals to steer its evolution and optimize specific objectives. This gives rise to the complex physical control problem, a fundamental task with wide applications, such as controlled nuclear fusion \cite{degrave2022magnetic}, fluid control \cite{holl2020learning}, underwater devices \cite{zhang2022pde} and aviation \cite{paranjape2013pde}, among others.

Despite its importance, controlling complex physical systems efficiently presents significant challenges. 
It inherits the fundamental challenge of simulating complex physical systems, which are typically high-dimensional and highly nonlinear, as specified in Appendix \ref{app:related_simulation}. Furthermore, observed control signals and corresponding system trajectories for optimizing a control model are typically far from the optimal solutions of the specific control objective. This fact poses a significant challenge of dilemma: \emph{How to explore long-term control sequences beyond its training distribution to seek near-optimal solutions while making the resulting system trajectory faithful to the dynamics of the physical system}?

To tackle physical systems control problems, various techniques have been proposed, yet they fall short of addressing the above challenges. 
Regarding traditional control methods, the Proportional-Integral-Derivative (PID) control \cite{1580152}, is efficient 
but only suitable for a limited range of problems.
Conversely, Model Predictive Control (MPC) \cite{schwenzer2021review}, despite a wider range of applicability, suffers from high computational costs and challenges in global optimization. 
Recent advances in supervised learning (SL) \cite{holl2020learning,hwang2022solving} and reinforcement learning (RL) \cite{farahmand2017deep, pan2018reinforcement, rabault2019artificial}, trained on system trajectories and control signals data, have demonstrated impressive performance in solving physical systems control problems. 
However, existing SL and model-based RL methods either fall into myopic failure modes \cite{pmlr-v162-janner22a} that fail to achieve long-term near-optimal solutions, or produce adversarial state trajectories \cite{zhao2022learning} that violate the physical system's dynamics.
The main reason may be that they treat the continuous evolution of dynamics from an iterative view, both lacking long-term vision and struggling in global optimization. See Appendix \ref{app:related_control} for more related work on physical system control.

In this work, we introduce \underline{Diff}usion \underline{Phy}sical systems \underline{Con}trol (\proj), a \emph{new class} of method to address the physical systems control problem. 
We take an energy optimization perspective over system trajectory and control sequences across the whole horizon to implicitly capture the constraints inherent in system dynamics. We accomplish this through diffusion models, which are trained using system trajectory data and control sequences. 
In the inference stage, \proj integrates simulation and control optimization into a unified energy optimization process. 
This prevents the generated system dynamics from falling out of distribution, and offers an enhanced perspective over long-term dynamics, facilitating the discovery of control sequences that optimize the objectives.

An essential aspect of physical systems control lies in its capacity to generate near-optimal controls, even when they may deviate significantly from the training distribution. We address this challenge with the key insight that the learned generative energy landscape can be \emph{decomposed} into two components: a prior distribution representing the control sequence and a conditional distribution characterizing the system trajectories given the control sequence. Based on this insight, we develop a \emph{prior reweighting} technique to subtract the effect of the prior distribution of control sequences, with adjustable strength, from the overall joint generative energy landscape during inference. 

In summary, we contribute the following: 
\textbf{(1)} We develop \proj, a novel generative method to control complex physical systems. By optimizing trajectory and control sequences jointly in the entire horizon by diffusion models, \proj facilitates global optimization of long-term dynamics and helps to reduce myopic failure modes.
\textbf{(2)} We introduce the prior reweighting technique to plan control sequences that are superior to those in the training set. 
\textbf{(3)} We demonstrate the effectiveness of our method on 1D Burgers’ equation, 2D jellyfish movement control, and 2D high-dimensional smoke control tasks. On all three tasks, our method outperforms widely applied classical control methods and is also competitive with recent supervised learning and strong reinforcement learning baselines, particularly demonstrating advantages of \proj in scenarios with partial observations and partial/indirect control. Notably, \proj reveals the intriguing fast-close-slow-open pattern exhibited by the jellyfish, aligning with findings in the field of fluid dynamics \cite{kang2023propulsive}.
\textbf{(4)} We generate a dataset of 2D jellyfish based on a CFD (computational fluid dynamic) software to mimic the movement of a jellyfish under control signals of its flapping behavior. To advance research in controlling complex physical systems, we release our dataset as a benchmark. 
\section{Background}
\begin{figure*}[t]
\vspace{-0pt}
\begin{center}
    \includegraphics[scale=0.455]{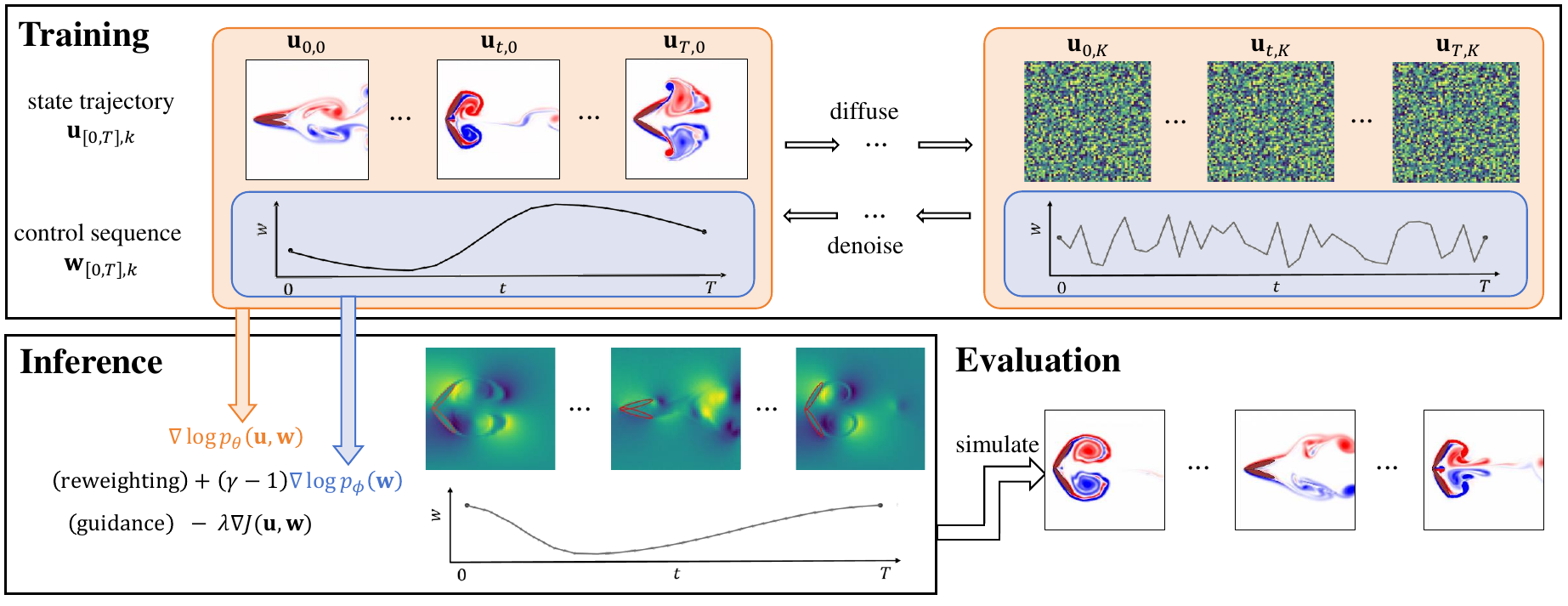}
\end{center}
\vspace{-10pt}
\caption{\textbf{Overview of \proj}. The figure depicts the training (top), inference (bottom left), and evaluation (bottom right) of \proj. Orange and blue colors respectively represent models learning the joint distribution $p_{\theta}(\u,\w)$ and the prior distribution $p_{\phi}(\w)$. Through prior reweighting and guidance, \proj is capable of generating superior control sequences.}
\vspace{-6pt}
\label{fig:overview}
\end{figure*}

\subsection{Problem Setup}
\label{sec:problem_setup}
We consider the following complex physical system control problem:
\begin{gather}
    \w^*=\argmin_\w\mathcal{J}(\u,\w)\quad\text{s.t.}\quad\mathcal{C}(\u,\w)=0.
    \label{eq:origion_optimization}
\end{gather}
Here $\u(t,\x):[0,\mathcal{T}]\times \Omega\mapsto \mathbb{R}^{d_\u}$ is the system trajectory $\{\u(t,\cdot),t\in[0,\mathcal{T}]\}$ defined on time range $[0,\mathcal{T}]\subset\mathbb{R}$ and spatial domain $\Omega\subset \mathbb{R}^{D}$, and $\w(t,\x):[0,\mathcal{T}]\times \Omega\mapsto \mathbb{R}^{d_\w}$ is the external control signal for the physical system with dimension ${d_\w}$.
$\mathcal{J}(\u,\w)$ denotes the control objective. For example, $\J$ can be designed to measure the control performance towards a target state  $\u^*$ with cost constraints: $\J\coloneqq\int \|\u-\u^*\|^2\mathrm{d}\x\mathrm{d}t+ \int\|\w\|^2\mathrm{d}\x\mathrm{d}t$.
$\mathcal{C}(\u,\w)=0$ denotes the physical constraints.
$\mathcal{C}(\u,\w)=0$ can be specified either \emph{explicitly} by a PDE dynamic which describes how the control signal $\w(t,\x)$ drives the trajectory $\u(t,\x)$ to evolve under boundary and initial conditions, or \emph{implicitly} by control sequences and trajectory data collected from observation of the physical system. In the latter case, we may even have access to only partial trajectory data or engage in partial control. These situations collectively pose significant challenges to the physical system control task.

\subsection{Preliminary: Diffusion Models}
\label{sec:ddpm}
Diffusion models \cite{ho2020denoising}  are a class of generative models that learn data distribution from data.
Diffusion models consist of two opposite processes: the forward process $q(\x_{k+1}|\x_k)=\mathcal{N}(\x_{k+1};\sqrt{\alpha_k}\x_k,(1-\alpha_k)\mathbf{I})$
to corrupt a clean data $\x_0$ to a Gaussian noise $\x_K\sim \mathcal{N}(\mathbf{0}, \mathbf{I})$, and the reverse parametrized process $p_{\theta}(\x_{k-1}|\x_k)=\mathcal{N}(\x_{k-1};\mu_\theta(\x_k,k),\sigma_k \mathbf{I})$ to denoise from standard Gaussian $\x_K\sim\mathcal{N}(\mathbf{0}, \mathbf{I})$, where $\{\alpha_k\}_{k=1}^K$ is the variance schedule. 
To train diffusion models, \cite{ho2020denoising}
propose the DDPM method to minimize the following training loss for the denoising network $\bepsilon_{\theta}$, a simplification of the evidence lower bound (ELBO) for the log-likelihood of the data: 
\begin{align}
\mathcal{L}=\mathbb{E}_{k\sim U(1,K),\x_0\sim p(x),\bepsilon\sim \mathcal{N}(\mathbf{0},\mathbf{I})}[\|\bepsilon - \bepsilon_{\theta}(\sqrt{\bar{\alpha}_k}\x_0 +  \sqrt{1-\bar{\alpha}_k} \bepsilon,k)\|_2^2],
\end{align}
where $\Bar{\alpha}_k:=\prod_{i=1}^k\alpha_i$. $\bepsilon_{\theta}$ estimates the noise to be removed to recover data $\x_0$. During inference, iterative application of $\bepsilon_{\theta}$ from a Gaussian noise could generate a new sample $\x_0$ that approximately follows the data distribution $p(\x)$. See Appendix \ref{app:related_diffusion} for related work on diffusion models.

\textbf{Notation}. 
We use $\v_{[n,m]}=[\v_n,\cdots,\v_m]$ to denote a sequence of variables. We use $\z_{[0,T-1],k}$ to denote the hidden variable of $\z_{[0,T-1]}$ in a diffusion step $k$.
For simplicity, we abbreviate $\wgen$, $\ugen$ as $\w$, $\u$. Concatenation of two variables is denoted via \emph{e.g.}, $[\u,\w]$.

\section{Method}
In this section, we detail our method \proj.  In Section \ref{sec:1ddpm}, we introduce our method including its training and inference. In Section \ref{sec:2ddpm}, we further propose a prior reweighting technique to improve \proj.
The overview of \proj is illustrated in Figure \ref{fig:overview}.

\subsection{Generative Control by Diffusion Models}

\label{sec:1ddpm}
\proj takes an energy optimization perspective to solve the problem Eq.~\eqref{eq:origion_optimization}, where PDE constraints can be modeled as a parameterized energy-based model (EBM) $E_{\theta}(\u,\w,\mathbf{c})$ which characterizes the distribution $p(\u,\w|\mathbf{c})$ of $\u$ and $\w$ conditioned on conditions $\mathbf{c}$ by the correspondence  $p(\u,\w|\mathbf{c})\propto \exp({-E_{\theta}(\u,\w,\mathbf{c})})$. Lower $E_\theta(\u,\w,\mathbf{c})$, or equivalently higher $p(\u,\w|\mathbf{c})$, means better satisfaction of the PDE constraints. Then the problem Eq.~\eqref{eq:origion_optimization} can be converted to:
\vspace{-2pt}
\begin{gather}
\label{eq:joint_optimization}
\u^*, \w^* = \argmin_{\u, \w}\left[E_\theta(\u,\w,\mathbf{c}) + \lambda\cdot \mathcal{J}(\u,\w)\right],
\end{gather}
where $\lambda$ is a hyperparameter. This formulation optimizes $\u$ and $\w$ of all physical time steps simultaneously. The first term encourages the generated $\w$ and $\u$ to satisfy the PDE constraints $\mathcal{C}(\u,\w)=0$. The second term guides optimization towards optimal objectives.
The advantage of this optimization framework is that it tasks a global optimization on $\u$ and $\w$ of all times steps, which may obtain better solutions that are faithful to the dynamics of the physical system. 
Details about the effect of the hyperparameter $\lambda$ are provided in Appendix \ref{app:hyperparams}.

\textbf{Training.} To train $E_{\theta}$, we exploit the diffusion model to estimate the gradient of $E_{\theta}$. For convenience, we introduce a new variable $\z$ to represent the concatenation of $\u$ and $\w$ as $\z=[\u,\w]$.
We use a denoising network $\bepsilon_\theta$ \cite{ho2020denoising}, which approximates $\nabla E_\theta(\z,\mathbf{c})$ \cite{du2023reduce}, to learn the noise that should be denoised in each diffusion step $k=1,\cdots,K$.
The training loss of  $\bepsilon_\theta$ is:
\begin{equation}
\label{eq:training_obj}
\mathcal{L}=\mathbb{E}_{k\sim U(1,K),(\z,\mathbf{c})\sim p(\z,\mathbf{c}),\bepsilon\sim \mathcal{N}(\mathbf{0},\mathbf{I})}[\|\mathbf{\bepsilon} - \bepsilon_\theta(\sqrt{\bar{\alpha}_k}\z +  \sqrt{1-\bar{\alpha}_k} \bepsilon,\mathbf{c},k)\|_2^2],\ \ \ \end{equation}
where $\bepsilon_\theta$ is conditioned on $\mathbf{c}$.
Regarding training datasets, they could be generated by designing the conditions $\mathbf{c}$ and control sequences followed by a simulation when an explicit PDE form is available. Otherwise, they should be collected from observed pairs of control sequences and trajectories.

\textbf{Control optimization.} After the denoising network is trained, the Eq. \eqref{eq:joint_optimization} can be optimized by the Langevin sampling procedure as follows. 
We start from an initial sample $\z_K\sim\mathcal{N}(\mathbf{0},\mathbf{I})$, and iteratively run the following process 
\begin{equation}
\begin{gathered}
\label{eq:denoise_iteration}
 \z_{k-1} = \z_k - \eta\left(\nabla_{\z} (E_{\theta}(\z_k,\mathbf{c}) +\lambda\mathcal{J}(\hat{\z}_k)\right) + \mathbf{\xi},  
 \quad \mathbf{\xi} \sim \mathcal{N} \bigl(\mathbf{0}, \sigma^2_k \mathbf{I} \bigl),
\end{gathered}
\end{equation}
 where $\sigma^2_k$ and $\eta$ correspond to noise schedules and scaling factors used in the diffusion process, respectively. Here $\hat{\z}_k$ is the approximate noise-free $\z_0$ estimated from $\z_k$ by:
\begin{equation}
\label{eq:estimate_x0}
\hat{\z}_k=(\z_k-\sqrt{1-\Bar{\alpha}_k}\bepsilon_\theta(\z_k, \mathbf{c}, k))/\sqrt{\Bar{\alpha}_k}.
\end{equation}
We calculate $\mathcal{J}$ in \eqref{eq:denoise_iteration} based on $\hat{\z}_k$ instead of directly using $\z_k$ because otherwise noise in $\z_k$ could bring errors to $\mathcal{J}$.
Then $\nabla E_{\theta}$ can be replaced by our trained denoising network $\bepsilon_\theta$ as follows:
\vspace{-4pt}
\begin{equation}
\label{eq:1ddpm_inference}
 \z_{k-1} = \z_k - \eta\left(\bepsilon_\theta(\z_k,\mathbf{c}, k) +\lambda\nabla_{\z}\mathcal{J}(\hat{\z}_k)\right) + \xi, \quad \mathbf{\xi} \sim \mathcal{N} \bigl(\mathbf{0}, \sigma^2_k \mathbf{I} \bigl)
\end{equation}
Iteration of this denoising process for $k=K,K-1,..., 1$ yields a final solution $\z_0=\{\u_{[1,T],0},\w_{[0,T-1],0}\}$ for the optimization problem Eq. \eqref{eq:joint_optimization}.

\textbf{Guidance conditioning.}
\label{sec:conditional_guidance}
In addition to the above introduced explicit guidance, conditioning is also widely used to guide sampling in diffusion models \cite{ho2022classifier,shuphysics2023}.
When the control objective can be naturally expressed in a conditioning form, e.g., 
the generated trajectory $\u$ is required to coincide with a desired target $\u^*$,
we can include $\u=\u^*$ as a condition in $\c$ such that the sampled trajectory $\u$ from diffusion models automatically satisfying $\u=\u^*$.

Overall, in our proposed \proj framework, the control objective $\J$ can be optimized either using the explicit guidance $\nabla \J$ or guidance conditioning depending on the specific control objectives.

\subsection{Prior Reweighting}
\label{sec:2ddpm}

\textbf{Motivation.}
In physical systems control, a critical challenge lies in obtaining control sequences superior to those in training datasets, which often deviate significantly from achieving the optimal control objective. Although guidance of the control objective is incorporated in our diffusion model, generated control sequences are still highly influenced by the prior distribution of control sequences in training datasets. This inspires us to explore strategies to mitigate the effect of this prior, aiming to generate near-optimal control sequences.

We address this challenge with the key insight that the energy-based model $E(\u,\w,\mathbf{c})$ can be decomposed into two components: one is $E^{(p)}(\w,\mathbf{c})$ derived from the prior distribution $p(\w|\mathbf{c})$ of control sequences, and the other $E^{(c)}(\u,\w,\mathbf{c})$ representing the conditional probability distribution $p(\u|\w,\mathbf{c})$ of trajectories with respect to given control sequences. This decomposition has the following form 
\vspace{-3pt}
\begin{equation}
\label{eq:engery_decomp}
E(\u,\w,\mathbf{c})
=E^{(p)}(\u,\mathbf{c})+E^{(c)}(\u,\w,\mathbf{c}),
\end{equation}
by the corresponding decomposition of distribution $p(\u,\w|\mathbf{c})=p(\w|\mathbf{c})p(\u|\w,\mathbf{c})$. We propose a \textit{prior reweighting} technique, which introduces an adjustable hyperparameter $\gamma>0$ as an exponential of $p(\w|\mathbf{c})$, allowing for the tuning of the influence of this prior distribution. Then we have a reweighted version of $p(\u,\w|\mathbf{c})$ as 
$p_\gamma(\u,\w|\mathbf{c})=p(\w|\mathbf{c})^{\gamma}p(\u|\w,\mathbf{c})/Z$,
which is also a probability distribution and can be further transformed to 
\begin{equation}
\label{eq:p_gamma}
p_\gamma(\u,\w|\mathbf{c})=p(\w|\mathbf{c})^{\gamma-1}p(\u,\w|\mathbf{c})/Z,
\end{equation}
where $Z$ is a normalization constant. In particular, when $0<\gamma<1$, this approach is advantageous as it flattens the distribution $p(\u,\w|\mathbf{c})$, thereby increasing the likelihood of sampling from low probability region of $p(\u,\w|\mathbf{c})$, where the optimal solutions of the problem Eq. (\ref{eq:joint_optimization}) probably lie. 

Integrating Eq. (\ref{eq:p_gamma}) into Eq. (\ref{eq:engery_decomp}), we have
\vspace{-3pt}
\begin{equation}
\label{eq:energy_gamma}
E^{(\gamma)}(\u,\w,\mathbf{c})=(\gamma-1)E^{(p)}(\w,\mathbf{c}) + E_{\theta}(\u,\w,\mathbf{c})-\log{Z},
\end{equation}
where $E^{(\gamma)}(\u,\w,\mathbf{c})=-\log{(p_\gamma(\u,\w|\mathbf{c}))} +{const}$
is the reweighted energy-based model associated with $E_{\theta}(\u,\w,\mathbf{c})$ in Eq. \eqref{eq:joint_optimization}, relying on the hyperparameter $\gamma$. Then the optimization problem Eq. \eqref{eq:joint_optimization} can be transformed to 
\begin{gather}
\label{eq:reweight_joint_optimization}
\u^*, \w^* = \argmin_{\u, \w}\left[E^{(\gamma)}(\u,\w,\mathbf{c}) + \lambda\cdot \mathcal{J}(\u,\w)\right].
\end{gather}

\begin{wrapfigure}{r}{0.5\textwidth}
\vspace{-25pt}
\begin{center}
    \includegraphics[scale=0.22]{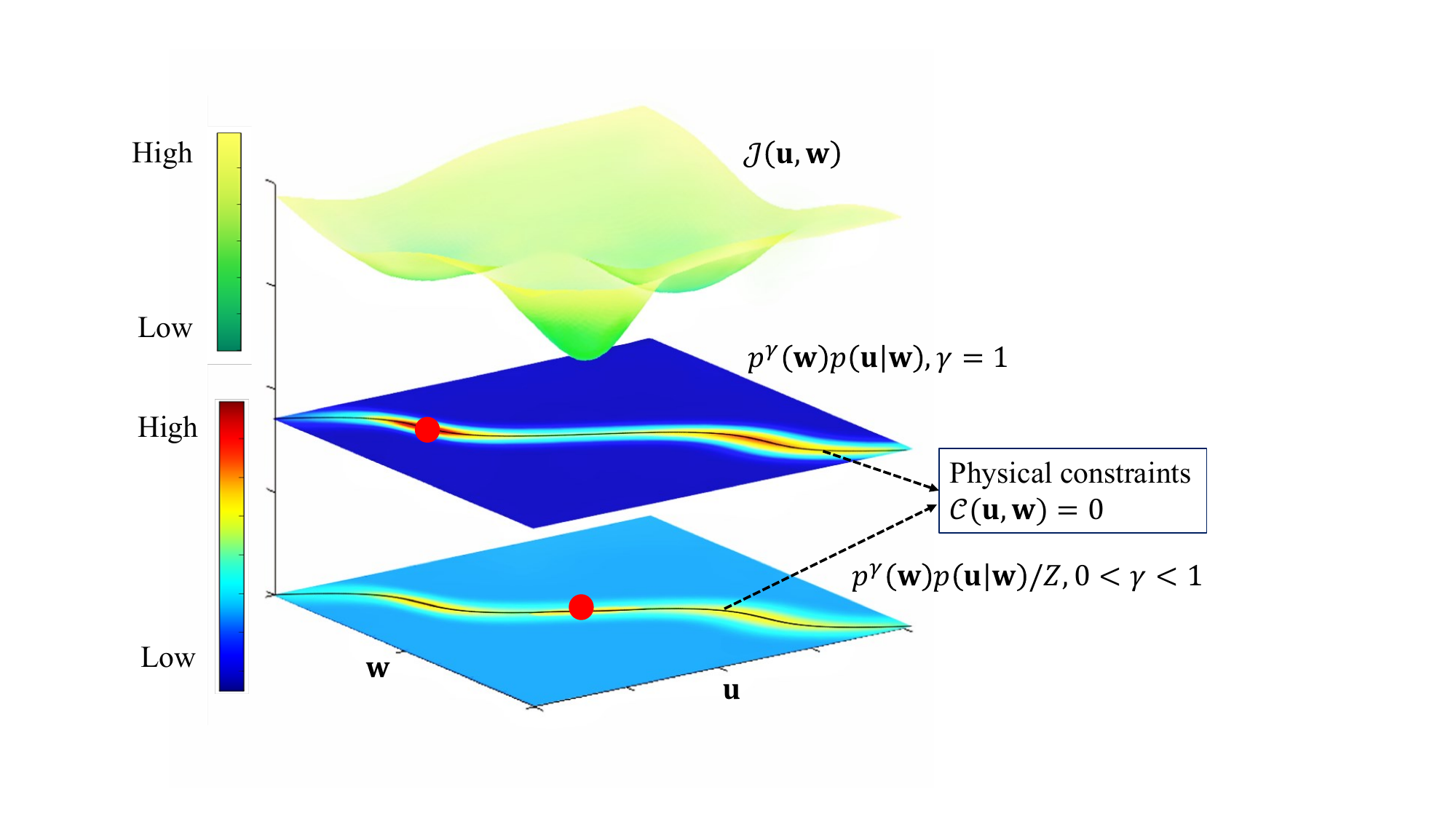}
\end{center}
\vspace{-8pt}
\caption{\textbf{Intuition of Prior Reweighting}. The top surface illustrates the landscape of $\mathcal{J}(\u,\w)$, where the high-dimensional variables $\u$ and $\w$ are represented using one dimension. The middle and lower planes depict probability heatmaps for the reweighted distribution $p^\gamma(\w)p(\u|\w)/Z$.
Adjusting $\gamma$ from $\gamma=1$ (middle plane) to $0<\gamma<1$ (lower plane),  a better minimal of $\mathcal{J}$ (red dot in the lower plane) gains the chance to be sampled. This contrasts with the suboptimal red point in the middle plane highly influenced by the prior $p(\w)$.}
\vspace{-20pt}
\label{fig:figure2}
\end{wrapfigure}

Optimization of this problem encourages sampling from the low likelihood region of $p(\w|\mathbf{c})$ while minimizing the control objective, which possesses the capability to generate control sequences that are more likely to be near-optimal than its degenerate version $\gamma=1$ in the original optimization problem Eq. \eqref{eq:joint_optimization}. The intuition of prior reweighting is illustrated in Figure \ref{fig:figure2}.

\textbf{Training}.
To learn the reweighted energy $E^{(\gamma)}(\u,\w,\mathbf{c})$, we parameterize its gradient as a summation of two parts by taking the gradient of both sides of Eq. \eqref{eq:energy_gamma}: 
\begin{align*}
\nabla E_{\phi,\theta}^{(\gamma)}(\u,\w,\mathbf{c})&=(\gamma-1)\nabla E^{(p)}_{\phi}(\w,\mathbf{c}) \\
&+ \nabla E_{\theta}(\u,\w,\mathbf{c}),
\end{align*}
where $E^{(p)}_{\phi}(\w,\mathbf{c})$ parameterizes the energy based model $E^{(p)}(\w,\mathbf{c})$ corresponding to $p(\w|\mathbf{c})$. Note that $\nabla\log{Z}$ vanishes here because it is a constant.
Notice that $\nabla E_{\theta}(\u,\w,\mathbf{c})$ has already been trained by Eq. \eqref{eq:training_obj}. 
$\nabla E^{(p)}_{\phi}(\w,\mathbf{c})$ can be trained similarly by the following loss function
\begin{equation}
\label{eq:training_p_w}
\mathcal{L}=\mathbb{E}_{k\sim U(1,K),(\w,\mathbf{c})\sim p(\w,\mathbf{c}),\bepsilon\sim \mathcal{N}(\mathbf{0},\mathbf{I})}[\|\mathbf{\bepsilon} - \bepsilon_\phi(\sqrt{\bar{\alpha}_t}\w +  \sqrt{1-\bar{\alpha}_k} \bepsilon,\mathbf{c},k)\|_2^2],\ \ \ \end{equation}
where $\bepsilon_\phi$ is the conditional denoising network that approximates $\nabla E^{(p)}_{\phi}(\w,\mathbf{c})$.

\textbf{Control optimization.} With both $\bepsilon_\theta$ and $\bepsilon_\phi$ trained, Eq. \eqref{eq:reweight_joint_optimization} can be optimized by running: 

\vspace{-15pt}
\begin{align}
\label{}
 \z_{k-1} &= \z_k - \eta(\bepsilon_\theta(\z_k,\mathbf{c}, k) +
 \lambda\nabla_{\z}\mathcal{J}(\hat{\z}_k)) + \xi_1, \quad \xi_1 \sim \mathcal{N} \bigl(0, \sigma^2_k \mathbf{I} \bigl) \\
\w_{k-1} &= \w_{k-1} - \eta(\gamma-1)\bepsilon_\phi(\w_k,\mathbf{c}, k) + \xi_2, \quad \xi_2 \sim \mathcal{N} \bigl(0, \sigma^2_k \mathbf{I} \bigl),
\end{align}
iteratively, where $\z_k=[\u_{k},\w_k]$.
The difference between this iteration scheme and Eq. \eqref{eq:1ddpm_inference} is that it uses an additional step to update $\w_k$ based on the predicted noise of $\bepsilon_\phi$. This guides $\z_k=[\u_{k},\w_k]$ to move towards the reweighted distribution $p_\gamma(\u,\w|\mathbf{c})$ while aligning with the direction to decrease the objective by its guidance in the iteration of $\z_k$. The complete algorithm is present in Algorithm \ref{alg:inference}. Detailed discussion and results about how to set the hyperparameter $\gamma$ are provided in Appendix \ref{app:hyperparams}. 

\textbf{Theoretical Analysis.} 
Consider a pair $[\u,\w]$ sampled using the prior reweighting technique with hyperparameter $\gamma$: $[\u,\w]\sim p_{\gamma}(\u,\w) = p(\u|\w) p^{\gamma}(\w)/C_{\gamma}$. 
Denote $\J^*$ as the global minimum of the control objective $\J$. Define $Q(\varepsilon)$ to be the "$\epsilon$-optimal" solution set of $[\u,\w]$ such that $\J(\u, \w) - \J^* \leq (\epsilon) $, whose complement set is $Q(\varepsilon)^c$, and denote $\mathbb{I}_{Q(\varepsilon)}(\u,\w)$ as its indicator function, i.e. $\mathbb{I}_{Q(\varepsilon)}(\u,\w)=1$ if $[\u,\w]\in Q(\varepsilon)$; otherwise 0.
Define $Y$ to be the random variable of ``whether use $\J$ as a guidance for sampling'', namely,
\begin{equation}
p(Y|\u,\w)=\begin{cases}
e^{-\J(\u,\w)} / Z, &Y=1 \\\\
1 - e^{-\J(\u,\w)} / Z, & Y=0.
\end{cases}
\end{equation}

Consider $E(\gamma)=\mathbb{E}_{(\u,\w)\sim p_{\gamma}(\u,\w)}[\mathbb{I}_{Q(\varepsilon)}(\u,\w)|Y=1]$, which indicates the expectation of getting an $\epsilon$-optimal solution by using the prior reweighting technique with $\gamma$ under the guidance of $\J$. Define
\begin{equation*}
F(\gamma) = \frac{\mathbb{E}_{\u,\w}[\mathbb{I}_{Q(\varepsilon)}(\u,\w) p(Y=1|\u,\w) \ln(p(\w))]}{\mathbb{E}_{\u,\w}[\mathbb{I}_{Q(\varepsilon)}(\u,\w) p(Y=1|\u,\w)]} - \frac{\mathbb{E}_{\u,\w}[\mathbb{I}_{Q(\varepsilon)^c}(\u,\w) p(Y=1|\u,\w) \ln(p(\w))]}{\mathbb{E}_{\u,\w}[\mathbb{I}_{Q(\varepsilon)^c}(\u,\w) p(Y=1|\u,\w)]},
\end{equation*}
then we have the following theorem (please refer to Appendix \ref{app:theory} for its proof):
\begin{theorem}
\label{prop:1}
Assume $E(\gamma)$ is a smooth function, then the following hold:
\begin{itemize}
    \item If $F(1) < 0$, there exists a $\gamma_{-} < 1$, s.t., $E(\gamma_{-})> E(1)$; 
    \item If $F(1) > 0$, there exists a $\gamma_{+} > 1$, s.t., $E(\gamma_{+})> E(1)$.
\end{itemize}
\end{theorem}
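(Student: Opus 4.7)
The plan is to rewrite $E(\gamma)$ as a ratio of two expectations under the \emph{unreweighted} joint distribution $p(\u,\w)$, so that the normalization constant $C_{\gamma}$ disappears and the $\gamma$-dependence becomes explicit and differentiable. First I would apply Bayes' rule to the conditional expectation that defines $E(\gamma)$:
\begin{equation*}
E(\gamma)=\frac{\mathbb{E}_{p_{\gamma}}[\mathbb{I}_{Q(\varepsilon)}(\u,\w)\,p(Y=1|\u,\w)]}{\mathbb{E}_{p_{\gamma}}[p(Y=1|\u,\w)]}.
\end{equation*}
Substituting $p_{\gamma}(\u,\w)=p(\u|\w)p^{\gamma}(\w)/C_{\gamma}=p(\u,\w)p^{\gamma-1}(\w)/C_{\gamma}$ in both numerator and denominator cancels the constant $C_{\gamma}$ and yields
\begin{equation*}
E(\gamma)=\frac{A(\gamma)}{A(\gamma)+C(\gamma)},\quad A(\gamma):=\mathbb{E}_{p}[\mathbb{I}_{Q(\varepsilon)}\,p(Y=1|\u,\w)\,p^{\gamma-1}(\w)],
\end{equation*}
with $C(\gamma):=\mathbb{E}_{p}[\mathbb{I}_{Q(\varepsilon)^{c}}\,p(Y=1|\u,\w)\,p^{\gamma-1}(\w)]$. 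This reformulation is the main conceptual step, since now $\gamma$ enters only through the scalar weight $p^{\gamma-1}(\w)$.

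Next I would differentiate at $\gamma=1$. Since $\partial_{\gamma}p^{\gamma-1}(\w)=p^{\gamma-1}(\w)\ln p(\w)$, the quotient rule gives
\begin{equation*}
E'(1)=\frac{A'(1)\,C(1)-A(1)\,C'(1)}{(A(1)+C(1))^{2}}=\frac{A(1)\,C(1)}{(A(1)+C(1))^{2}}\cdot F(1),
\end{equation*}
where the second equality uses $A(1),C(1)>0$ and the definition of $F$ given in the statement (the two terms in $F(1)$ are exactly $A'(1)/A(1)$ and $C'(1)/C(1)$). Hence $\operatorname{sign}E'(1)=\operatorname{sign}F(1)$.

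Finally, I would invoke the assumed smoothness of $E$: if $F(1)<0$ then $E'(1)<0$, so by a first-order Taylor expansion at $\gamma=1$ there is a neighborhood on which $E(\gamma)>E(1)$ for $\gamma<1$, producing the required $\gamma_{-}$; symmetrically, $F(1)>0$ forces $E'(1)>0$ and yields $\gamma_{+}>1$ with $E(\gamma_{+})>E(1)$. The main obstacle I anticipate is not the algebra but the analytic justification of differentiation under the integral sign that lets $A'$ and $C'$ be computed by passing $\partial_{\gamma}$ inside the expectation; handling this cleanly requires either an integrability/dominating-function assumption on $p(Y=1|\u,\w)|\ln p(\w)|$ in a neighborhood of $\gamma=1$, or simply absorbing it into the blanket smoothness hypothesis on $E(\gamma)$. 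Once that is granted the rest is a short calculation.
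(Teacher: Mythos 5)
Your proposal is correct and follows essentially the same route as the paper's proof: both rewrite $E(\gamma)$ via Bayes' rule so that the $\gamma$-dependence appears only through the weight on $p(\w)$ (the paper uses $G(\gamma)=A(\gamma)/C(\gamma)$ and the monotone relation $E=1/(1+1/G)$, while you differentiate $E=A/(A+C)$ directly), then show $\operatorname{sign}E'(1)=\operatorname{sign}F(1)$ and conclude by a local first-order argument. Your closing remark about justifying differentiation under the integral sign is a reasonable point the paper also leaves implicit under the smoothness assumption.
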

\textbf{Remark}: Here $F(\gamma)$ can be interpreted as some kind of difference between "entropies" in $Q(\varepsilon)^c$ and $Q(\varepsilon)$. When $F(1) < 0$, it means that $Q(\varepsilon)^c$ has higher "entropies",  implying that the training trajectories are far from optimal. As a result, we may need to flatten the distribution of training trajectories, which corresponds to using the prior reweighting technique with $\gamma<1$. Since this is the most common case in real scenarios, we usually set  $\gamma<1$.

\textbf{\proj-lite.}
The introduction of the prior reweighting technique in \proj involves training and evaluation of two models, thus bringing in additional computational cost. 
It is gratifying to note that we can balance the control performance and computational overhead of \proj by adjusting the parameter $\gamma$. When $\gamma=1$, the model $\bepsilon_{\phi}$ is not needed, and we denote this simplified version of \proj as \proj-lite.

\begin{algorithm}[t]
    \vspace{-2pt}
    \small
    \caption{Inference for \proj}
    \label{alg:inference}
    \begin{algorithmic}[1]
    \STATE \textbf{Require} Diffusion models $\bepsilon_\theta(\z_k,\mathbf{c},k)$ and $\bepsilon_\phi(\w_k,\mathbf{c},k)$, control objective $\J(\cdot)$, covariance matrix $\sigma_k^2 I$, control conditions $\c$, schedule $\Bar{\alpha}_k$, hyperparameters $\lambda, \gamma, K$ \\
    \STATE \textbf{Initialize} optimization variables  $\z_K=[\u_K, \w_K] \sim \mathcal{N}(\bm{0}, \mathbf{I})$  \\
    \FOR{$k = K, \ldots, 1$} 
        \STATE $\hat{\z}_k=(\z_k-\sqrt{1-\Bar{\alpha}_k}\bepsilon_\theta(\z_k, \mathbf{c}, k))/{\sqrt{\Bar{\alpha}_k}}$\\
        \STATE \cm{$\z_{k-1} = \z_k - \eta(\bepsilon_\theta(\z_k,\mathbf{c}, k) +
\lambda\nabla_{\z}\mathcal{J}(\hat{\z}_k)) + \xi_1, \xi_1 \sim \mathcal{N} \bigl(0, \sigma^2_k \mathbf{I} \bigl)$}  \small{\color{gray}// transition to next diffusion step}
        \STATE \cm{$\w_{k-1} = \w_{k-1} - \eta(\gamma-1)\bepsilon_\phi(\w_k,\mathbf{c}, k) + \xi_2, \xi_2 \sim \mathcal{N} \bigl(0, \sigma^2_k \mathbf{I} \bigl)$}  \small{\color{gray}// prior reweighting}
\hspace{0.8cm}  \\
    \ENDFOR \\
    \STATE \textbf{return} $\u^*,\w^*= \z_0$
    \end{algorithmic}
\end{algorithm}
\section{Experiments}
In this section, we aim to answer the following questions: (1) Can \proj present superiority over traditional, supervised learning, and reinforcement learning methods for physical systems control? (2) Does the proposed prior reweighting technique help achieve better control objectives? (3) Could answers to (1) and (2) be generalized to more challenging partial observation or partial control scenarios? To answer these questions, we conduct experiments on three vital and challenging problems: 1D Burgers' equation, 2D jellyfish movement control, and 2D smoke control problems.

The following state-of-the-art control methods are selected as baselines. For the 1D Burgers' equation, we use 
(1) the classical and widely used control algorithm Proportional-Integral-Derivative (PID) \cite{1580152} interacting with 
our trained surrogate model of the solver; 
(2) Supervised Learning method (SL) \cite{hwang2022solving}; RL methods including (3) Soft Actor-Critic (SAC) \cite{haarnoja2018soft} with offline and surrogate-solver versions; 
(4) Behaviour Cloning (BC) \cite{pomerleau1988alvinn}; and
(5) Behavior Proximal Policy Optimization (BPPO) \cite{zhuang2023behavior}. 
Specifically, the surrogate-solver version of SAC interacts with our trained surrogate model of the solver, while the offline version only uses given data. BC and BPPO are also in offline versions.
For 2D jellyfish movement control, baselines include SL, SAC (offline), SAC (surrogate-solver), BC, BPPO, and an additional classical multi-input multi-output algorithm Model Predictive Control (MPC) \cite{schwenzer2021review}. PID is inapplicable to this data-driven task \citep{147ea8517f15447798b6c73b263eb1e6}.
Detailed descriptions of baselines are provided in Appendix \ref{app:baseline1d} and Appendix \ref{app:baseline2d}. 

\subsection{1D Burgers' Equation Control}
\textbf{Experiment settings.}
The Burgers' equation is a governing law occurring in various physical systems. 
We consider the 1D Burgers’ equation with the Dirichlet boundary condition and external force $w(t,x)$, which is also studied in \cite{hwang2022solving,mowlavi2023optimal}.
\begin{eqnarray}
\begin{cases}
\label{eq:burgers}
    \frac{\partial u}{\partial t} = -u\cdot \frac{\partial u}{\partial x}+\nu\frac{\partial^2 u}{\partial x^2} +w(t,x)  &\text{in } [0,T] \times \Omega \\
    u(t,x) =0  \quad\quad\quad\quad\quad\quad\quad\quad &\text{on } [0,T] \times \partial\Omega    \\
    u(0,x) =u_0(x) \quad\quad\quad\quad\quad\quad &\text{in } \{t=0\} \times \Omega.
\end{cases}
\end{eqnarray}
Here $\nu$ is the viscosity parameter, and $u_0(\x)$ is the initial condition. Subject to Eq. \eqref{eq:burgers}, given a target state $u_d(x)$, the objective of control is to minimize the control error $\controlobj1d$ between $u_T$ and $u_d$, while constraining the energy cost $\mathcal{J}_\text{energy}$ of the control sequence $w(t,x)$:
\vspace{-5pt}
\begin{equation}
\label{eq:burgers_obj_J_actual}
\controlobj1d\coloneqq\int_{\Omega}|u(T,x)-u_d(x)|^2\mathrm{d}x, \ \ \mathcal{J}_\text{energy}\coloneqq\int_{[0,T]\times\Omega}|w(t,x)|^2\mathrm{d}t\mathrm{d}x.
\end{equation}
\vspace{-15pt}

To make the evaluation challenging, we select three different experiment settings that correspond to different real-life scenarios: partial observation, full control (PO-FC), full observation, partial control (FO-PC), and partial observation, partial control (PO-PC),
which are elaborated on in Appendix 
\ref{app:1d_experiment_four_settings} and illustrated in Appendix \ref{app:1d_vis}. 
These settings are challenging for classical control methods such as PID since they require capturing the long-range dependencies in the system dynamics. Note that the reported metrics in different settings are not directly comparable.
In this experiment, \proj uses the guidance conditioning (Section \ref{sec:conditional_guidance}) to optimize $\controlobj1d$ and explicit guidance to optimize the energy cost, which is elaborated in Appendix \ref{app:1d_model_descriptions}.

\begin{table}[!t]
\centering
\caption{\textbf{Best $\controlobj1d$ achieved in 1D Burgers's equation control.} Bold font denotes the best model, and underline denotes the second best model. 
}
{
\begin{tabular}{l|c|c|c}
    \hline
    \toprule
    & \multicolumn{1}{c|}{PO-FC} & \multicolumn{1}{c|}{FO-PC} & \multicolumn{1}{c}{PO-PC}\\
    \midrule
    PID (surrogate-solver) & - & 0.09115 & 0.09631 \\
    SL & 0.09752 & \underline{0.00078} & 0.02328 \\
    SAC (surrogate-solver) & 0.01577 & 0.03426 & 0.02149 \\
    SAC (offline) & 0.03201 & 0.04333 & 0.03328 \\
    BC & 0.02836 & 0.00856 & 0.00952 \\
    BPPO & 0.02771 & 0.00852 & \underline{0.00891} \\
    \midrule
    \textbf{\proj-lite (ours)} & \underline{0.01139} & \textbf{0.00037} & \textbf{0.00494} \\
    \textbf{\proj (ours)} & \textbf{0.01103} & \textbf{0.00037} & \textbf{0.00494} \\
    \bottomrule
\end{tabular}
\label{tab:1d_main_table}
\vspace{-5pt}
}
\end{table}

\begin{figure*}[!t]
    \centering    
    \includegraphics[width=1\textwidth]{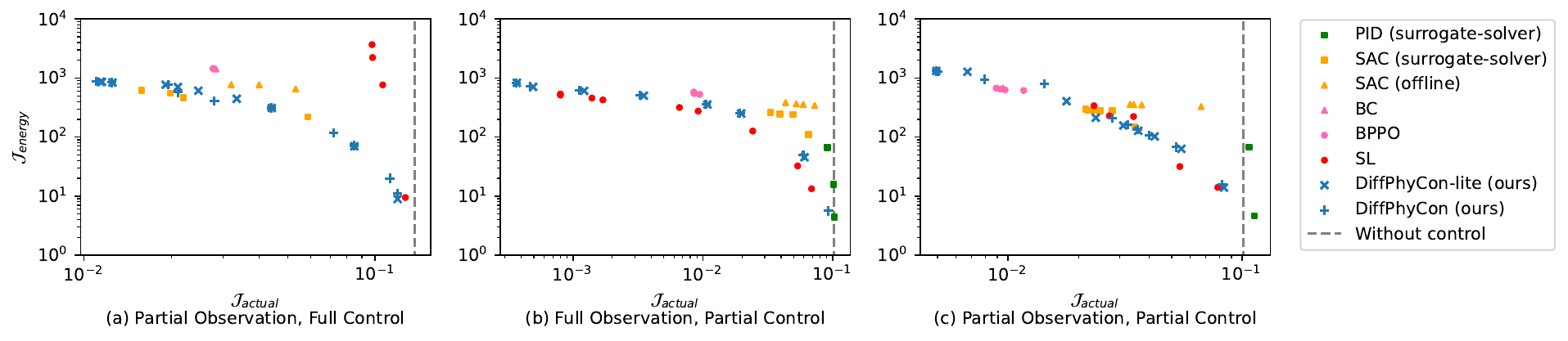}
    \vspace{-10pt}
    \caption{\textbf{Pareto frontier of $\mathcal{J}_\text{energy}$ vs. $\controlobj1d$ of different methods for 1D Burgers' equation.}}  
    \label{fig:pareto1d}  
    \vspace{-2pt}
\end{figure*}

\textbf{Results.} In Table \ref{tab:1d_main_table}, we report results of the control error $\controlobj1d$ of different methods. It can be observed that \proj delivers the best results compared to all baselines.
Specifically, \proj decreases $\controlobj1d$ of the best baseline by 30.1\%, 52.6\%, and 44.6\% in the PO-FC, FO-PC, and PO-PC settings respectively.
From Table \ref{tab:1d_main_table}, \proj and \proj-lite show little performance gap.
This is because, the prior distribution of $w$ that our \proj-lite learned is conditioned on both $u_0$ and $u_T$, which fully determines the optimal $w$.
Therefore, $p(w|u_0,u_T)$ is intrinsically the optimal distribution and thus \proj-lite can already deliver satisfactory performance.

To compare the ability of different methods to optimize $\controlobj1d$ with constrained energy cost $\mathcal{J}_\text{energy}$, we compare the Pareto frontiers of different methods in Figure \ref{fig:pareto1d}. We vary the hyperparameter $\lambda$ to control the tradeoff between $\controlobj1d$ and the energy cost since most baselines have this hyperparameter.
As can be observed in Figure \ref{fig:pareto1d}, the Pareto frontiers of \proj are consistently among the best, achieving the \emph{lowest} $\controlobj1d$ for most settings of the energy budget. 
Although SL performs well in full observation setting (b) where the system dynamics can be more easily predicted, it encounters difficulty in partial observation scenarios (a)(c). The results demonstrate \proj's ability to generate near-optimal control sequences compared to baselines. More visualization results are provided in Appendix \ref{app:1d_vis}. More results of evaluation are presented in Appendix \ref{app:1d_more}. For efficiency evaluation of training and test phases, please refer to Table \ref{tab:1d_efficiency} in Appendix \ref{app:efficiency}.

\subsection{2D Jellyfish Movement Control}\label{subsec:exp_2D}
\textbf{Experiment settings.}
This task is to control the movement of a flapping jellyfish with two wings in a 2D fluid field where fluid flows at a constant speed. The jellyfish is propelled by the fluid when its wings flap. Its moving speed and efficiency are determined by the mode of flapping. 
This task is an important source of inspiration for the design of underwater and aerial devices, and its challenges come from complex vortice behavior and fluid-solid coupling dynamics \cite{ristroph2014stable,kang2023propulsive}.
For this task, fluid dynamics follows the 2D incompressible Navier-Stokes Equation:
\begin{eqnarray}
\begin{cases}
\label{eq:ns_eq}
    \frac{\partial \v}{\partial t} + \v\cdot \nabla \v-\nu\nabla^2 \v +\nabla p=0\\
    \nabla\cdot \v=0 \\
    \v(0,\x) =\v_0(\x),
\end{cases}
\end{eqnarray}
where $\v$ represents the 2D velocity of the fluid, and $p$ represents the pressure, constituting the PDE state $\u=(\v,p)$. The initial velocity condition is $\v_0(\x)$ and the kinematic viscosity is $\nu$. 
We assume that each wing is rigid, so the jellyfish's boundary can be parameterized by the opening angle $\w_{t}$ of wings. Long-term movement of jellyfish usually presents a periodic flapping mode. Consequently, the control objective is to maximize its average moving speed $\bar v$ determined by the pressure of the fluid, under the energy cost constraint $R(\w)$  and the periodic constraint $d(\w_T, \w_0)$ of the movement:
\begin{equation}
\label{eq:jellyfish_obj}
\mathcal{J}=-\bar{v}+\zeta \cdot R(\w)+d(\w_T, \w_0), 
\end{equation}
subject to Eq. \eqref{eq:ns_eq} and the boundary condition that the velocity of fluid vanishes near the boundary.
The hyperparameter $\zeta$ is set to be 1000. 
We evaluate in two settings: full observation, where the full state $\u=(\v,p)$ is observed; and partial observation, where only pressure is observed. 
This task is very challenging due to the complicated dynamics of fluid-solid interactions and vortices behaviour \cite{ristroph2014stable,kang2023propulsive}, 
especially in the scenario of partial observation where the missing of $\v$ in Eq. \eqref{eq:ns_eq} restricts the information available to generate well-informed control signals. Details of the experiment are provided in Appendix \ref{app:2d_experiment}.

\textbf{Open-source Dataset Description.} We use the Lily-Pad simulator \cite{weymouth2015lily} to generate a dataset describing the movement of jellyfish under the control of its flapping behavior. Statistics about the dataset are listed in Table \ref{tab:2d_dataset_stat}. The feature of this dataset is that it contains a rich variation of vortices behavior and fluid-solid interaction dynamics, determined by violent changes in opening angles of wings and open-close phase ratio. It would serve as an important benchmark for studying complex physical system control problems. Details about the dataset are provided in Appendix \ref{app:2d_dataset_details}.

\begin{table*}[t]
\vskip -0.05in
\centering
\caption{\textbf{Jellyfish movement dataset outline.}}
\begin{tabular}{ccccc}
\hline
\toprule
training trajectories & test trajectories & resolution & \#fluid features & trajectory length \\
\midrule
30000                 & 1000              & 128$\times$128      & 3                & 40        \\
\bottomrule 
\end{tabular}
\label{tab:2d_dataset_stat}
\vskip -0.1in
\end{table*}

\begin{table*}[t]
\centering
\caption{\textbf{2D jellyfish movement control results.} Bold font denotes the best model, and underline denotes the second best model.}
\begin{tabular}{l|ccc|ccc}
\hline
\toprule
             & \multicolumn{3}{c|}{Full observation}             & \multicolumn{3}{c}{Partial observation}                          \\
            & $\bar{v}$ $\uparrow$ &  $R(\w)$ $\downarrow$ & $\mathcal{J}$ $\downarrow$ & $\bar{v}$ $\uparrow$ &  $R(\w)$ $\downarrow$ & $\mathcal{J}$ $\downarrow$  \\
            \midrule
MPC                    &      25.72     &    0.0112       &     109.17          &    -150.51       &    0.1791       &     329.59        \\
SL                     &     -76.94     &     0.1286      &     205.57          &    -102.98       &   0.1188        &     221.79        \\ 
SAC (surrogate-solver)    &     -166.96    &     0.0069      &      18.14          &     -153.09      &    0.0057       &     158.82        \\
SAC (offline)          &    -158.66     &   0.0069        &     165.58          &    -206.21       &    0.0058       &     211.96        \\
BC  &   30.48  &  0.0629  &   32.44  & 20.08  & 0.0556 &  35.48           \\ 
BPPO  & \underline{107.67} &  0.0867 &  \underline{-20.93}    &   \underline{54.83}   &  0.0518 &       \underline{-3.02}  \\ 

\midrule
    \textbf{\proj-lite (ours)}            &     95.04    &  0.0746         &     -20.47          &   2.92          &      0.0779     &        74.97      \\
\textbf{\proj (ours)}             &     \textbf{279.87}     &   0.2058        &    \textbf{-74.11}           &    \textbf{150.21}        &     0.1269      &      \textbf{-23.32}       \\ 
\bottomrule 
\end{tabular}
\label{tab:2D_results}
\vskip -0.1in
\end{table*}

\begin{figure*}[t]
\vspace{-0pt}
\begin{center}
    \includegraphics[width=\textwidth]{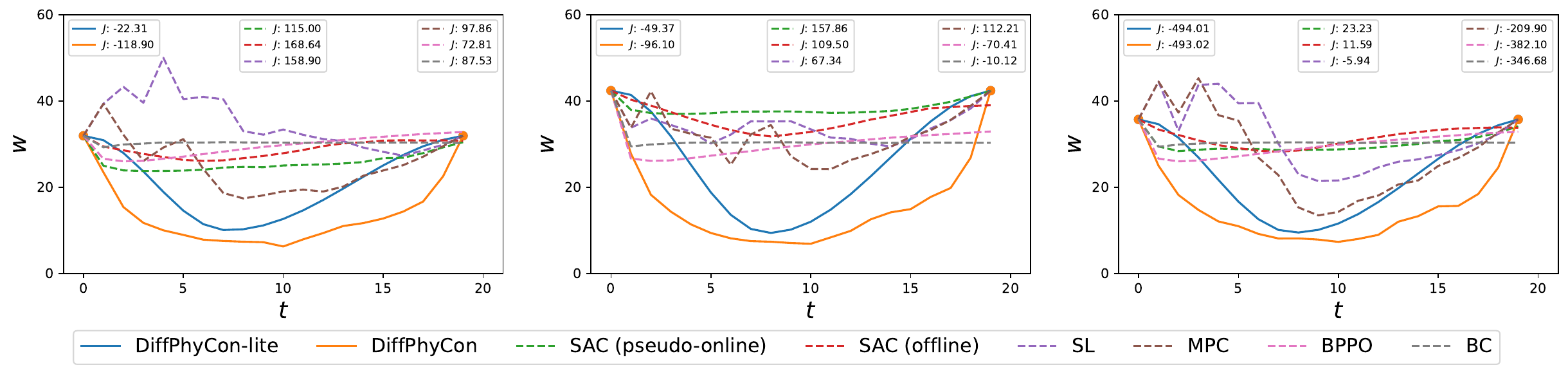}
\end{center}
\vspace{-10pt}
\caption{\textbf{
Comparison of generated control curves of three test jellyfish.} The resulting control objective $\J$ for each curve is presented.}
\label{fig:2d_control_curve}
\vspace{-17pt}
\end{figure*}

\begin{figure*}[ht]
\vspace{0pt}  
\begin{center}
    \includegraphics[width=\textwidth]{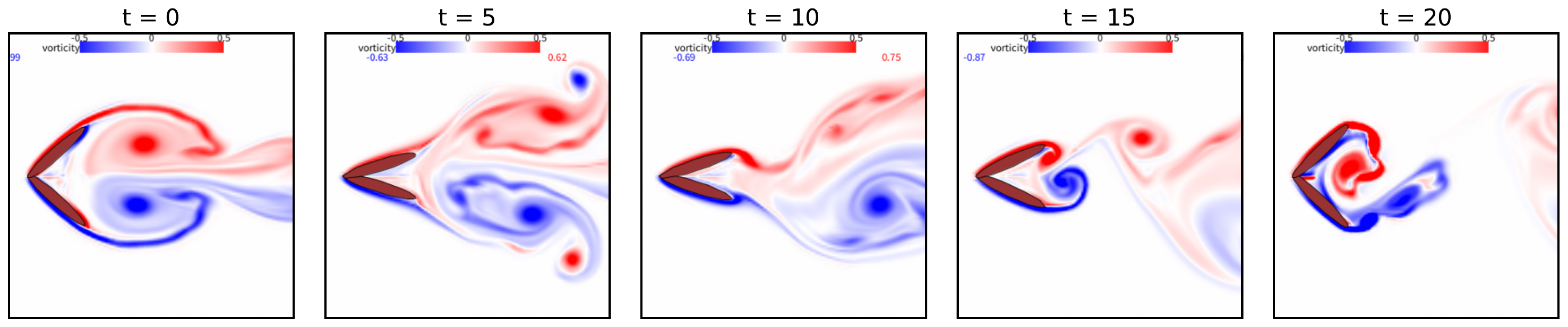}
\end{center}
\vspace{-8pt}
\caption{\textbf{Visualization of jellyfish movement and fluid field controlled by \proj as in the middle subfigure of Figure \ref{fig:2d_control_curve}.}}
\vspace{-10pt}
\label{fig:2d_fluid_field}
\end{figure*}

\begin{figure*}[ht]
\vspace{-3pt} 
\begin{center}
    \includegraphics[width=\textwidth]{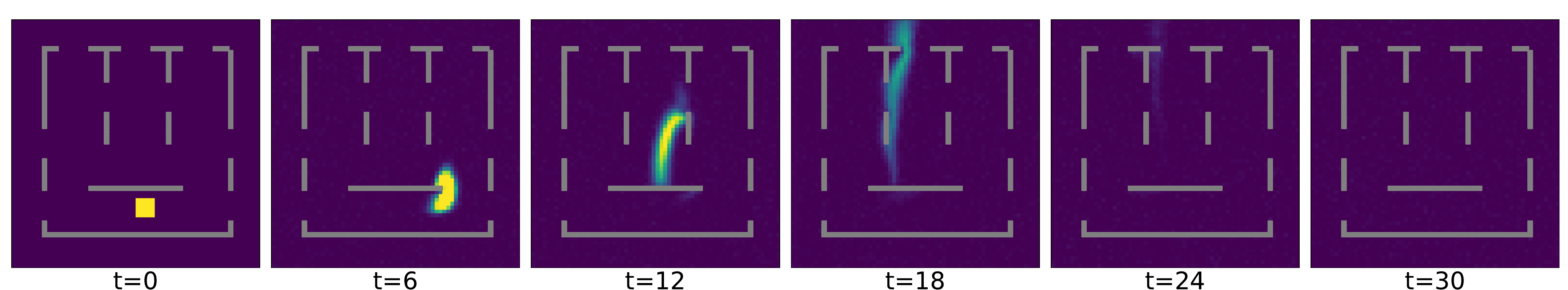}
\end{center}
\vspace{-8pt}
\caption{\textbf{Visualization of smoke density and fluid field dynamics controlled by \proj.}}
\vspace{-15pt}
\label{fig:2d_smoke_vis}
\end{figure*}

\textbf{Results.}
Evaluation results are presented in Table \ref{tab:2D_results}. It can be seen that our method outperforms the baselines by a large margin in optimizing the control objective $\J$. 
At a cost of slightly increasing the control cost $R(\w)$, control sequences generated by our method achieve a much faster average speed than baselines. 
In the full observation setting, the control objective achieved by \proj is -53.18 lower than the best baseline BPPO, while the average speed exhibits an increment of 172.2 over it. 
This demonstrates that diffusion models are effective for this challenging control task by performing global optimization of trajectory and control sequences in a generative approach. 
Comparison between \proj-lite and \proj reveals that by flattening the prior distribution of control sequences, another significant improvement is further achieved. 
Even in the more challenging partial observation setting, \proj still exhibits substantial advantages over existing methods. This reflects our method has a strong control capability under inadequate information. Configuration of the hyperparameter $\gamma$ in \proj and performance with respect to varying $\gamma$ is presented in Figure \ref{fig:figure_effect_gamma} in Appendix \ref{app:hyperparam_gamma}. Details about the hyperparameter $\lambda$ can be found in Table \ref{tab:effect_lambda_2d} in Appendix \ref{app:hyperparam_lambda}.

Figure \ref{fig:2d_control_curve} visualizes generated opening angle curves of different methods on three test jellyfish. Opening angle curves of \proj-lite show an obvious fast-close-slow-open shape, which is proven to produce high speed in jellyfish movement \cite{kang2023propulsive}. 
The reason is that the fast closing of wings leads to a thrust from fluid in the early state, resulting in a long-term high speed, and followed by a slow opening to reduce resistance. 
While this mode of movement appears rarely in the training dataset, \proj-lite could generate such control sequences for most test samples. 
This reflects that diffusion models under guidance are effective in optimizing the control objective. 
Furthermore, \proj makes this mode of movement more aggressive, with sharper change of the opening angle in the beginning and end stage within a period. 
This provides strong evidence that reweighting the prior distribution of control sequences allows for flexible sampling over a flattened distribution, thus control sequences with low prior but good objectives are more likely to be sampled. 
The movement and the resulting fluid field of the jellyfish corresponding to the middle subfigure of Figure \ref{fig:2d_control_curve} controlled by \proj is illustrated in Figure \ref{fig:2d_fluid_field} and 
more examples are provided in Figure \ref{fig:2d_vis_our} in Appendix \ref{app:2d_vis}. Conversely, opening angles obtained by baselines are inferior. 
The reason may be that they predict opening angles sequentially and hard to globally optimize the objective with three conflicting terms: average speed, $R(\w)$, and $d(\w_T, \w_0)$. We further study such myopic failure mode of SAC in Appendix \ref{app:myopic_failure}. 
More comparisons about the variation of the weight $\zeta$ are listed in Table \ref{tab:2D_full} and Table \ref{tab:2D_part} in Appendix \ref{app:2d_more}.
For efficiency evaluation of training and test phases, please refer to Table \ref{tab:2d_efficiency} in Appendix \ref{app:efficiency}. 
We also extend this experiment to a high-dimensional control signal setting, where the wings of the jellyfish are assumed to be soft. We find that our method is still competitive with baselines. Details about this evaluation are provided in Appendix \ref{app:finer_2d}.

\subsection{2D Smoke Indirect Control}\label{subsec:exp_smoke}
\textbf{Experiment settings.}
This task is to control smoke control in an incompressible fluid environment, following a similar (but more challenging) setup of \citep{holl2020learning}. Control forces were applied within a $64\times64$ grid flow field, excluding a semi-enclosed region, to minimize the smoke failing to pass through the top middle exit (seven exits in total). For illustration of our settings, please refer to Figure \ref{fig:smoke} in Appendix \ref{app:smoke_experiment}. This high-dimensional indirect control problem involves managing 2D forces at approximately 1,700 grid points every time step, resulting in about 100,000 control variables across 32 time steps, making it highly challenging.

\begin{wraptable}{tr}{0.5\textwidth}
\small
\vspace{-12pt}
\centering
\centering
\caption{\textbf{2D smoke movement control results.} Bold font denotes the best model, and underline denotes the second best model.} 
\vspace{2pt}
{
\begin{tabular}{l|c}
    \hline
    \toprule
    Method & \multicolumn{1}{c}{$\J$ $\downarrow$}  \\
    \midrule
    BC & 0.3085   \\
    BPPO & 0.3066  \\
    SAC (surrogate-solver) &	0.3212 \\
    SAC(offline) &	0.6503 \\\midrule
    \textbf{\proj -lite (ours)} & \underline{0.2324} \\
    \textbf{\proj (ours)} & \textbf{0.2254} \\

    \bottomrule
\end{tabular}
\label{tab:smoke_main_table}
\vspace{-10pt}
}
\normalsize
\end{wraptable}

\textbf{Results.}
Evaluation results are presented in Table \ref{tab:smoke_main_table}. Our method still has significant advantages over baselines in minimizing the control objective. Furthermore, the prior reweighting technique achieves extra improvement over \proj-lite. 
One test sample of smoke density and fluid field dynamics is illustrated in Figure \ref{fig:2d_smoke_vis}. More visualization results of our method are presented in Figure \ref{fig:smoke_vis_our} in Appendix \ref{app:smoke_experiment}. These results demonstrate that \proj is capable of controlling high dimensional physical systems even when control signals are indirectly applied to the system.
\vspace{-0.03in}
\section{Conclusion}
\vspace{-0.02in}
In this work, we have introduced \proj, a novel methodology for controlling complex physical systems. It generates control sequences and state trajectories by jointly optimizing the generative energy and control objective. We further introduced prior reweighting to enable the discovery of control sequences that diverge significantly from training. Through comprehensive experiments, we demonstrated our method's superior performance compared to classical, deep learning, and reinforcement learning baselines in challenging physical systems control tasks.
We discuss limitation and future work in Appendix \ref{app:future_work} and state social impact in \ref{app:social_impact}. 
\section{Acknowledgment}
\vspace{-0.02in}
We thank Yuchen Yang for insightful discussions on theoretical analysis. We thank the anonymous reviewers for providing valuable feedback on our manuscript.
We also gratefully acknowledge the support of Westlake University Research Center for Industries of the Future and
Westlake University Center for High-performance Computing.
The content is solely the responsibility of the authors and does not necessarily represent the official views of the funding entities.

\bibliography{references}

\begin{thebibliography}{10}

\bibitem{ajay2022conditional}
Anurag Ajay, Yilun Du, Abhi Gupta, Joshua Tenenbaum, Tommi Jaakkola, and Pulkit Agrawal.
\newblock Is conditional generative modeling all you need for decision-making?
\newblock {\em arXiv preprint arXiv:2211.15657}, 2022.

\bibitem{147ea8517f15447798b6c73b263eb1e6}
{Karl Johan} {\AA}str{\"o}m and Tore H{\"a}gglund.
\newblock The future of pid control.
\newblock In {\em Digital Control – Past, present, and future of PID Control}, United States, 2000. Elsevier.

\bibitem{bao2022equivariant}
Fan Bao, Min Zhao, Zhongkai Hao, Peiyao Li, Chongxuan Li, and Jun Zhu.
\newblock Equivariant energy-guided {SDE} for inverse molecular design.
\newblock In {\em The Eleventh International Conference on Learning Representations}, 2023.

\bibitem{beintema2020controlling}
Gerben Beintema, Alessandro Corbetta, Luca Biferale, and Federico Toschi.
\newblock Controlling rayleigh--b{\'e}nard convection via reinforcement learning.
\newblock {\em Journal of Turbulence}, 21(9-10):585--605, 2020.

\bibitem{brandstetter2023clifford}
Johannes Brandstetter, Rianne van~den Berg, Max Welling, and Jayesh~K Gupta.
\newblock Clifford neural layers for {PDE} modeling.
\newblock In {\em The Eleventh International Conference on Learning Representations}, 2023.

\bibitem{DBLP:conf/iclr/BrandstetterWW22}
Johannes Brandstetter, Daniel~E. Worrall, and Max Welling.
\newblock Message passing neural {PDE} solvers.
\newblock In {\em The Tenth International Conference on Learning Representations, {ICLR} 2022, Virtual Event, April 25-29, 2022}. OpenReview.net, 2022.

\bibitem{cachay2023dyffusion}
Salva~R{\"u}hling Cachay, Bo~Zhao, Hailey James, and Rose Yu.
\newblock Dyffusion: A dynamics-informed diffusion model for spatiotemporal forecasting.
\newblock {\em arXiv preprint arXiv:2306.01984}, 2023.

\bibitem{DBLP:journals/corr/abs-2105-09506}
Shengze Cai, Zhiping Mao, Zhicheng Wang, Minglang Yin, and George~Em Karniadakis.
\newblock Physics-informed neural networks (pinns) for fluid mechanics: {A} review.
\newblock {\em CoRR}, abs/2105.09506, 2021.

\bibitem{chi2023diffusion}
Cheng Chi, Siyuan Feng, Yilun Du, Zhenjia Xu, Eric Cousineau, Benjamin Burchfiel, and Shuran Song.
\newblock Diffusion policy: Visuomotor policy learning via action diffusion.
\newblock {\em arXiv preprint arXiv:2303.04137}, 2023.

\bibitem{degrave2022magnetic}
Jonas Degrave, Federico Felici, Jonas Buchli, Michael Neunert, Brendan Tracey, Francesco Carpanese, Timo Ewalds, Roland Hafner, Abbas Abdolmaleki, Diego de~Las~Casas, et~al.
\newblock Magnetic control of tokamak plasmas through deep reinforcement learning.
\newblock {\em Nature}, 602(7897):414--419, 2022.

\bibitem{dhariwal2021diffusion}
Prafulla Dhariwal and Alexander Nichol.
\newblock Diffusion models beat gans on image synthesis.
\newblock {\em Advances in neural information processing systems}, 34:8780--8794, 2021.

\bibitem{9970581}
Jie Ding, Min Wu, and Min Xiao.
\newblock Nonlinear decoupling control with pi $^{\lambda}$ d $^{\mu}$ neural network for mimo systems.
\newblock {\em IEEE Transactions on Neural Networks and Learning Systems}, pages 1--8, 2022.

\bibitem{du2023reduce}
Yilun Du, Conor Durkan, Robin Strudel, Joshua~B Tenenbaum, Sander Dieleman, Rob Fergus, Jascha Sohl-Dickstein, Arnaud Doucet, and Will~Sussman Grathwohl.
\newblock Reduce, reuse, recycle: Compositional generation with energy-based diffusion models and mcmc.
\newblock In {\em International Conference on Machine Learning}, pages 8489--8510. PMLR, 2023.

\bibitem{elhawary2020deep}
MA~Elhawary.
\newblock Deep reinforcement learning for active flow control around a circular cylinder using unsteady-mode plasma actuators.
\newblock {\em arXiv preprint arXiv:2012.10165}, 2020.

\bibitem{farahmand2017deep}
Amir-massoud Farahmand, Saleh Nabi, and Daniel~N. Nikovski.
\newblock Deep reinforcement learning for partial differential equation control.
\newblock In {\em 2017 American Control Conference (ACC)}, pages 3120--3127, 2017.

\bibitem{haarnoja2018soft}
Tuomas Haarnoja, Aurick Zhou, Pieter Abbeel, and Sergey Levine.
\newblock Soft actor-critic: Off-policy maximum entropy deep reinforcement learning with a stochastic actor.
\newblock In {\em International conference on machine learning}, pages 1861--1870. PMLR, 2018.

\bibitem{hachem2021deep}
Elie Hachem, Hassan Ghraieb, Jonathan Viquerat, Aur{\'e}lien Larcher, and P~Meliga.
\newblock Deep reinforcement learning for the control of conjugate heat transfer.
\newblock {\em Journal of Computational Physics}, 436:110317, 2021.

\bibitem{he2024diffusion}
Haoran He, Chenjia Bai, Kang Xu, Zhuoran Yang, Weinan Zhang, Dong Wang, Bin Zhao, and Xuelong Li.
\newblock Diffusion model is an effective planner and data synthesizer for multi-task reinforcement learning.
\newblock {\em Advances in neural information processing systems}, 36, 2024.

\bibitem{ho2020denoising}
Jonathan Ho, Ajay Jain, and Pieter Abbeel.
\newblock Denoising diffusion probabilistic models.
\newblock {\em Advances in neural information processing systems}, 33:6840--6851, 2020.

\bibitem{ho2022classifier}
Jonathan Ho and Tim Salimans.
\newblock Classifier-free diffusion guidance.
\newblock In {\em NeurIPS 2021 Workshop on Deep Generative Models and Downstream Applications}, 2021.

\bibitem{ho2022video}
Jonathan Ho, Tim Salimans, Alexey Gritsenko, William Chan, Mohammad Norouzi, and David~J Fleet.
\newblock Video diffusion models.
\newblock {\em arXiv:2204.03458}, 2022.

\bibitem{holl2020learning}
Philipp Holl, Nils Thuerey, and Vladlen Koltun.
\newblock Learning to control pdes with differentiable physics.
\newblock In {\em International Conference on Learning Representations}, 2020.

\bibitem{holzschuh2023solving}
Benjamin Holzschuh, Simona Vegetti, and Nils Thuerey.
\newblock Solving inverse physics problems with score matching.
\newblock In {\em Thirty-seventh Conference on Neural Information Processing Systems}, 2023.

\bibitem{hwang2022solving}
Rakhoon Hwang, Jae~Yong Lee, Jin~Young Shin, and Hyung~Ju Hwang.
\newblock Solving pde-constrained control problems using operator learning.
\newblock In {\em Proceedings of the AAAI Conference on Artificial Intelligence}, volume~36, pages 4504--4512, 2022.

\bibitem{jabri2022scalable}
Allan Jabri, David Fleet, and Ting Chen.
\newblock Scalable adaptive computation for iterative generation.
\newblock {\em arXiv preprint arXiv:2212.11972}, 2022.

\bibitem{pmlr-v162-janner22a}
Michael Janner, Yilun Du, Joshua Tenenbaum, and Sergey Levine.
\newblock Planning with diffusion for flexible behavior synthesis.
\newblock In Kamalika Chaudhuri, Stefanie Jegelka, Le~Song, Csaba Szepesvari, Gang Niu, and Sivan Sabato, editors, {\em Proceedings of the 39th International Conference on Machine Learning}, volume 162, pages 9902--9915. PMLR, 17--23 Jul 2022.

\bibitem{janner2022planning}
Michael Janner, Yilun Du, Joshua Tenenbaum, and Sergey Levine.
\newblock Planning with diffusion for flexible behavior synthesis.
\newblock {\em Proceedings of Machine Learning Research}, 162:9902--9915, 17--23 Jul 2022.

\bibitem{kang2023propulsive}
Linlin Kang, An-Kang Gao, Fei Han, Weicheng Cui, and Xi-Yun Lu.
\newblock Propulsive performance and vortex dynamics of jellyfish-like propulsion with burst-and-coast strategy.
\newblock {\em Physics of Fluids}, 35(9), 2023.

\bibitem{kingma2014adam}
Diederik~P Kingma and Jimmy Ba.
\newblock Adam: A method for stochastic optimization.
\newblock {\em arXiv preprint arXiv:1412.6980}, 2014.

\bibitem{lam2023learning}
Remi Lam, Alvaro Sanchez-Gonzalez, Matthew Willson, Peter Wirnsberger, Meire Fortunato, Ferran Alet, Suman Ravuri, Timo Ewalds, Zach Eaton-Rosen, Weihua Hu, et~al.
\newblock Learning skillful medium-range global weather forecasting.
\newblock {\em Science}, 382(6677):1416--1421, 2023.

\bibitem{lapidus1999numerical}
Leon Lapidus and George~F Pinder.
\newblock {\em Numerical solution of partial differential equations in science and engineering}.
\newblock John Wiley \& Sons, 1999.

\bibitem{viquerat2022review}
A~Larcher and E~Hachem.
\newblock A review on deep reinforcement learning for fluid mechanics: An update.
\newblock {\em Physics of Fluids}, 34(11), 2022.

\bibitem{1580152}
Yun Li, Kiam~Heong Ang, and G.C.Y. Chong.
\newblock Pid control system analysis and design.
\newblock {\em IEEE Control Systems Magazine}, 26(1):32--41, 2006.

\bibitem{DBLP:conf/iclr/LiKALBSA21}
Zongyi Li, Nikola~Borislavov Kovachki, Kamyar Azizzadenesheli, Burigede liu, Kaushik Bhattacharya, Andrew Stuart, and Anima Anandkumar.
\newblock Fourier neural operator for parametric partial differential equations.
\newblock In {\em International Conference on Learning Representations}, 2021.

\bibitem{lillicrap2015continuous}
Timothy~P Lillicrap, Jonathan~J Hunt, Alexander Pritzel, Nicolas Heess, Tom Erez, Yuval Tassa, David Silver, and Daan Wierstra.
\newblock Continuous control with deep reinforcement learning.
\newblock {\em arXiv preprint arXiv:1509.02971}, 2015.

\bibitem{lions1971optimal}
Jacques~Louis Lions.
\newblock {\em Optimal control of systems governed by partial differential equations}, volume 170.
\newblock Springer, 1971.

\bibitem{liu1989limited}
Dong~C Liu and Jorge Nocedal.
\newblock On the limited memory bfgs method for large scale optimization.
\newblock {\em Mathematical programming}, 45(1-3):503--528, 1989.

\bibitem{liu2022compositional}
Nan Liu, Shuang Li, Yilun Du, Antonio Torralba, and Joshua~B Tenenbaum.
\newblock Compositional visual generation with composable diffusion models.
\newblock In {\em European Conference on Computer Vision}, pages 423--439. Springer, 2022.

\bibitem{mcnamara2004fluid}
Antoine McNamara, Adrien Treuille, Zoran Popovi{\'c}, and Jos Stam.
\newblock Fluid control using the adjoint method.
\newblock {\em ACM Transactions On Graphics (TOG)}, 23(3):449--456, 2004.

\bibitem{mittal2005immersed}
Rajat Mittal and Gianluca Iaccarino.
\newblock Immersed boundary methods.
\newblock {\em Annu. Rev. Fluid Mech.}, 37:239--261, 2005.

\bibitem{morton2005numerical}
Keith~W Morton and David~Francis Mayers.
\newblock {\em Numerical solution of partial differential equations: an introduction}.
\newblock Cambridge university press, 2005.

\bibitem{mowlavi2023optimal}
Saviz Mowlavi and Saleh Nabi.
\newblock Optimal control of pdes using physics-informed neural networks.
\newblock {\em Journal of Computational Physics}, 473:111731, 2023.

\bibitem{nichol_improved_2021}
Alex Nichol and Prafulla Dhariwal.
\newblock Improved {Denoising} {Diffusion} {Probabilistic} {Models}.
\newblock {\em ArXiv}, February 2021.

\bibitem{nichol2021glide}
Alexander~Quinn Nichol, Prafulla Dhariwal, Aditya Ramesh, Pranav Shyam, Pamela Mishkin, Bob Mcgrew, Ilya Sutskever, and Mark Chen.
\newblock {GLIDE}: Towards photorealistic image generation and editing with text-guided diffusion models.
\newblock {\em Proceedings of Machine Learning Research}, 162:16784--16804, 17--23 Jul 2022.

\bibitem{novati2017synchronisation}
Guido Novati, Siddhartha Verma, Dmitry Alexeev, Diego Rossinelli, Wim~M Van~Rees, and Petros Koumoutsakos.
\newblock Synchronisation through learning for two self-propelled swimmers.
\newblock {\em Bioinspiration \& biomimetics}, 12(3):036001, 2017.

\bibitem{pan2018reinforcement}
Yangchen Pan, Amir-massoud Farahmand, Martha White, Saleh Nabi, Piyush Grover, and Daniel Nikovski.
\newblock Reinforcement learning with function-valued action spaces for partial differential equation control.
\newblock In Jennifer Dy and Andreas Krause, editors, {\em Proceedings of the 35th International Conference on Machine Learning}, volume~80, pages 3986--3995. PMLR, 10--15 Jul 2018.

\bibitem{paranjape2013pde}
Aditya~A Paranjape, Jinyu Guan, Soon-Jo Chung, and Miroslav Krstic.
\newblock Pde boundary control for flexible articulated wings on a robotic aircraft.
\newblock {\em IEEE Transactions on Robotics}, 29(3):625--640, 2013.

\bibitem{pfaff2020learning}
Tobias Pfaff, Meire Fortunato, Alvaro Sanchez-Gonzalez, and Peter~W Battaglia.
\newblock Learning mesh-based simulation with graph networks.
\newblock {\em arXiv preprint arXiv:2010.03409}, 2020.

\bibitem{pomerleau1988alvinn}
Dean~A Pomerleau.
\newblock Alvinn: An autonomous land vehicle in a neural network.
\newblock {\em Advances in neural information processing systems}, 1, 1988.

\bibitem{price2023gencast}
Ilan Price, Alvaro Sanchez-Gonzalez, Ferran Alet, Timo Ewalds, Andrew El-Kadi, Jacklynn Stott, Shakir Mohamed, Peter Battaglia, Remi Lam, and Matthew Willson.
\newblock Gencast: Diffusion-based ensemble forecasting for medium-range weather.
\newblock {\em arXiv preprint arXiv:2312.15796}, 2023.

\bibitem{protas2008adjoint}
Bartosz Protas.
\newblock Adjoint-based optimization of pde systems with alternative gradients.
\newblock {\em Journal of Computational Physics}, 227(13):6490--6510, 2008.

\bibitem{rabault2019artificial}
Jean Rabault, Miroslav Kuchta, Atle Jensen, Ulysse R{\'e}glade, and Nicolas Cerardi.
\newblock Artificial neural networks trained through deep reinforcement learning discover control strategies for active flow control.
\newblock {\em Journal of fluid mechanics}, 865:281--302, 2019.

\bibitem{DBLP:journals/jcphy/RaissiPK19}
Maziar Raissi, Paris Perdikaris, and George~E. Karniadakis.
\newblock Physics-informed neural networks: {A} deep learning framework for solving forward and inverse problems involving nonlinear partial differential equations.
\newblock {\em J. Comput. Phys.}, 378:686--707, 2019.

\bibitem{ristroph2014stable}
Leif Ristroph and Stephen Childress.
\newblock Stable hovering of a jellyfish-like flying machine.
\newblock {\em Journal of The Royal Society Interface}, 11(92):20130992, 2014.

\bibitem{ronneberger2015u}
Olaf Ronneberger, Philipp Fischer, and Thomas Brox.
\newblock U-net: Convolutional networks for biomedical image segmentation.
\newblock In {\em Medical Image Computing and Computer-Assisted Intervention--MICCAI 2015: 18th International Conference, Munich, Germany, October 5-9, 2015, Proceedings, Part III 18}, pages 234--241. Springer, 2015.

\bibitem{sanchez2020learning}
Alvaro Sanchez-Gonzalez, Jonathan Godwin, Tobias Pfaff, Rex Ying, Jure Leskovec, and Peter Battaglia.
\newblock Learning to simulate complex physics with graph networks.
\newblock In {\em International conference on machine learning}, pages 8459--8468. PMLR, 2020.

\bibitem{schwenzer2021review}
Max Schwenzer, Muzaffer Ay, Thomas Bergs, and Dirk Abel.
\newblock Review on model predictive control: An engineering perspective.
\newblock {\em The International Journal of Advanced Manufacturing Technology}, 117(5-6):1327--1349, 2021.

\bibitem{shuphysics2023}
Dule Shu, Zijie Li, and Amir {Barati Farimani}.
\newblock A physics-informed diffusion model for high-fidelity flow field reconstruction.
\newblock {\em Journal of Computational Physics}, 478:111972, 2023.

\bibitem{slama2019neural}
Sabrine Slama, Ayachi Errachdi, and Mohamed Benrejeb.
\newblock Neural adaptive pid and neural indirect adaptive control switch controller for nonlinear mimo systems.
\newblock {\em Mathematical Problems in Engineering}, 2019, 2019.

\bibitem{song2020denoising}
Jiaming Song, Chenlin Meng, and Stefano Ermon.
\newblock Denoising diffusion implicit models.
\newblock {\em arXiv preprint arXiv:2010.02502}, 2020.

\bibitem{um2020solver}
Kiwon Um, Robert Brand, Yun~Raymond Fei, Philipp Holl, and Nils Thuerey.
\newblock Solver-in-the-loop: Learning from differentiable physics to interact with iterative pde-solvers.
\newblock {\em Advances in Neural Information Processing Systems}, 33:6111--6122, 2020.

\bibitem{verma2018efficient}
Siddhartha Verma, Guido Novati, and Petros Koumoutsakos.
\newblock Efficient collective swimming by harnessing vortices through deep reinforcement learning.
\newblock {\em Proceedings of the National Academy of Sciences}, 115(23):5849--5854, 2018.

\bibitem{vlachas2022multiscale}
Pantelis~R Vlachas, Georgios Arampatzis, Caroline Uhler, and Petros Koumoutsakos.
\newblock Multiscale simulations of complex systems by learning their effective dynamics.
\newblock {\em Nature Machine Intelligence}, 4(4):359--366, 2022.

\bibitem{vlastelica2023diffusion}
Marin Vlastelica, Tatiana Lopez-Guevara, Kelsey~R Allen, Peter Battaglia, Arnaud Doucet, and Kim Stachenfeld.
\newblock Diffusion generative inverse design.
\newblock 2023.

\bibitem{wang2021learning}
Sifan Wang, Hanwen Wang, and Paris Perdikaris.
\newblock Learning the solution operator of parametric partial differential equations with physics-informed deeponets.
\newblock {\em Science advances}, 7(40):eabi8605, 2021.

\bibitem{weymouth2015lily}
Gabriel~D Weymouth.
\newblock Lily pad: Towards real-time interactive computational fluid dynamics.
\newblock {\em arXiv preprint arXiv:1510.06886}, 2015.

\bibitem{DBLP:conf/nips/WuML22}
Tailin Wu, Takashi Maruyama, and Jure Leskovec.
\newblock Learning to accelerate partial differential equations via latent global evolution.
\newblock In Sanmi Koyejo, S.~Mohamed, A.~Agarwal, Danielle Belgrave, K.~Cho, and A.~Oh, editors, {\em Advances in Neural Information Processing Systems 35: Annual Conference on Neural Information Processing Systems 2022, LA, USA, November 28 - December 9, 2022}, 2022.

\bibitem{wu2024compositional}
Tailin Wu, Takashi Maruyama, Long Wei, Tao Zhang, Yilun Du, Gianluca Iaccarino, and Jure Leskovec.
\newblock Compositional generative inverse design.
\newblock {\em arXiv preprint arXiv:2401.13171}, 2024.

\bibitem{ze20243d}
Yanjie Ze, Gu~Zhang, Kangning Zhang, Chenyuan Hu, Muhan Wang, and Huazhe Xu.
\newblock 3d diffusion policy.
\newblock {\em arXiv preprint arXiv:2403.03954}, 2024.

\bibitem{zhang2022pde}
Shuang Zhang, Xinyu Qian, Zhijie Liu, Qing Li, and Guang Li.
\newblock Pde modeling and tracking control for the flexible tail of an autonomous robotic fish.
\newblock {\em IEEE Transactions on Systems, Man, and Cybernetics: Systems}, 52(12):7618--7627, 2022.

\bibitem{zhao2022egsde}
Min Zhao, Fan Bao, Chongxuan Li, and Jun Zhu.
\newblock Egsde: Unpaired image-to-image translation via energy-guided stochastic differential equations.
\newblock {\em Advances in Neural Information Processing Systems}, 35:3609--3623, 2022.

\bibitem{zhao2022learning}
Qingqing Zhao, David~B Lindell, and Gordon Wetzstein.
\newblock Learning to solve {PDE}-constrained inverse problems with graph networks.
\newblock {\em Proceedings of Machine Learning Research}, 162:26895--26910, 17--23 Jul 2022.

\bibitem{zhuang2023behavior}
Zifeng Zhuang, Kun Lei, Jinxin Liu, Donglin Wang, and Yilang Guo.
\newblock Behavior proximal policy optimization.
\newblock {\em arXiv preprint arXiv:2302.11312}, 2023.

\end{thebibliography}
\bibliographystyle{plain}

\newpage

\clearpage
\appendix
\onecolumn

\addcontentsline{toc}{section}{Appendix} %
\part{Appendix} %
\parttoc %

\section{Additional Related Work}
\label{app:related}
\subsection{Physical Systems Simulation}
\label{app:related_simulation}
Complex physical systems simulation forms the foundation of systems control. While classical numerical techniques for simulating physical systems are renowned for their accuracy, they are often associated with significant computational expenses \cite{morton2005numerical, lapidus1999numerical}.
Recently, neural network-based solvers show a significant advantage over classical solvers in accelerating simulations.
They could be roughly divided into three primary classes: data-driven methods \cite{DBLP:conf/iclr/LiKALBSA21,sanchez2020learning,pfaff2020learning,DBLP:conf/iclr/BrandstetterWW22,DBLP:conf/nips/WuML22,brandstetter2023clifford,lam2023learning}, Physics-Informed Neural Networks (PINNs) \cite{DBLP:journals/jcphy/RaissiPK19,DBLP:journals/corr/abs-2105-09506,wang2021learning}, and solver-in-the-loop methods \cite{um2020solver,vlachas2022multiscale}.
Most of them
use an iterative horizontal prediction framework. Instead, we treat the system trajectory as a whole variable and use diffusion models to learn an explicit simulator conditioned on control sequences. A notable work is by \cite{cachay2023dyffusion}, which introduces diffusion models for temporal forecasting. While both our work and \cite{cachay2023dyffusion}'s employ diffusion models, we tackle a different task of physical system control. Furthermore, we incorporate the control objective into the inference and introduce prior reweighting to tune the influence of the prior.

\subsection{Physical Systems Control}
\label{app:related_control}
For physical systems whose dynamics are described by PDEs, the adjoint methods \cite{lions1971optimal,mcnamara2004fluid,protas2008adjoint} have been the most widely used approach for system control in the last decades. It is accurate but computationally expensive. Deep learning-based methods have emerged as a powerful
tool for modeling physical systems' dynamics. Supervised learning (SL) \cite{holl2020learning,hwang2022solving} trains parameterized models to directly optimize control using backpropagation through time over the entire trajectory. For example, \cite{holl2020learning} proposes a hierarchical predictor-corrector scheme to control complex nonlinear physical systems over long time frames. A more recent work proposed by \cite{hwang2022solving} designs two stages which respectively learn the solution operator and search for optimal control. Different from these methods, we do not use the surrogate model, and learn both state trajectories and control sequences in an integrated way. 
Reinforcement learning (RL) \cite{farahmand2017deep, pan2018reinforcement, rabault2019artificial} treats control signals as actions and learns policies to make sequential decisions. 
Particularly in the field of fluid dynamics \cite{viquerat2022review}, reinforcement learning has been applied to a multitude of specific problems including drag reduction \cite{rabault2019artificial, elhawary2020deep}, conjugate heat transfer \cite{beintema2020controlling, hachem2021deep} and swimming \cite{novati2017synchronisation, verma2018efficient}. But they implicitly consider physics information and sequentially make decisions. In contrast, we generalize the entire trajectories, which results in a global optimization with consideration of physical information learned by models. Recently, PINNs are also incorporated in PDE control \cite{mowlavi2023optimal}, but they require an explicit form of PDE dynamics, while our method is data-driven and can deal with a broader range of complex physical system control problems without explicit PDE dynamics.

\subsection{Diffusion Models}
\label{app:related_diffusion}
Diffusion models \cite{ho2020denoising} have significantly advanced in applications such as image and text generation \cite{dhariwal2021diffusion,nichol2021glide}, inverse design \cite{wu2024compositional,vlastelica2023diffusion}, inverse problem \cite{holzschuh2023solving}, physical simulation \cite{cachay2023dyffusion,price2023gencast}, and decision-making \cite{janner2022planning,ajay2022conditional,he2024diffusion}. 
In particular, recent progress in robot control shows that diffusion models have significant advantages over existing reinforcement learning methods for action planning \cite{chi2023diffusion,ze20243d}.
Generating diverse yet consistent samples poses a challenge. For diversity, methods \cite{liu2022compositional,bao2022equivariant,zhao2022egsde,du2023reduce} that integrate score estimates from various models have been effective. For consistency, guidance diffusion techniques \cite{dhariwal2021diffusion,ho2022classifier} have been utilized to generate condition-specific samples. Our approach differs by flattening the joint distribution to achieve better control by slightly expanding beyond the prior distribution range.

\section{Theoretical Analysis of Prior Reweighting}\label{app:theory}
\begin{proof}[Proof of Theorem \ref{prop:1}]
\begin{align*}
E(\gamma)= &\int \mathbb{I}_{Q(\varepsilon)}(\u,\w) p_{\gamma}(u,w|Y=1) \mathrm{d}(\u,\w) \\
= &\int \mathbb{I}_{Q(\varepsilon)}(\u,\w) \frac{p(Y=1|\u,\w) p_{\gamma}(\u,\w)}{p(Y=1)} \mathrm{d}(\u,\w) \\
= &\frac{\mathbb{E}_{(\u,\w)}[\mathbb{I}_{Q(\varepsilon)}(\u,\w) p(Y=1|\u,\w)]}{\mathbb{E}_{(\u,\w)}[p(Y=1|\u,\w)]} \\
= &\frac{\mathbb{E}_{(\u,\w)\}[\mathbb{I}_{Q(\varepsilon)}(\u,\w) p(Y=1|\u,\w)]}}{\mathbb{E}_{\u,\w}[\mathbb{I}_{Q(\varepsilon)}(\u,\w) p(Y=1|\u,\w)] + \mathbb{E}_{\u,\w}[\mathbb{I}_{Q(\varepsilon)^c}(\u,\w) p(Y=1|\u,\w)]} \\
= &\frac{1}{1 + \frac{\mathbb{E}_{\u,\w}[\mathbb{I}_{Q(\varepsilon)^c}(\u,\w) p(Y=1|\u,\w)]}{\mathbb{E}_{\u,\w}[\mathbb{I}_{Q(\varepsilon)}(\u,\w) p(Y=1|\u,\w)]}}
\end{align*}

Define
\begin{align*}
G(\gamma) = &\frac{\mathbb{E}_{\u,\w}[\mathbb{I}_{Q(\varepsilon)}(\u,\w) p(Y=1|\u,\w)]}{\mathbb{E}_{\u,\w}[\mathbb{I}_{Q(\varepsilon)^c}(\u,\w) p(Y=1|\u,\w)]} \\\\
= &\frac{\mathbb{E}_w[\mathbb{E}_u[\mathbb{I}_{Q(\varepsilon)}(\u,\w) p(Y=1|\u,\w)|\w]]}{\mathbb{E}_w[\mathbb{E}_u[\mathbb{I}_{Q(\varepsilon)^c}(\u,\w) p(Y=1|\u,\w)|\w]]} \\\\
= &\frac{\int \mathbb{E}_u[\mathbb{I}_{Q(\varepsilon)}(\u,\w) p(Y=1|\u,\w)|\w] p^{\gamma}(\w) \mathrm{d}\w}{\int \mathbb{E}_u[\mathbb{I}_{Q(\varepsilon)^c}(\u,\w) p(Y=1|\u,\w)|\w] p^{\gamma}(\w) \mathrm{d}\w}
\end{align*}

Then

$E(\gamma) = \frac{1}{1 + \frac{1}{G(\gamma)}}$. Since $G(\gamma)>0$, $E(\gamma)$ and $G(\gamma)$ have the same monotonicity.

\begin{align*}
G'(\gamma) = &\frac{\int \mathbb{E}_{u}[\mathbb{I}_{Q(\varepsilon)}(\u,\w) p(Y=1|\u,\w)|\w] p^{\gamma}(\w) \ln(p(\w)) \mathrm{d}\w \mathbb{E}_{\u,\w}[\mathbb{I}_{Q(\varepsilon)^c}(\u,\w) p(Y=1|\u,\w)]}{(\mathbb{E}_{\u,\w}[\mathbb{I}_{Q(\varepsilon)^c}(\u,\w) p(Y=1|\u,\w)])^2 } \\\\
& -\frac{\mathbb{E}_{\u,\w}[\mathbb{I}_{Q(\varepsilon)}(\u,\w) p(Y=1|\u,\w)]  \int \mathbb{E}_{u}[\mathbb{I}_{Q(\varepsilon)^c}(\u,\w) p(Y=1|\u,\w)|\w] p^{\gamma}(\w) \ln(p(\w)) \mathrm{d}\w}{(\mathbb{E}_{\u,\w}[\mathbb{I}_{Q(\varepsilon)^c}(\u,\w) p(Y=1|\u,\w)])^2} \\\\
= & \frac{\mathbb{E}_{\u,\w}[\mathbb{I}_{Q(\varepsilon)}(\u,\w) p(Y=1|\u,\w) \ln(p(\w))] \mathbb{E}_{\u,\w}[\mathbb{I}_{Q(\varepsilon)^c}(\u,\w) p(Y=1|\u,\w)]}{(\mathbb{E}_{\u,\w}[\mathbb{I}_{Q(\varepsilon)^c}(\u,\w) p(Y=1|\u,\w)])^2} \\\\
& - \frac{\mathbb{E}_{\u,\w}[\mathbb{I}_{Q(\varepsilon)}(\u,\w) p(Y=1|\u,\w)] \mathbb{E}_{\u,\w}[\mathbb{I}_{Q(\varepsilon)^c}(\u,\w) p(Y=1|\u,\w) \ln(p(\w))]}{(\mathbb{E}_{\u,\w}[\mathbb{I}_{Q(\varepsilon)^c}(\u,\w) p(Y=1|\u,\w)])^2}
\end{align*}

By definition, $F(\gamma)$ is a positive multiple of $G'(\gamma)$, which implies our conclusion:
\begin{itemize}
 \item If $F(1) < 0$, then $G'(1) < 0$ and thus $E(\gamma)$ decreases around 1. Hence, there exists $\gamma_{-} < 1$, s.t., $E(\gamma_{-})> E(1)$, thus (i) holds; 
 \item otherwise, for similar reason, (ii) holds. 
\end{itemize}
\end{proof}

\textbf{Remark}: Here $F(\gamma)$ can be interpreted as some kind of difference between "entropies" in $Q(\varepsilon)^c$ and $Q(\varepsilon)$. When $F(1) < 0$, it means that $Q(\varepsilon)^c$ has higher "entropies",  implying that the training trajectories are far from optimal. As a result, we may need to flatten the distribution of training trajectories, which corresponds to using the prior reweighting technique with $\gamma<1$. Since this is the most common case, we usually set  $\gamma<1$.

\section{Visualization Results}\label{app:vis}
\subsection{1D Burgers' Equation Visualization}\label{app:1d_vis}

We present more visualization results of our method and baselines under three settings: FOPC, POFC, and POPC in Figure \ref{fig:1d_vis_fopc}, Figure \ref{fig:1d_vis_pofc} and Figure \ref{fig:1d_vis_popc}, respectively. Under each setting, we present the results of five randomly selected samples from the test dataset. The goal of control is to make the final state $\u_T$ ($T=10$) close to the target state.

\begin{figure}[hbp]
\centering
\hfill\hfill
\includegraphics[width=1\textwidth]{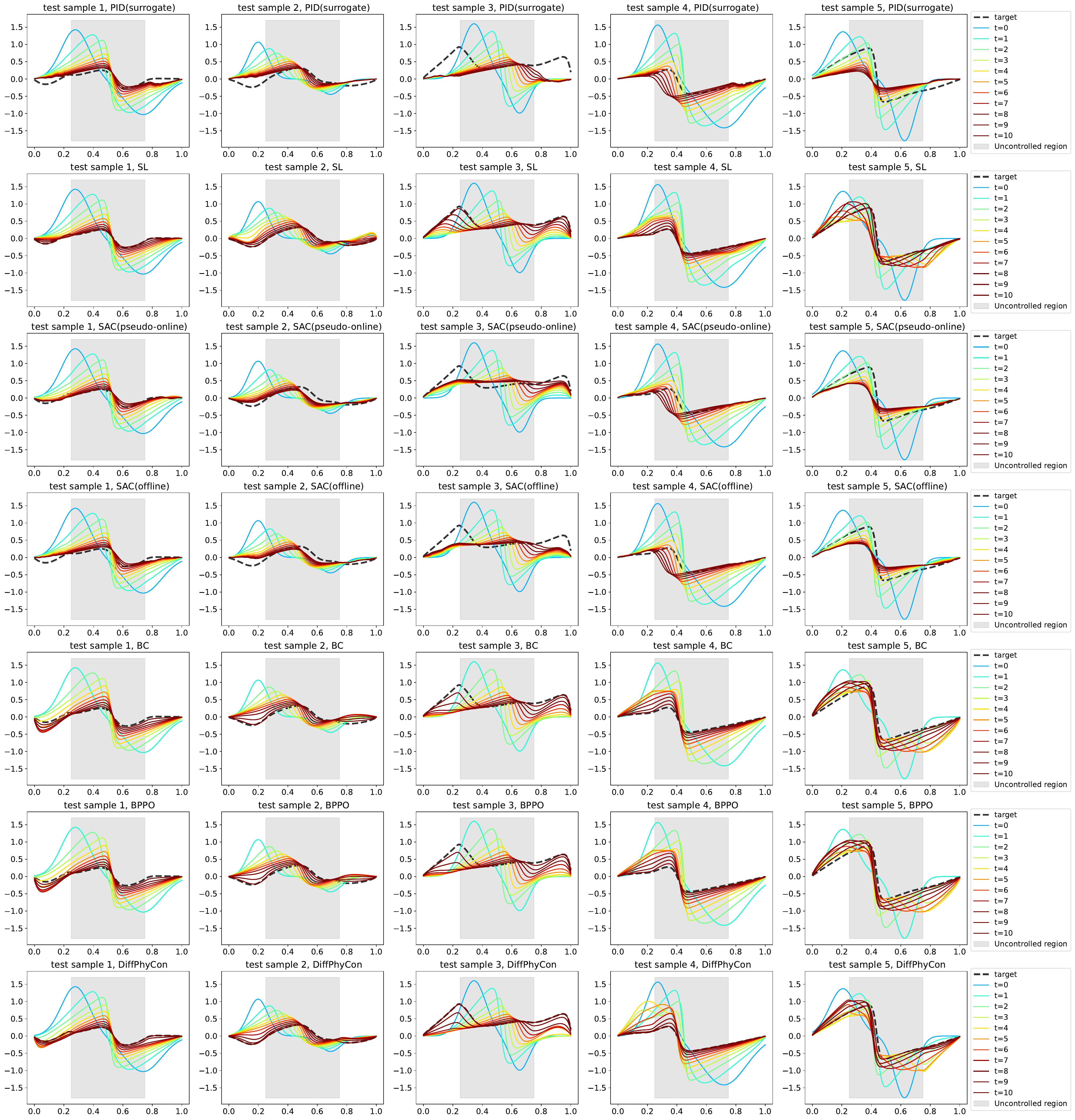}
\caption{\textbf{Visualizations results of 1D Burgers' equation control under the FO-PC (full observation, partial control) setting}. The curve for the system state $\u_t$ of each time step $t=0,\cdots,10$ under control is plotted for our method (\proj) and baselines. The $x$-axis is the spatial coordinate and the $y$-axis is the value of the system state.}
\label{fig:1d_vis_fopc}
\end{figure}

From these visualization results in Figure \ref{fig:1d_vis_fopc}, Figure \ref{fig:1d_vis_pofc}, and Figure \ref{fig:1d_vis_popc}, it can be observed that under the control of our \proj, the system state could converge smoothly to the target state given different initial states, and the final state $\u_T$ always coincides with the target state. Furthermore, this observation is consistent under all three settings: FOPC, POFC, and POPC, implying that our \proj is effective in addressing the partial observation and partial control challenges. In contrast, the baselines showed inferior results. Even the best baseline, BPPO, presents obvious mismatching with the target state on some samples.

\newpage

\begin{figure}[hbp]
\centering
\hfill\hfill
\includegraphics[width=1\textwidth]{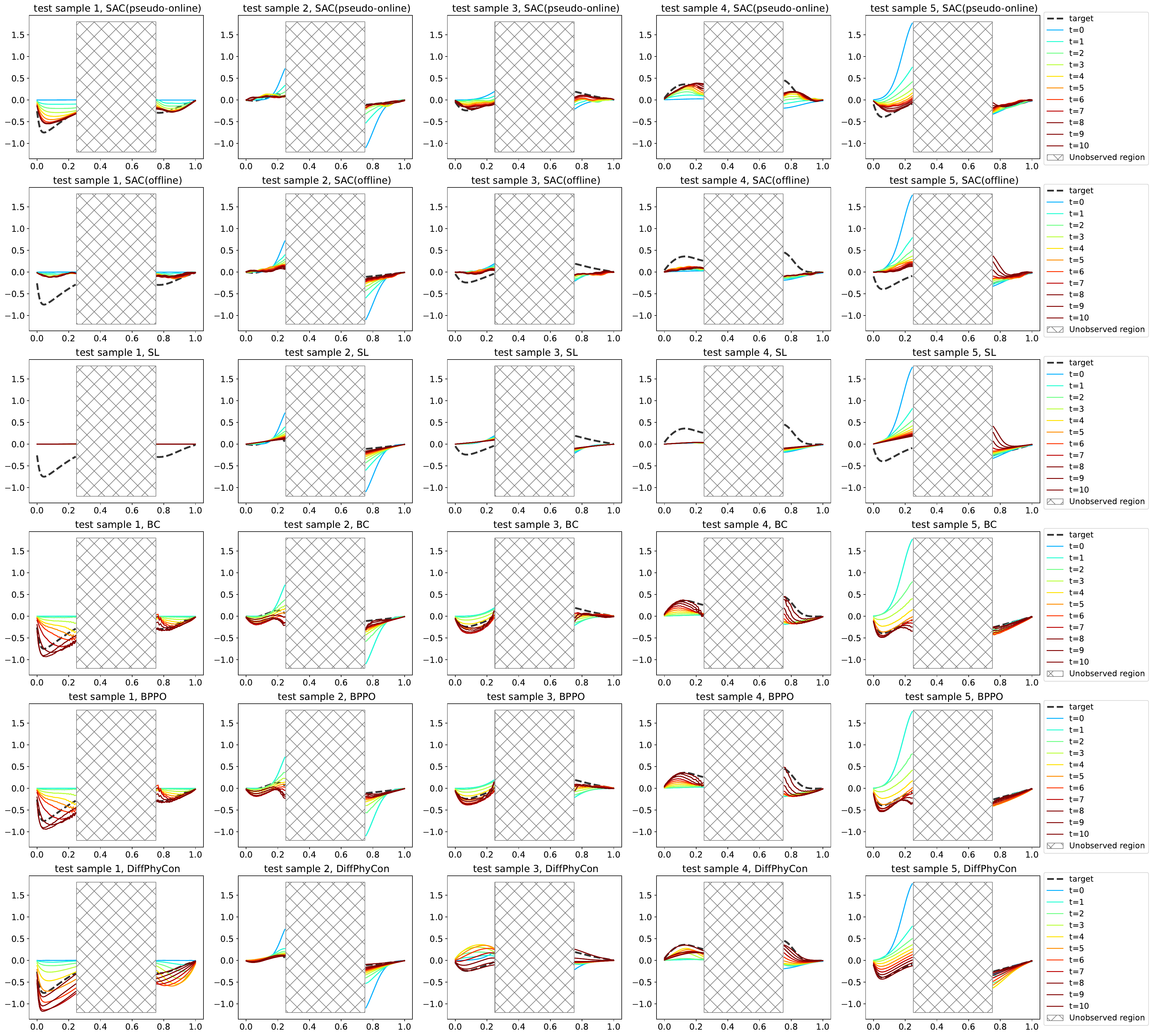}
\caption{\textbf{Visualizations results of 1D Burgers' equation control under the PO-FC (partial observation, full control) setting}. The curve for the system state $\u_t$ of each time step $t=0,\cdots,10$ under control is plotted for our method (\proj) and baselines. The $x$-axis is the spatial coordinate and the $y$-axis is the value of the system state.
}
\label{fig:1d_vis_pofc}
\vskip -0.2in
\end{figure}

\newpage

\begin{figure}[hbp]
\centering
\hfill\hfill
\includegraphics[width=1\textwidth]{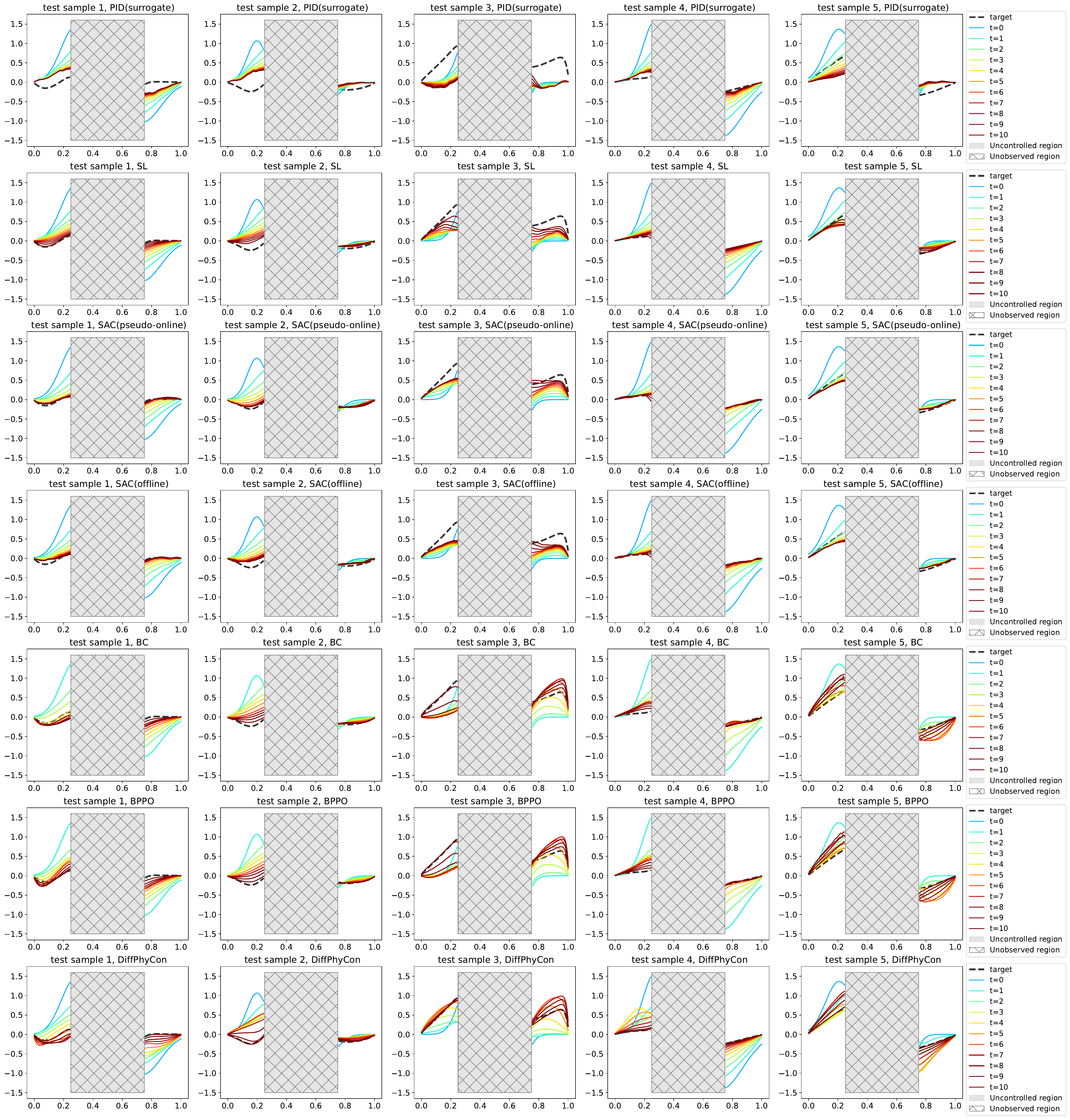}
\caption{
\textbf{Visualizations results of 1D Burgers' equation control under the PO-PC (partially observation, partially control) setting}. The curve for the system state $\u_t$ of each time step $t=0,\cdots,10$ under control is plotted for our method (\proj) and baselines. The $x$-axis is the spatial coordinate and the $y$-axis is the value of the system state.}
\label{fig:1d_vis_popc}
\end{figure}

\subsection{2D jellyfish control Visualization}\label{app:2d_vis}
We present more simulation results of our method in Figure \ref{fig:2d_vis_our}. Each line represents an example from the test dataset. We plot five snapshots of boundary and fluid field for each example.

\begin{figure}[htbp]
\centering
\hfill\hfill
\includegraphics[width=1\textwidth,height=0.75\textheight]{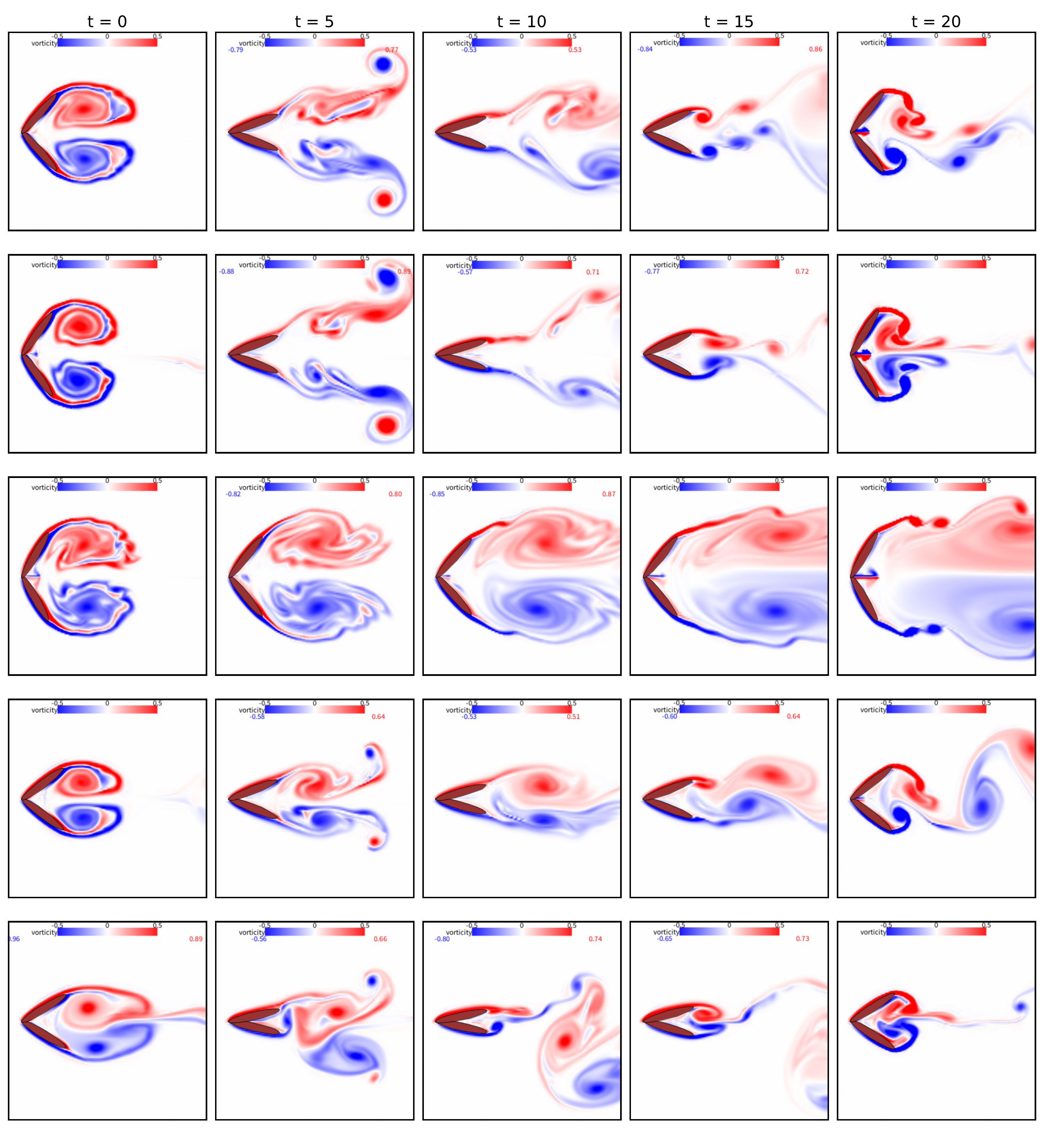}
\caption{\textbf{More examples of 2D jellyfish simulation  controlled by our method}.}
\label{fig:2d_vis_our}
\vskip -0.2in
\end{figure}

\subsection{2D Smoke Control Visualization}\label{app:smoke_vis}
We present fluid states and control signals generated by our method in Figure \ref{fig:smoke_vis_our}. 
Each line represents an example from the test dataset. We plot six snapshots for each example.

\begin{figure}[htbp]
\centering
\hfill\hfill
\includegraphics[width=1\textwidth]{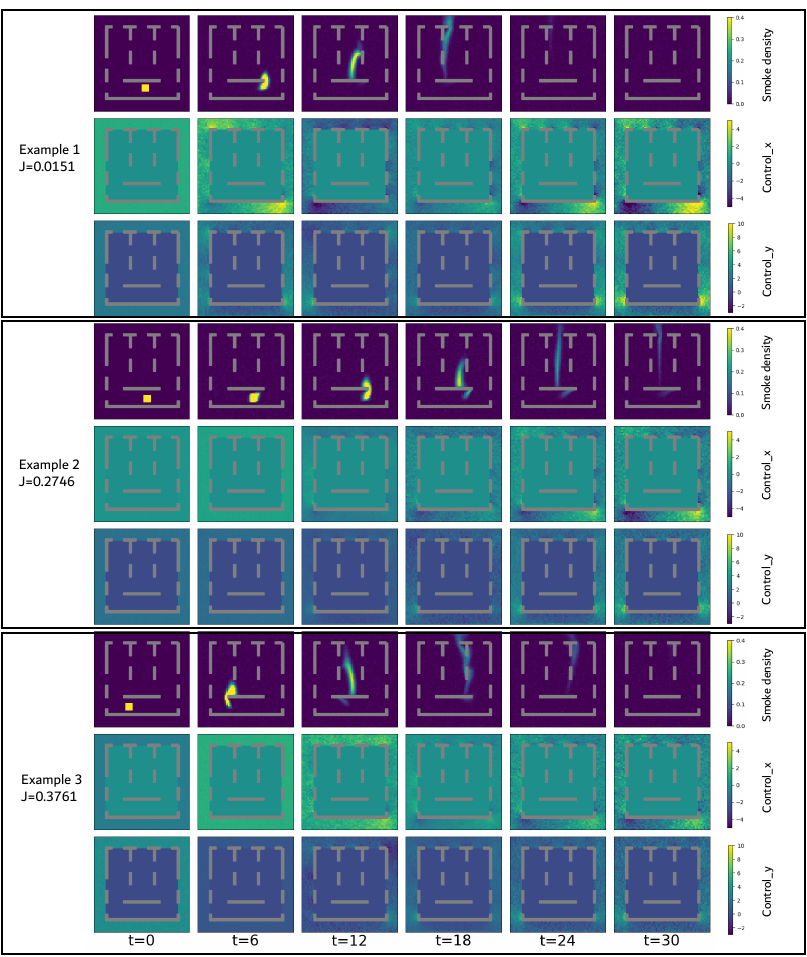}
\caption{\textbf{Examples of 2D smoke control results by our method}. We present three randomly selected test examples. For each example, we show
the generated smoke density map and control force fields in horizontal and vertical directions. Each
row depicts six frames of movement. The smoke density in the first row corresponds to that in Figure \ref{fig:2d_smoke_vis}.}
\label{fig:smoke_vis_our}
\vskip -0.2in
\end{figure}

\section{Additional Details for 1D Burgers' Equation Control}\label{app:1d_experiment}

\subsection{Data Generation}\label{app:1d_experiment_datagen}
We use the finite difference method (called solver or ground-truth solver in the following) to generate the training data for the 1D Burgers' equation. Specifically, the initial value $\u_0(x)$ and the control sequence $\w(t, x)$ are both randomly generated, and then the states $\u(t, x)$ are numerically computed using the solver. 

In the numerical simulation (using the ground-truth solver), a domain of $x\in [0,1],~t\in[0,1]$ is simulated. The space is discretized into $128$ grids and time into $10000$ steps. However, in the dataset, only 10 time stamps are stored. 
For the control sequence $\w$, its refreshing rate is $0.1^{-1}$, i.e., $\w(t,x),~t\in[0.1k, 0.1(k+1)],~k\in\{0,..,9\}$ does not change with $t$.
Therefore, the data size of each trajectory is $[11,128]$ for the state $\u$ and $[10,128]$ for the control $\w$.

In all settings, the initial value $\u(0,x)$ is a superposition of two Gaussian functions $\u(0,x)= \sum_{i=1}^{2} a_i e^{-\frac{(x-b_i)^2}{2\sigma_i^2}}$, where $a_i, b_i, \sigma_i$ are all randomly sampled from uniform distributions: $a_1\sim U(0, 2),~a_2\sim U(-2,0),~b_1\sim(0.2,0.4),~b_2\sim(0.6,0.8),~\sigma_1\sim U(0.05, 0.15),~\sigma_2\sim U(0.05, 0.15)$. Similarly, the control sequence $\w(x,t)$ is also a superposition of 8 Gaussian functions 
\begin{equation}
    \w(t,x)=\sum_{i=1}^{8} a_i e^{-\frac{(\x-b_{1,i})^2}{2\sigma_{1,i}^2}} e^{-\frac{(t-b_{2,i})^2}{2\sigma_{2,i}^2}}, 
\end{equation}
where each parameter is independently generated as follows: $b_{1,i}\sim U(0, 1),~b_{2,i}\sim U(0,1),~\sigma_{1,i}\sim U(0.05,0.2),~\sigma_{2,i}\sim U(0.05,0.2)$, while $a_1\sim U(-1.5,1.5)$ and for $i\ge 2$, $a_i\sim U(-1.5,1.5)$ or $0$ with equal probabilities. $\u(t,x),~(t\neq 0)$ is then numerically simulated (using the ground-truth solver) given $\u(0,x)$ and $\w(t,x)$ based on Eq. (\ref{eq:burgers}). The setting of the dataset generation is based on a previous work \cite{hwang2022solving}. We generated $90000$ trajectories for the training set and $50$ for the testing set. Each trajectory takes up $32$KB space and the size of the dataset sums up to $2$GB.

\subsection{Experimental Setting}
\label{app:1d_experiment_four_settings}
During inference, alongside the control sequence $\w(t,x)$, our diffusion model generates states $\mu(t,x)$, and some models produce surrogate states $\mu(t,x)$ when feeding the control $\w(t,x)$ into the corresponding surrogate model. However, our reported evaluation metric $\controlobj1d$ is always computed by feeding the control $\w(t,x)$ into the ground truth numerical solver to get $\u_{\text{g.t.}}(t,x)$ and computed following Eq. (\ref{eq:burgers_obj_J_actual}).
Followings are three different settings of our experiments.

\subsubsection{Partial Observation, Full Control}
In realistic scenarios, the system is often unable to be observed completely. Generally speaking, it is impractical to place sensors \textit{everywhere} in a system, so the ability of the model to learn from incomplete data is imperative. To evaluate this, we hide some parts of $\u$ in this setting and measure the $\controlobj1d$ of model control. 

Specifically, $\u(t, x),~x\in[\frac{1}{4},\frac{3}{4}]$ is set to zero in the dataset during training and $\u_0(x),~x\in[\frac{1}{4},\frac{3}{4}]$ is also set to zero during testing. 
In this partial observation setting $\Omega=[1,\frac{1}{4}]\cup[\frac{3}{4},1]$.
Since no information in the central $\frac{1}{2}$ space is ever known, the model does not know what will influence the control outcome of the unobserved states. Therefore, controlling the unobserved states is not a reasonable task and they are excluded from the evaluation metric.

This setting is particularly challenging not only because of the uncertainty introduced by the unobserved states but also the generation of the control in the central locations that implicitly affect the controlled $\u$ at $x\in\Omega$.

\subsubsection{Full Observation, Partial Control}
This is another setting of practical relevance, where only a fraction of the system can be controlled. The control sequence is enforced to be zero in the central locations of $x\in[\frac{1}{4},\frac{3}{4}]$. $\Omega$ is still $[0,1]$, and $\mathcal{J}$ is evaluated on all of the observed states, though.

Some modifications to the dataset should be mentioned. The generation of the data involves first generating $\w$ as before, followed by setting the central $\frac{1}{2}$ of $\w$ to zero. To compensate for the decreased control intensity so that the magnitude of $\u$ can be roughly comparable to the full control setting, we double the magnitude of $\w$. During the evaluation, the output control sequence is also post-processed to be zero in $x\in [\frac{1}{4},\frac{3}{4}]$.

It is worth noting that in this setting, even when the control energy is not limited at all, it is still challenging to find a perfect control since the model has to learn how to indirectly impose control on the central locations.

\subsubsection{Partial Observation, Partial Control}
The final setting is the combination of the previous two settings. Only $\Omega=[0,\frac{1}{4}]\cup[\frac{3}{4},1]$ is observed, controlled and evaluated. 

It is worth noting that some models require accessing the current state to produce output. If the model interacts with the ground truth solver instead of a surrogate model, then the result would be unfairly good since the information of the unobserved states is leaked through the interaction.

\subsection{Model}\label{app:1d_model_descriptions}
Since the training of models $\bepsilon_{\phi} \approx \nabla_\w \log p(\w)$ and $\bepsilon_{\theta} \approx \nabla_{\u, \w} \log p(\u,\w)$ are essentially the same and the latter model is exactly \proj-lite, we will introduce \proj-lite first.

\subsubsection{\proj-lite}\label{app:1dproj_model}
In general, \proj-lite follows the formulation of \cite{ho2020denoising} which is also described in the main text. The data of $\u$ and $\w$ is fed in as images of size $(N_t, N_x)$ where $N_t$ is the number of time steps (11 and 10 respectively) and $N_x$ is the spatial grids (128). Since the two $N_t$s for $\u$ and $\w$ are inconsistent, we zero-pad them into the size of $16$. Then, $\u$ and $\w$ are stacked as two channels and fed into the 2D DDPM model. 

A 2D UNet $\bepsilon_{\theta}$ is used to learn to predict $\bepsilon$. It is structured into three main components: the downsampling encoder, the central module, and the upsampling decoder.
The downsampling encoder is made up of four layers, each layer consisting of two ResNet blocks, one linear Attention block, and one downsampling convolution block. The central module also consists of two ResNet blocks and one linear Attention block. Each upsampling layer is the same as the downsampling layer except the downsampling block is replaced by the upsampling convolution block.

In our experiments, we found that the control result is best when learning the conditional probability distribution of $p(\w_{[0,T-1]},\u_{[1,T-1]}\mid \u_{0},\u_{T})$.
In summary, $\bepsilon_{\theta}$ takes in the current trajectory $\u$, control $\w$, step $k$, $\u_0$ and $\u_T$ as input, and predicts the noise of $\u$ and $\w$. Note that it is not trained to predict $\u_0$ and $\u_T$ which are used as a condition, but there are still model outputs at the corresponding locations for the data shape consistency across different design choices of \proj-lite.
The hyperparameters in different settings are listed in Table \ref{tab:diffusion_1d_single_model_hyperparameters}.

\subsubsection{\proj}

In terms of implementation, \proj is simply adding $\bepsilon_{\phi}(\w)$ to $\bepsilon_{\theta}(\u,\w)$ during inference as shown in Section \ref{sec:2ddpm}, where $\bepsilon_{\theta}$ is the output of the denoising network in \proj-lite while $\bepsilon_{\phi}$ is a new denoising network that is trained to generate $\w$ following the dataset distribution. Therefore, we only describe the model of $\bepsilon_{\phi}$ here.

$\bepsilon_{\phi}$ takes input of $\w, k$ as in the standard DDPM and $\u_0, \u_T$ as guidance conditioning. 
The output of $\bepsilon_{\phi}(\w)$ is of the same shape as $\w$, so it can be treated as a network learning to sample from $p(\w)\coloneqq \int p(\u,\w)\mathrm{d}\u$
The output of $\bepsilon_{\phi}(\w)$ at the locations of $\u$ is thus filled with zeros. The model hyperparameters are also listed in Table \ref{tab:diffusion_1d_single_model_hyperparameters}.

\begin{table}[ht]\small
  \begin{center}
    \caption{\textbf{Hyperparameters of the UNet architecture and training for the results of 1D Burgers' equation in Table \ref{tab:1d_main_table}}.}
     \label{tab:diffusion_1d_single_model_hyperparameters}
    \resizebox{1.0\linewidth}{!}{
    \begin{tabular}{l|l|l|l} 
    
    \hline
      \multirow{2}{*}{\text {Hyperparameter name}} &
      Full observation & Partial observation & Partial observation\\ &
      Partial Control & Full Control & Partial Control
      \\
      \hline
      \multicolumn{4}{c}{UNet $\bepsilon_{\phi}(\w)$}\\
      \hline
      Initial dimension & 32 & 32 & 32\\
      Downsampling/Upsampling layers &4&4&4\\
      Convolution kernel size &3&3&3\\
      Dimension multiplier &$[1,2,4,8]$&$[1,2,4,8]$&$[1,2,4,8]$\\
      Resnet block groups &8&8&8\\
      Attention hidden dimension &32&32&32\\
      Attention heads &4&4&4\\
      \hline
      \multicolumn{4}{c}{UNet $\bepsilon_{\theta}(\u,\w)$}\\
      \hline
      Initial dimension & 128 & 128 & 64\\
      Downsampling/Upsampling layers &4&4&4\\
      Convolution kernel size &3&3&3\\
      Dimension multiplier &$[1,2,4]$&$[1,2,4,8]$&$[1,2,4,8]$\\
      Resnet block groups &8&8&8\\
      Attention hidden dimension &32&32&32\\
      Attention heads &4&4&4\\
      \hline
      \multicolumn{4}{c}{Training}\\
      \hline
      Training batch size & 16 & 16& 16 \\
      Optimizer &Adam&Adam&Adam\\
      Learning rate &1e-4&1e-4&1e-4\\
      Training steps & 190000 & 170000 & 190000 \\
      Learning rate scheduler & cosine annealing & cosine annealing & cosine annealing\\
      \hline
      \multicolumn{4}{c}{Inference}\\
      \hline
      Sampling iterations &1000&1000&1000 \\ 
      Intensity of energy guidance $\J=\int \|\w\|^2\mathrm{d}x\mathrm{d}t$ & 0 & 0 & 0 \\
      Scheduler of energy guidance $\J=\int \|\w\|^2\mathrm{d}x\mathrm{d}t$ & cosine & cosine & cosine \\
      \hline
   \end{tabular}
   }
  \end{center}
\end{table}

\subsection{Training and Evaluation}

\paragraph{Training}
During training, the $\u_0$ and $\u_d$ without noise are fed into the model and the model outputs at the corresponding locations are excluded from the loss.
In the partial observation settings, the unobserved data is invisible to the model during both training and testing as introduced in Appendix \ref{app:1d_experiment_four_settings}. We simply pad zero in the corresponding locations of the model input and also exclude these locations in the training loss. Therefore, the model only learns the correlation between the observed states and control sequences.
In the partial control setting, we train our \proj on the dataset with control being zero in $x\in[\frac{1}{4},\frac{3}{4}]$. In this way, the model naturally learns to output zero at the ``non-controllable'' locations.

We use the MSE loss to train the denoising UNets and other training hyperparameters are listed in Table \ref{tab:diffusion_1d_single_model_hyperparameters}.

\paragraph{Inference}
During inference, $\u_{0}$ and $\u_{T}$ are set to the target $\u_0$ and $\u_d$ so that the DDPM generates samples satisfying the physical constraint that is also conditioned on the target ($\u_d$) or the constraint ($\u_0$).
In the partial observation setting, the $\u_0$ and $\u_T$ drawn from the testing set are all filled zero at the unobserved locations $x\in[\frac{1}{4},\frac{3}{4}]$, which is the same as the data used to train the UNets. In the partial control scenarios, 

During inference, we replace the denoising network's output $\bepsilon_{\theta}(\u,\w)$ with $\bepsilon_{\theta}(\u,\w) + (\gamma - 1)\bepsilon_{\phi}(\w)$. It is worth noting that $\bepsilon_{\theta}(\u,\w)$ denoises $u$ and $\w$ simultaneously while $\bepsilon_{\phi}(\w)$ only denoises $\w$. In our experiments, we found that adding a schedule to the output of the $\w$ network is beneficial. The results in Table \ref{tab:1d_main_table} are generated with a reverse Sigmoid schedule following
\begin{equation}\label{eq:app_1dburgers_twoddpm_noise_combined}
    \bepsilon_k = \bepsilon_{\theta}(\u_k,\w_k,k,\u_0, \u_T) + \frac{}{} (\gamma - 1) \beta_{K-k} \bepsilon_{\phi}(\w_k,k,\u_0,\u_T),
\end{equation}
where $\beta$ is defined as the noise schedule in \cite{ho2020denoising}.
The inference of \proj-lite is simply setting $\gamma=1$, which neglects the effect of the model $\bepsilon_\phi(\w)$.

When trying to regulate the control energy, however, it is not as natural to learn a conditional model. Therefore, we use the external guidance $\J=\int \w(x,t)\mathrm{d}x\mathrm{d}t$ to produce cost-limited control sequences that are shown in Figure \ref{fig:pareto1d}. The gradient of the external guidance is computed and added to $\bepsilon_k$ in Eq. \eqref{eq:app_1dburgers_twoddpm_noise_combined}. Note that we use the predicted clean sample $\hat \w_k$ and $\hat \u_k$ at the $k$-th step to compute $\nabla \J$ since they suffer less from being noisy and leading to deviated guidance. 
$\u_k$ and $\w_k$ are computed following
$
    \x_0 \approx \hat \x_k = \frac{1}{\sqrt{\bar \alpha_k}}\x_k - \frac{\sqrt{1 - \bar \alpha_k}}{\sqrt{\bar\alpha_k}}\bepsilon_k
$
where $\x$ represents $\u$ or $\w$ as
in \cite{ho2020denoising}. In our experiments, we use a cosine scheduling of the external guidance, and thus the final predicted noise would be $\bepsilon_k + \lambda \alpha_k \nabla \J$ where $\beta_k$ is the noise schedule in \cite{ho2020denoising} with the cosine schedule.

\paragraph{Evaluation}
After generating the trajectory $\u$ and control sequence $\w$, we feed the control sequence $\w$ into the ground-truth solver and simulate the final state $\u$ given the generated $\w$ and the initial condition $\u_0$ directly drawn from the testing dataset. The solver is the same as the one used in data generation in Appendix \ref{app:1d_experiment_datagen}. Finally, we compute $\controlobj1d$ following Eq. \eqref{eq:burgers_obj_J_actual}. In the partial observation setting, the MSE is computed only on the observed region, and in the control setting, the generated control will first be set to zero in the uncontrolled region before being fed into the solver.

\section{Jellyfish Movement Dataset}\label{app:2d_dataset_details}
We use the Lily-Pad simulator \cite{weymouth2015lily} to generate the Jellyfish Movement dataset, which serves as a benchmark for physical system control research and also the dataset for our 2D evaluation task. Lily-Pad adopts the Immersed Boundary Method (IBM) \cite{mittal2005immersed} to simulate fluid-solid dynamics.
The resolution of the 2D flow field is set to be $128\times128$. The flow field is assumed to be boundless in Lily-Pad.
The head of the jellyfish is fixed at $(25.6,64)$. Its two wings are represented by two identical ellipses, where the ratio between the shorter axis and the longer axis is 0.15. At each moment, the two wings are symmetric about the central horizontal line $y=64$. For each wing, we sample $M=20$ points along the wing to represent the boundary of the wing. The opening angle of the wings is defined as the angle between the longer axis of the upper wing and the horizontal line. It acts as the control sequence $\w$ in a 2D jellyfish control experiment.

Each trajectory starts from the largest opening angle and follows a cosine curve periodically with period $T^{'} =200$. Trajectories differ in initial angle, angle amplitude, and phase ratio $\tau$ (the ratio between the closing duration and a whole pitching duration). For each trajectory, the initial angle $\w_0$ is generated as follows: first, sample a random angle, called mean angle  $\w^{(m)}\in[20^\circ, 40^\circ]$, then sample a random angle amplitude $\w^{(a)}\in[10^\circ, \min(\w^{(m)}, 60^\circ-\w^{(m)})]$. The initial $\w_0$ is set as $\w_0=\w^{(m)} + \w^{(a)}$. The phase ratio $\tau$ is randomly sampled from $[0.2,0.8]$. The opening angle $\w_t$ of step $t$ decreases from $\w^{(m)}+\w^{(a)}$ to $\w^{(m)}-\w^{(a)}$ as $t$ grows from 0 to $\tau T^{'}$; then $\w_t$ increases from $\w^{(m)}-\w^{(a)}$ to $\w^{(m)}+\w^{(a)}$ as $t$ grows from $\tau T^{'}$ to $T^{'}$. Afterwards, $\w_t$ varies periodically for $t > T^{'}$. The range of $\w_t$ is $[\w^{(m)}-\w^{(a)}, \w^{(m)}+\w^{(a)}] \subset [10^\circ, 60^\circ]$.
For each trajectory, we simulate for $600$ simulation steps, i.e., 3 periods. To save space, we only save the piece of trajectory from $T^{'}=200$ to $3T^{'}=600$ steps with step size $10$ because the simulation from $t=0$ to $T^{'}=200$ is for initialization of the flow field. Then each trajectory is saved as a $\tilde{T}=(600-200)/10=40$ steps long sequence. An example of the simulated fluid field and the corresponding curve of opening angles are shown in Figure \ref{fig:lilypad}. 

Besides the positions of the boundary points of wings and the opening angles $\w$, we also use another kind of image-like representation of the boundaries of wings as this representation contains spatial information that can be more effectively learned along with physical states (fluid field) by convolution neural networks. For each trajectory, this image-like boundary representation is compatible with physical states in shape. At each time step, boundaries of two wings are merged and then represented as a tensor of shape [3, 64, 64], where it has three features for each grid cell: a binary mask indicating whether the cell is inside a boundary (denoted by 1) or in the fluid (denoted by 0), and a relative position $(\Delta \x, \Delta y)$ between the
cell center to the closest point on the boundary. 
For each trajectory, we save system states, opening angles, boundary points, boundary masks and offsets, and force data. They are specified as:

\begin{itemize}
    \item system states $\u$: shape $[\tilde{T}, 3, 64, 64]$. For each step, we save the states of the fluid field consisting of velocity in $\x$ and $y$ directions and pressure. To save space, we downsample the resolution from $128\times128$ to $64\times64$.
    \begin{itemize}
        \item velocity: $[\tilde{T}, 2, 64, 64]$.
        \item pressure: $[\tilde{T}, 1, 64, 64]$.
    \end{itemize}
    \item opening angles $\w$: shape $[\tilde{T}]$. For each step, we save the opening angle in radians. 
    \item boundary points: shape $[\tilde{T}, 2, M, 2]$. For each step, we save the boundary points on the upper and lower wings. Each wing consists of $M=20$ points and each point consists of 2 coordinates. To make boundary points compatible with the downsampling of states, coordinates of $\x$ and $y$ directions are shrunk to half ($64/128$) of the original values.
    \item boundary mask and offsets $b$: $[\tilde{T}, 3, 64, 64]$. For each step, we save the mask of merged wings with half coordinates of boundary points and offsets in both $\x$ and $y$ directions. The resolution is $64\times64$, compatible with that of the states.
    \begin{itemize}
        \item mask: $[\tilde{T}, 1, 64, 64]$.
        \item offsets: $[\tilde{T}, 2, 64, 64]$.
    \end{itemize}
    \item force: shape $[\tilde{T}, 2]$. For each step, the simulator outputs the horizontal and vertical force from the fluid to the jellyfish. The horizontal force is regarded as a thrust to jellyfish if positive and a drag otherwise.
\end{itemize}

We generate $n=30,000$ training trajectories and $n=200$ testing trajectories. Trajectories differ in the above specified parameters $\w^{(a)}, \w^{(m)}$ and $\tau$. Each trajectory occupies about 2MB of storage and the total dataset costs about 100GB. 

\section{Additional Details for 2D Jellyfish Movement Control}
\label{app:2d_experiment}

\subsection{Dataset Preparation}
Based on our generated dataset in Appendix \ref{app:2d_dataset_details}, we prepare training samples for the 2D jellyfish movement control task as follows. We use sliding time windows that contain $T=20$ successive time steps of states and boundaries as a sample, which corresponds to $T^{'} =200$ original simulation steps and constitutes exactly a period of wing movement. In this way, each trajectory can produce 20 samples. Therefore, we get 6 million training samples in total. In each training sample, the initial and the final time steps share the same opening angle due to periodicity, which serves as the conditions for control. For each test trajectory, we select the opening angle of the jellyfish in the initial time and the initial states as the control condition for both the initial and final time and state initial condition. 

\begin{figure}[t]
\begin{center}
    \includegraphics[width=0.7\linewidth]{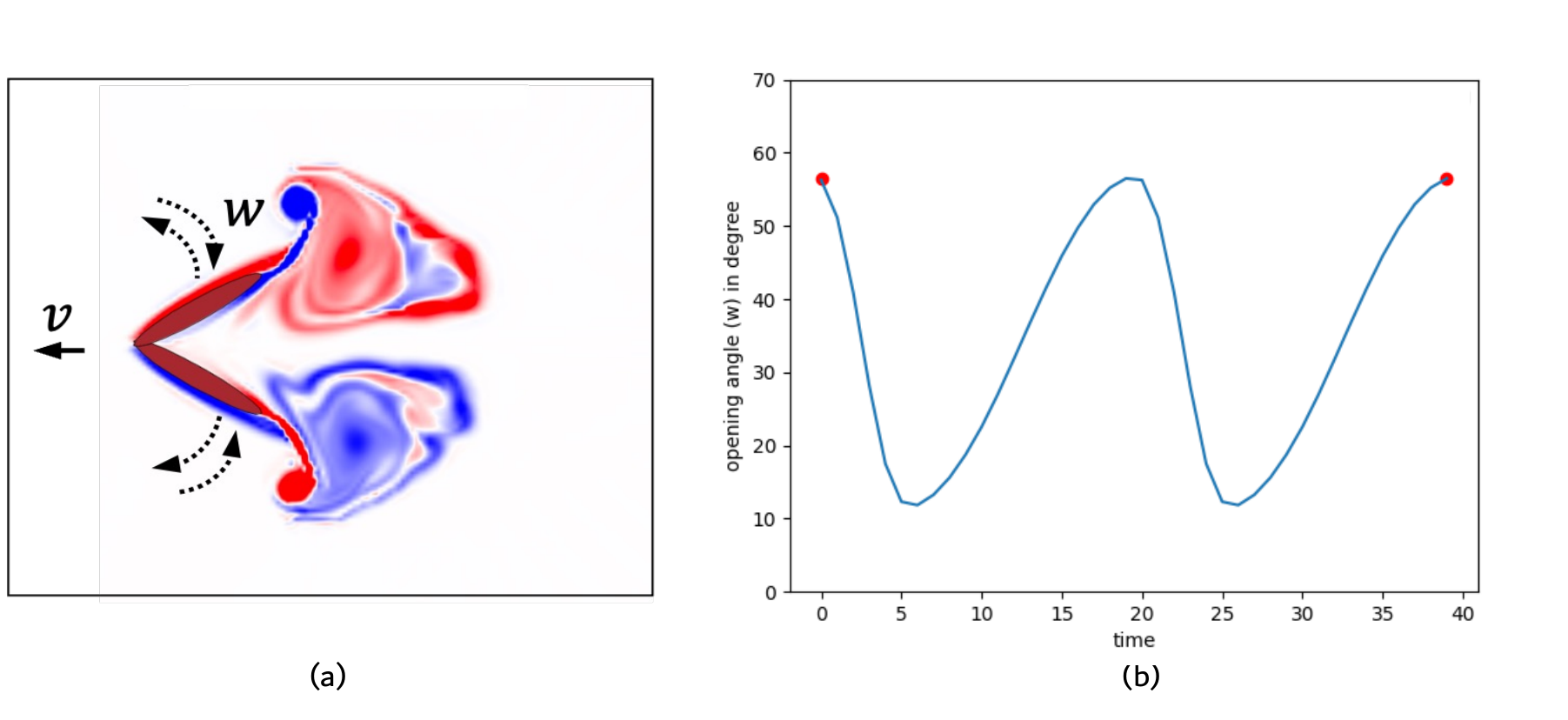}
\end{center}
\vspace{-10pt}
\caption{\textbf{Example of a flapping jellyfish in Lily-Pad simulator (a) and the corresponding curve of opening angle (b)}. The control signal is the opening angles $\w$ of the wings of a jellyfish.}
\vspace{-15pt}
\label{fig:lilypad}
\end{figure}

\subsection{Experimental Setting}

\subsubsection{Full Observation}
In this setting, we assume all the states of the fluid field are observable. That is, both the velocity of $x$ and $y$ directions and pressure are available in all the time steps of the training dataset and the initial time of the testing dataset. 
\subsubsection{Partial Observation}
In this setting, we assume only partial states are observed. A typical scenario in fluid simulation and control is that we can only observe pressure data while the velocity data is not easy to access. That is, only pressure is available in all the time steps of the training samples and the initial time of the testing samples, hence the state tensor is of shape $[\tilde{T}, 1, 64, 64]$. Notice that even if only pressure is available, we can still compute the force of fluid on the jellyfish and consequently the control objective because force is fully determined by the shape of the jellyfish and pressure. The challenge of this partial observation setting is that the velocity variable $v$ is missing in Eq. (\ref{eq:ns_eq}), which makes the traditional numerical solver no longer applicable to solve this physical system control problem. However, this challenge could be well addressed by our method since it could learn the relationship between control and pressure despite missing of the velocity data, and use the accessible control objective as guidance for flapping control.

\subsection{Model}
\subsubsection{Architecture}
We use a 3D U-Net as the backbone of our diffusion model, in both \proj-lite and \proj methods (detailed in the following subsection). 
In this paper, the architecture of the 3D U-Net we employed is inspired by \cite{ho2022video}. To better capture temporal conditional dependencies, we modify the previous space-only 3D convolution into space-time 3D convolut ion. Notably, we did not perform any scaling on the temporal dimension during downsampling or upsampling. Specifically, our U-Net consists of three main modules: the downsampling encoder, the middle module, and the upsampling decoder. 
The downsampling encoder is composed of three layers, each incorporating two residual modules, one spatial attention module, one temporal attention module, and one downsampling module.
The middle module consists of two residual modules, one spatial attention module, and one temporal attention module. Meanwhile, the upsampling decoder consists of four layers, each containing two residual modules, one spatial attention module, one temporal attention module, and one upsampling module.
The input shape of our U-Net is [batch size, frames, channels, height, width]. During convolution, the operation is performed on the [frames, height, width] dimensions. The output shape follows the same structure. Further details are provided in Table \ref{tab:3d-Unet}.

\begin{table}[ht]
  \begin{center}
    \caption{\textbf{Hyperparameters of 3D-Unet architecture}.}
     \label{tab:3d-Unet}
    \begin{tabular}{l|l} 
    \multicolumn{2}{l}{}\\
    \hline
      \text {Hyperparameter name} & {Value}\\
      \hline
      Kernel size of conv3d & (3, 3, 3)   \\
      Padding of conv3d & (1,1,1)  \\
      Stride of conv3d & (1,1,1)  \\
      Kernel size of downsampling & (1, 4, 4)   \\
      Padding of downsampling & (1, 2, 2)  \\
      Stride of downsampling &  (0, 1, 1)  \\
      Kernel size of upsampling & (1, 4, 4)   \\
      Padding of upsampling & (1, 2, 2)  \\
      Stride of upsampling &  (0, 1, 1)  \\
      attention heads & 4 \\
      \hline
   \end{tabular}
  \end{center}
\end{table}

\subsubsection{\proj-lite}
The \proj-lite method learns the denoising network of the joint distribution $p(\u,\w|\mathbf{c})$ where $\u$ is physical states, $\w$ is the opening angle, and the conditions $\c$ consist of the initial angle $\w_0$, the initial state $\u_0$ and the final angle $\w_T=\w_0$. We adopt the 3D U-Net as the backbone. To make the opening angle (of shape $[T]$), align with physical states (of shape $[T, 3, 64, 64]$ in full observation setting and $[T, 1, 64, 64]$ in partial observation setting) in shape, we expand the opening angle to shape $[T, 1, 64, 64]$ along spatial dimension by value copy. Besides, we also adopt the boundary mask and offsets representation, whose shape is $[T, 3, 64, 64]$, determined by the opening angles as an auxiliary model input because they contain explicit spatial features, which makes model learning more effective. Then states, boundary mask and offsets, and expanded opening angle are stacked along the channel dimension and we get a tensor of shape $[T, 7, 64, 64]$ in full observation setting or $[T, 5, 64, 64]$ in partial observation setting as the model input. 
The model output contains predicted noise
of states and open angles. Thus its shape is $[T, 4, 64, 64]$ in the full observation setting or $[T, 2, 64, 64]$ in the partial observation setting, where the last channel corresponds to the predicted noise of opening angles and other channels correspond to predicted noise of states. 

\begin{figure*}[t]
\vspace{15pt}  
\begin{center}
\centering
    \includegraphics[scale=0.5]{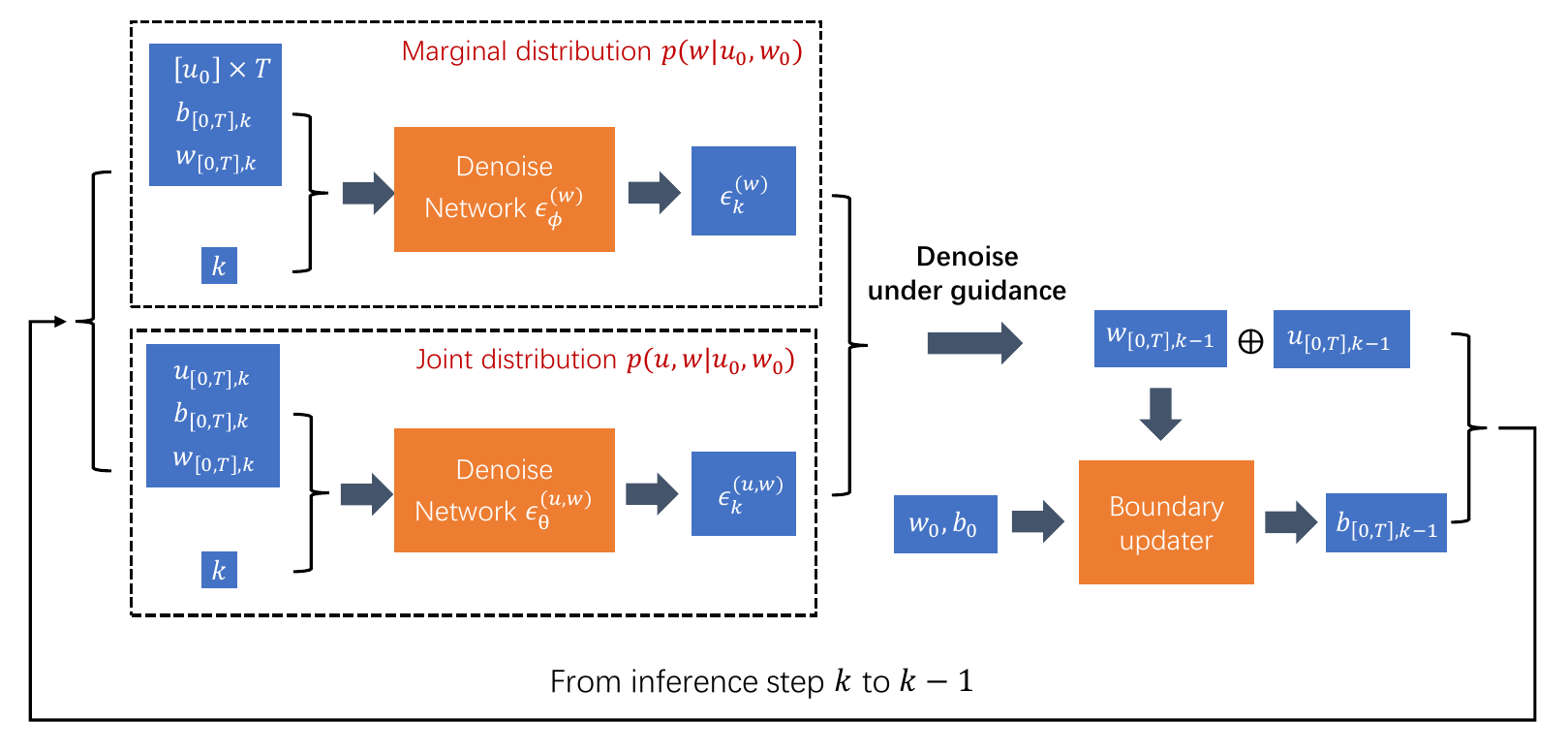}
\end{center}
\caption{\textbf{Inference of our \proj-lite and \proj in the 2D experiment.}}
\label{fig:fig_appendix_2D_inference}
\end{figure*}

\begin{table}[ht]
  \begin{center}
    \caption{\textbf{Hyperparameters of network architecture and training for the 2D experiment}.}
     \label{tab:2D_psi_arch}
    \begin{tabular}{l|l|l} 
    \multicolumn{3}{l}{}\\
    \hline
      \text {Hyperparameter name} & {full observation}  &{partial observation}\\
      \hline
      Batch size &16 &16\\
      Optimizer &Adam &Adam\\
      Initial learning rate &0.001&0.001\\
      Loss function &MSE&MSE\\
      \hline
   \end{tabular}
  \end{center}
\end{table}

\subsubsection{\proj}
\proj learns the denoising network of the joint distribution $p(\u,\w|\mathbf{c})$ and the marginal distribution $p(\w|\mathbf{c})$. The denoising network of $p(\u,\w|\mathbf{c})$ is exactly the same as the one introduced in the \proj-lite method. The denoising network of $p(\w|\mathbf{c})$ also adopts the 3D U-Net architecture. Its input size is the same as that of $p(\u,\w|\mathbf{c})$ in both full and partial observation settings. The difference is that the input states feature is replaced by the expansion of the initial state $\u_0$ along the time dimension by value copy. The output is the predicted noise of opening angles, whose shape is $[T, 1, 64, 64]$, no matter the full observation setting or partial observation setting.

\subsection{Training, Inference, and Evaluation}
\textbf{Training.} We use the MSE (mean squared error) between model prediction and the Gaussian noise as the loss function Eq. (\ref{eq:training_obj}). The batch size is chosen as 16 and the training involves 200,000 iterations. The learning rate starts from $1\times 10^{-3}$ and multiplies a factor of 0.1 at the 50000th and 150000th iterations.
Training details are provided in Table \ref{tab:2D_psi_arch}. The training is performed on two NVIDIA Tesla A100 GPUs with 80 GB memory for about 3 days.

\textbf{Inference.} The pipeline of inference is shown in Figure \ref{fig:fig_appendix_2D_inference}. Both diffused variables $\u_{[0,T]}$ and $\w_{[0,T]}$ are initialized from Gaussian prior and gradually denoised from denoising step $k=1000$ to $k=0$ based on denoising networks and guidance.
Because we introduce the boundary mask and offsets as auxiliary inputs, the model input and output are not consistent in shape. Thus we introduce a surrogate model (shown as "Boundary updater" block in Figure \ref{fig:fig_appendix_2D_inference}) to update boundary mask and offsets $b_{[0,T],k}$ for each denoising $k$. Specifically, at each time step $t\in[0,T]$, $b_{t,k}$ is estimated by the initial boundary mask and offsets $b_0$, and the difference of opening angle $\w_{t,k}-\w_0$ from time step $0$ to $t$, which is presented in the right part (after "Denoise under guidance") of Figure \ref{fig:fig_appendix_2D_inference}. Details about this surrogate model are presented in \ref{app:surrogate_bd_updater}. Notice that although this surrogate model is trained on noise-free data, we do not worry too much about its generalization to the noisy scalar $\w_{t,k}$ in inference because the estimated $\w_{t,k}$ does not deviate from the normalized range of noisy free $\w_t$ too much.

Our method introduces two kinds of inference: \proj-lite and \proj. In \proj-lite, we only use the denoising network of the joint distribution $p(\u,\w|\u_0,\w_0)$ for inference, while in \proj, we use the additional denoising network of the marginal distribution $p(\w|\u_0,\w_0)$ together with that of the joint distribution for inference. These two branches are plotted in the left part of Figure \ref{fig:fig_appendix_2D_inference}, where the notation $[\u_0]\times T$ means expand initial state $\u_0$ (of shape $[3,64,64]$) along time dimension by value copy to form a tensor of shape $[T,3,64,64]$.

As for guidance, we use a surrogate force model to approximate the force of fluid on jellyfish. This model is detailed in Subsection \ref{app:surrogate_force_model}. In denoising step $k$, its input consists of two parts: the first one is the noise-free state $\hat{\u}_{[0,T]}$ estimated from $\u_{[0,T],k}$ by Eq. (\ref{eq:estimate_x0}); the second one is the noise-free boundary mask and offsets $\hat{b}_{[0,T],k}$ estimated from noise-free $\hat{\w}_{[0,T]}$ by the surrogate model to update boundaries, where $\hat{\w}_{[0,T]}$ is also estimated from $\w_{[0,T],k}$ as in Eq. (\ref{eq:estimate_x0}). The model output is force. Here we only use the horizontal force. Notice that the force could be computed via the surrogate force model no matter whether states are fully or partial observation in that force is irrelevant to the velocity of the fluid. The control objective $\J$ in Eq. \eqref{eq:jellyfish_obj} is computed as a summation of force and $R(\hat{\w}_{[0,T]})$. We fix $\zeta=1000$ as a default setting in Eq. (\ref{eq:jellyfish_obj}) because this value can achieve a balance between scales of the average speed and the regularizer $R(\w)$. We also study the Pareto performance of varying $\zeta$ in Table \ref{tab:2D_results}.
Then the gradients of the objective $\J$ in terms of $\hat{\u}_{[0,T]}$ and $\hat{\w}_{[0,T]}$ are computed and used in guidance. For \proj-lite, these gradients are substracted from $[\u_{[0,T],k}, \w_{[0,T],k}]$ to generate $[\u_{[0,T],k-1}, \w_{[0,T],k-1}]$. For \proj, an additional term of noise $(\gamma-1)\bepsilon_{\phi}$ predicted from the denoising network of the marginal distribution $p(\w|\u_0,\w_0)$ should also be subtracted, as shown in the upper left part of Figure \ref{fig:fig_appendix_2D_inference}. The effect of this term is controlled by the scale of the hyperparameter $\gamma$, as studied in Appendix \ref{app:hyperparams}. The inference is performed on one NVIDIA Tesla-V100 GPU with 32 GB memory for about 3 hours for 50 testing samples.

\textbf{Evaluation.}
The inference outputs opening angles $\w_{[0,T]}$ of $T=20$ steps for 50 testing samples. In simulation, for each testing sample, the ground-truth first $T=20$ steps of the opening angles (which corresponds to 200 simulation steps) are directly input to the Lily-Pad simulator for the reason of generating initial states $u_0$ of fluid, which is followed by the predicted control sequences of opening angles (interpolated to 200 steps of opening angles). The simulator outputs the horizontal force of fluid on the jellyfish for each simulation step. Finally, average speed $\bar{v}$, energy cost $R(\w)$, and objective $\J$ are computed as metrics. The average speed
$\bar{v}=\frac{1}{T}\int_{0}^{T}v_t\mathrm{d}t\approx v_0+\frac{1}{T}\sum_{t=1}^{T-1}(T-t)F_t$ 
$\bar{v}$, where $v_0$ is the initial speed and $F_t$ is the horizontal thrust from the fluid. The mass of the jellyfish is assumed to be 1. The energy cost term $R(\w)=\sum_{t=1}^{T-1}(\w_{t+1}-\w_{t})^2$, where the control sequence $\w=(\w_1,\cdots,\w_T)$ represents the predicted opening angles.
The periodic term $d(\w_{0}, \w_{T})=\max(|\w_{T}-\w_{0}|-\bepsilon,0)$ is the constraint of periodic opening angles with a small threshold $\epsilon=0.01$.

\section{Additional Details for 2D Smoke Control}
\label{app:smoke_experiment}

\subsection{Dataset Preparation}

\begin{figure}[t]
\begin{center}
    \includegraphics[width=0.9\linewidth]{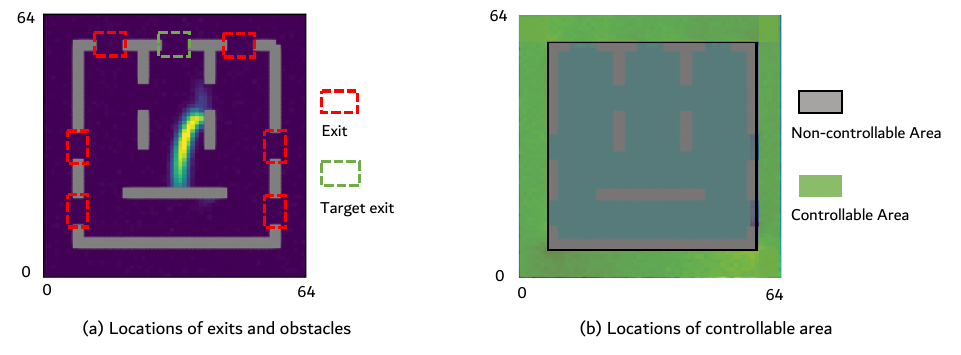}
\end{center}
\caption{\textbf{2D smoke indirect control task}. There are seven exits in total and the top middle one is the target exit (a). The control signals are only allowed to apply to peripheral regions (b). The control objective is to minimize the proportion of smoke failing to pass through the target exit.}
\label{fig:smoke}
\end{figure}

\begin{table}[t]
    \centering
    \begin{minipage}[t]{0.45\linewidth}
        \centering
        \caption{\textbf{Obstacle Positions}}
        \label{tab:obstacle_positions}
        \begin{tabular}{c|c}
            \hline
            \toprule
            Category & Position \\
            \midrule
            \multirow{1}{*}{Bottom} & (8:56,8:9)  \\
            \midrule
            \multirow{3}{*}{Left} & (8:9,8:12) \\
            & (8:9,20:28) \\
            & (8:9,36:56)  \\
            \midrule
            \multirow{3}{*}{Right} & (56:57,8:12) \\
            & (56:57,20:28)  \\
            & (56:57,36:56) \\
            \midrule
            \multirow{4}{*}{Up} & (8:12,56:57)  \\
            & (20:28,56:57)  \\
            & (36:44,56:57) \\
            & (52:56,56:57) \\
            \midrule
            \multirow{5}{*}{Inside Obstacles} & (24:25,32:40)  \\
            & (24:25,48:56)  \\
            & (40:41,32:40) \\
            & (40:41,48:56) \\
            & (20:44, 20:21) \\
            \bottomrule
        \end{tabular}
    \end{minipage}
    \hspace{0.05\linewidth}
    \begin{minipage}[t]{0.45\linewidth}
        \centering
        \caption{\textbf{Absorb Area}}
        \label{tab:absorb_positions}
        \begin{tabular}{c|c}
            \hline
            \toprule
            Category &  Absorb Area \\
            \midrule
            \multirow{2}{*}{Left} & (0:8, 11:21)  \\
            & (0:8, 27:37)  \\
            \midrule
            \multirow{2}{*}{Right} & (56:64, 11:21) \\
            & (56:64, 27:37)  \\
            \midrule
            \multirow{3}{*}{Up} & (11:21, 56:64)  \\
            & (27:37, 56:64) \\
            & (43:53, 56:64)  \\
            \bottomrule
        \end{tabular}
    \end{minipage}
\end{table}

We use the Phiflow solver \cite{holl2020learning} to generate the incompressible fluid with an indirect control dataset.
The resolution of the 2D flow field is set to be 64 $\times$ 64. The flow field is set to be boundless in Phiflow. We placed obstacles and absorbing areas in the middle of the fluid domain. In Figure \ref{fig:smoke}, we illustrate the locations of the exits, obstacles, and controllable areas. The specific positions of the obstacles and absorbing areas are detailed in Table \ref{tab:obstacle_positions} \& Table \ref{tab:absorb_positions}. 

At the beginning of each trajectory, we set the horizontal velocity component $v_x$ of the entire flow field to zero and the vertical component $v_y$ to an upward velocity of 1.0. Additionally, we randomly initialize a square smoke patch with dimensions of $4 \times 4$ in the area below the horizontal obstacles, specifically within the coordinates (11:50, 11:14). In the trajectories we designed, the smoke is required to make four turns. We randomly determine the positions where the smoke will turn, and calculate the smoke's horizontal ($v_x$) and vertical ($v_y$) velocity components, assuming a constant magnitude of velocity. By adjusting the parameters, we observed that when the $v_x$ added at the turning points is sampled from a Gaussian distribution $\mathcal{N}(mv_x, m^2v_x^2/16)$ and $v_y$ is sampled from a Gaussian distribution $\mathcal{N}(6v_y, 9v_y^2/4)$, where is $m$ sampled from a uniform distribution $U(2, 7)$, the success rate of the smoke navigating through is approximately 0.5387\% in the training set and 0.5286\% in the testing set, which meets our requirements. During the motion of the smoke, at turning points, the velocity in the central region is set to the velocity from the previous moment, while the velocity in the surrounding regions is assigned by randomly sampling from the distributions $\mathcal{N}(v_{turn}, {v_{turn}}^2/100)$. At non-turning points, the velocity in the central region remains the velocity from the previous moment, but the velocity in the surrounding areas is adjusted to the previous velocity plus noise sampled from $\mathcal{N}(0, 0.01)$. This procedure ensures that each control parameter varies at every moment and remains distinct from others at any given time. Such variability is beneficial for the subsequent training and generation of control parameters. 

We duplicated the density field into two versions: the original density field and the set-zero density field. The original density field is used for model training and remains unaffected by the absorbing areas. In contrast, the set-zero density field is employed to calculate the amount of smoke passing through each bucket. When set-zero density is present in the absorb areas, we sum up and record this density. Once the recording is complete, we reset the set-zero density in the absorb areas to zero to prevent the double-counting of emitted smoke. Ultimately, we document the quantity of smoke emitted from each bucket at every moment.

Our experimental data records the velocity and density of the entire flow field at each moment, as well as the velocity of the peripheral flow field (control field) after noise addition. 
We generated 40,000 training trajectories and 500 test trajectories, with each trajectory approximately 10 MB in size. Given a total of 40,500 trajectories, the entire dataset is estimated to be around 400 GB in size.

\subsection{Experimental Setting}
In smoke control, we only consider the full observation and full control setting. Namely, all fluid features, including smoke density, horizontal and vertical velocities are observable; and all green areas shown in Figure \ref{fig:smoke} are controllable. 

\subsection{Model}
\subsubsection{Architecture}
Similar to 2D jellyfish control, we use a 3D U-Net as the backbone of our diffusion model, in both \proj-lite and \proj methods (detailed in the following subsections). Details of the architecture are same to Table \ref{tab:3d-Unet}.

\subsubsection{\proj-lite}
Except for three fluid features, we use the accumulated proportion of smoke passing through the target exit as another feature. To make this feature (of shape $[T]$), align with fluid features (of shape $[T, 3, 64, 64]$) in shape, we expand it to shape $[T, 1, 64, 64]$ along spatial dimension by value copy. The horizontal and vertical control signals are represented by a tensor of shape $[T, 2, 64, 64]$, where uncontrollable positions are filled with zeros. Then these four state features and control signals are stacked along the channel dimension and we get a tensor of shape $[T, 6, 64, 64]$ as the model input. The model output contains predicted noise of the state features and control signals. Thus its shape is also $[T, 6, 64, 64]$. 

\subsubsection{\proj}
The denoising network of $p(\w|\mathbf{c})$ also adopts the 3D U-Net architecture. The input and output size is $[T, 2, 64, 64]$.

\subsection{Training, Inference, and Evaluation}
\textbf{Training.} The batch size is chosen as 6. The number of iterations and learning rate schedule are same to 2D jellyfish control. The training is performed on two NVIDIA Tesla A6000 GPUs with 48 GB memory for about 2 days.

\textbf{Inference.} The pipeline of inference is similar to  2D jellyfish control as shown in Figure \ref{fig:fig_appendix_2D_inference}. The difference is that we do not involve any surrogate model in this task. In each sampling step, we use the estimated proportion of smoke failing to exit through the target exit as guidance. 
The inference is performed on one NVIDIA Tesla A6000 GPU with 48 GB memory for about 3 hours for 50 testing samples.

\textbf{Evaluation.}
The inference outputs the control force $\w_{[0,T]}$ of $T=64$ steps for 50 testing samples. In simulation, for each testing sample, the initial state and the generated control signals are input to the simulator. The simulator outputs the proportion of smoke failing to exit through the target exit for the final time step. Then the proportions are averaged over 50 test samples. 

\section{1D Burgers’ Equation Control Baselines} \label{app:baseline1d}
\subsection{PID}\label{subsec:PID}
\begin{figure}[htpb]
  \centering
  \includegraphics[width=0.8\linewidth]{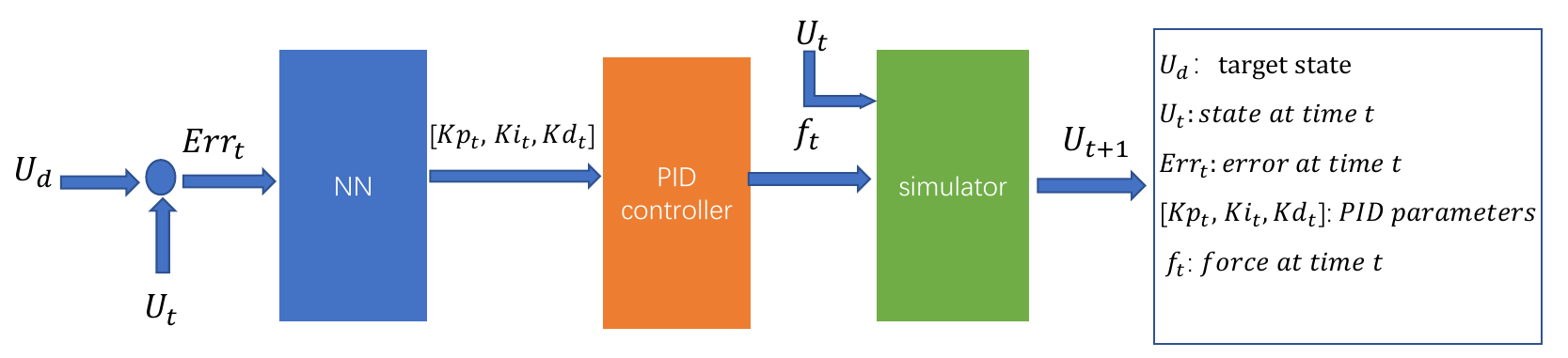}
  \caption{\textbf{The architecture of ANN-PID.} To use MIMO PID controller to control $U_t$ to $U_d$, we train a neural-network-based PID parameter planner to output MIMO PID parameters based on $Err_t$, then use the PID controller to output the control sequence $f_t$.} \label{fig:ANN_PI}
\end{figure}
Proportional Integral Derivative (PID) \cite{1580152} control is a versatile and effective control method widely used in various real-world control scenarios. It operates by utilizing the difference (error) between the desired target and the current state of a system. PID control is often considered the go-to option for many control problems due to its simplicity and usefulness. However, despite its popularity, PID control does encounter certain challenges, such as parameter adaptation and limitations when applied to Single Input Single Output (SISO) systems. In our specific context, the 1D Burgers' Equation Control problem presents a Multiple Input Multiple Output (MIMO) control scenario, which makes it infeasible to directly employ PID control to regulate the Burgers' equation. Inspired by the early works \cite{slama2019neural,9970581} using a neural network as a PID parameter adapter, we have integrated deep learning with PID control to tackle the MIMO control problem. As shown in Figure \ref{fig:ANN_PI}, ANN(artificial neural network) PID uses a neural network as a PID parameter adapter to output multiple sets of PID parameters and do multiple sets of SISO PID control.

The neural network to output PID parameters comprises two 1D convolutional layers, 2 fully connected layers, and 4 corresponding activation layers. We use the $L1$ loss of 
 the current state and target state as training loss and the Adam optimizer \cite{kingma2014adam} to train the model. Detailed information can be found in Table \ref{tab:ANN_PID_architecture}.

\begin{table}[ht]
  \begin{center}
    \caption{\textbf{Hyperparameters of network architecture and training for ANN-PID}.}
     \label{tab:ANN_PID_architecture}
    \begin{tabular}{l|l|l} 
    \multicolumn{3}{l}{}\\
    \hline
      \text {Hyperparameter name} & {Full observation}  &{Partial observation}\\
      \hline
      Kernel size of conv1d & 3 &3  \\
      Padding of conv1d & 1 & 1 \\
      Stride of conv1d & 1 & 1 \\
      Activation function & Softsign&Softsign\\
      Batch size &16 &16\\
      Optimizer &Adam &Adam\\
      Learning rate &0.0001&0.0001\\
      Loss function &MAE&MAE\\
      \hline
   \end{tabular}
  \end{center}
\end{table}

As PID itself is a SISO control method, ANN-PID uses a neural network to get multiple sets of PID parameters to do multiple SISO PID controls for MIMO control in the Burgers' equation. But here, ANN-PID requires the dimensions of inputs and outputs to be the same, so it can only cope with full observation, full control control problems, and partial observation, partial control control problems.
Besides, the ANN-PID controller has 2 training setups, including directly interacting with the solver and interacting with the 1D surrogate model in Appendix \ref{app:surrogate_model}.

\subsection{SAC}
\label{app:SAC1d}

\begin{table}[t]
  \begin{center}
    \caption{\textbf{Hyperparameters of 1D SAC.} The full observation partial control, partial observation full control, and partial observation partial control settings share the same hyperparameters.}
     \label{tab:SAC1d}
    \begin{tabular}{l|l} 
    \multicolumn{2}{l}{}\\
    \hline
      \text {Hyperparameter name} & {Value}  \\
      \hline
       \multicolumn{1}{l|}{Hyperparameters for 1D Burgers’ equation control:} & \multicolumn{1}{l}{}\\
      \hline
      Discount factor for reward & 0.5 \\
      Target smoothing coefficient & 0.05 \\
      Learning rate of critic loss & 0.0003 \\
      Learning rate of entropy loss & 0.003 \\
      Learning rate of policy loss & 0.003 \\
      Training batch size & 8192 \\
      Number of episodes &1500 \\
      Number of model updates per simulator step &50 \\
      Value target updates per step & 15 \\
      Size of replay buffer & 1000000 \\
      Number of trajectories interacted with the environment per step & 1 \\
      Number of layers of critic networks & 3 \\
      Number of hidden dimensions of critic networks & 4096 \\
      Number of layers of the policy network & 5 \\
      Number of hidden dimensions of the policy network & 4096 \\
      Activation function & ReLU \\
      Clipping's range of policy network's standard deviation output & $\left[e^{-20},e^2\right]$ \\
      \hline
    \end{tabular}
  \end{center}
\end{table}

The Soft Actor-Critic (SAC) algorithm \cite{haarnoja2018soft} is a cutting-edge reinforcement learning method. Conceptualized as an improvement over traditional Actor-Critic methods, SAC distinguishes itself by introducing an entropy regularization term into the loss function, which encourages the policy to explore more efficiently by maximizing both the expected cumulative reward and the entropy of the policy itself. 

Compared with Deep Deterministic Policy Gradient (DDPG) algorithm \cite{lillicrap2015continuous, pan2018reinforcement}, SAC's entropy regularization encourages more effective exploration and prevents early convergence to suboptimal policies, a limitation often seen with DDPG's deterministic approach. Additionally, SAC's twin Q-networks mitigate the overestimation bias that can affect DDPG's value updates, leading to more stable learning. The automatic tuning of the temperature parameter in SAC further simplifies the delicate balance between exploration and exploitation, reducing the need for meticulous hyperparameter adjustments. Consequently, these features render SAC generally more sample-efficient and robust, particularly in complex and continuous action spaces.

During training, experience for training is stored in a replay buffer and sampled randomly to update the networks. All data in the training set are in the replay buffer at the beginning. For offline SAC, the replay buffer is unchanged, while online SAC alternates between collecting experience by interacting with the environment and updating the networks with the replay buffer. And offline SAC only uses the surrogate model trained with the training set instead of the real environment to collect experience. The policy network is updated to maximize the expected return, considering both the Q-value and the entropy term. The critic networks are updated to minimize the distance between their Q-value predictions and the target Q-values. SAC also employs a target critic network for the critic networks, which are slowly updated with the weights of the main critic network to stabilize training. For the inference, SAC uses the policy network to determine the action by selecting the action with the highest probability. 

In practice, to help the system approximate the target state accurately and quickly, we need to include the distance between the states of every time step and the target state in the reward. So the reward function of time step $t$, state $u_t$, target state $u_T$ and action $\w_t$ here is defined as
\begin{align*}
    r(t, \u_t, y_T, \w_t) = -\int_{\Omega}|\u_t-\u_d|^2\mathrm{d}\x-{\alpha}\int_\Omega|\w_t|^2\mathrm{d}\x,
\end{align*}
where $\Omega$ is the space domain and $\alpha$ is the weight of energy. We take the Adam optimizer \cite{kingma2014adam} to train the networks and update the temperature parameter. The detailed values of hyperparameters are provided in Table \ref{tab:SAC1d}.

\subsection{Supervised Learning}
The paper \cite{hwang2022solving} proposes a supervised-learning-based control algorithm that takes a neural operator as a surrogate model to solve control problems. It contains two stages. In the first stage, we take a neural operator to learn the physical constraint as Appendix \ref{app:surrogate_model}. The three CNNs respectively reconstruct $u$, reconstruct $\w$ and learn the transition from $u_t$ to $u_{t+1}$. More details are in Appendix \ref{app:surrogate_model}. In the second stage, these three neural networks are used as surrogate models to calculate the gradient of the objective function with respect to the control input. We consider the control $\w$ as a learnable parameter and update it with the gradient. 

To enhance the accuracy, we adopt the LBFGS optimizer \cite{liu1989limited}, which is more accurate while slower than the Adam optimizer. We record the hyperparameters of the second stage in Table \ref{tab:aaai}.

\begin{table}[ht]
  \begin{center}
    \caption{\textbf{Hyperparameters of the second stage of the 1D supervised learning method}.}
     \label{tab:aaai}
     \resizebox{1.0\linewidth}{!}{
    \begin{tabular}{l|l} 
    \multicolumn{2}{l}{}\\
    \hline
      \text {Hyperparameter name} & {Value}  \\
      \hline
       \multicolumn{1}{l|}{Hyperparameters for 1D Burgers’ equation control: (full observation partial control)} & \multicolumn{1}{l}{}\\
      \hline
        Learning rate of $\w$ updating & 0.1 \\
        Number of epochs & 300 \\
        Weight of objective function loss & 500 \\
        Weight of reconstruction loss & 0.03 \\
        Termination tolerance on first order optimality of LBFGS optimizer & $4\times10^{-7}$ \\
        Termination tolerance on parameter changes LBFGS optimizer & $4\times10^{-7}$ \\
        \hline
        \text {Hyperparameter name} & {Value}  \\
        \hline
       \multicolumn{1}{l|}{Hyperparameters for 1D Burgers’ equation control: (partial observation partial/full control)} & \multicolumn{1}{l}{}\\ 
      \hline
        Learning rate of $\w$ updating & 0.1 \\
        Number of epochs & 300 \\
        Weight of objective function loss & 50000 \\
        Weight of reconstruction loss & 3 \\
        Termination tolerance on first order optimality of LBFGS optimizer & $4\times10^{-7}$ \\
        Termination tolerance on parameter changes LBFGS optimizer & $4\times10^{-7}$ \\
        \hline
    \end{tabular}
    }
  \end{center}
\end{table}

\subsection{BC} \label{app: bc1d}

The Behavior Cloning (BC) algorithm \cite{pomerleau1988alvinn} is a fundamental imitation learning method. BC aims to learn policies directly from expert demonstrations, leveraging supervised learning to map states to actions. This approach bypasses the need for exploration typically required in reinforcement learning by directly mimicking the behavior observed in the provided demonstrations. BC does not require interaction with the environment during training, which simplifies the learning process and reduces the computational resources needed. 

In Behavior Cloning, the policy network is trained using supervised learning techniques, where the objective is to minimize the difference between the predicted actions and the actions taken by the expert in the training data. The loss function is typically the mean squared error between the predicted and expert actions. The training data, consisting of state-action pairs, is collected from expert demonstrations and stored in a dataset. In the inference, we evaluated the same objective function as the SAC, which is the MSE between the final state after $T-1$ steps' control and the target. For the partially observed or partially controlled settings, we concatenate zeros in the masked dimensions to make the same dimension as the input of the policy network, but the zeros are not considered during the calculation of the metrics. The detailed values of hyperparameters are provided in Table \ref{tab:bc1d}.
\begin{table}[ht]
  \begin{center}
    \caption{\textbf{Hyperparameters of 1D BC.} The full observation partial control, partial observation full control, and partial observation partial control settings share the same hyperparameters.}
     \label{tab:bc1d}
    \begin{tabular}{l|l} 
    \multicolumn{2}{l}{}\\
    \hline
      \text {Hyperparameter name} & {Value}  \\
      \hline
       \multicolumn{1}{l|}{Hyperparameters for 1D Burgers’ equation control:} & \multicolumn{1}{l}{}\\
      \hline
      Learning rate & $1\times10^{-4}$ \\
      Training batch size & 512 \\
      Number of episodes & $5\times10^{5}$ \\
      Size of replay buffer & $2\times10^{6}$ \\
      Number of layers of policy networks & 2 \\
      Number of hidden dimensions of policy networks & 1024 \\
      Activation function & ReLU \\
      \hline
    \end{tabular}
  \end{center}
\end{table}

\subsection{BPPO} \label{app: BPPO1d}
The Behavior Proximal Policy Optimization (BPPO) algorithm \cite{zhuang2023behavior} is an advanced reinforcement learning method that builds upon the strengths of Proximal Policy Optimization (PPO) while incorporating elements from behavior cloning.  BPPO is an offline algorithm that monotonically improves behavior policy in the manner of PPO. Owing to the inherent conservatism of PPO, BPPO restricts the ratio of learned policy and behavior policy within a certain range, similar to the offline RL methods which make the learned policy close to the behavior policy. By leveraging the inherent conservatism of online on-policy algorithms, BPPO addresses the overestimation issue commonly encountered in offline RL settings. The algorithm starts by estimating a behavior policy using behavior cloning and then iteratively improves a target policy using the Proximal Policy Optimization (PPO) objective with a behavior constraint. Through this process of policy improvement, advantage estimation, and policy update, BPPO aims to refine the target policy while ensuring it remains close to the behavior policy. By combining the strengths of online on-policy methods with tailored offline RL techniques, BPPO demonstrates promising results on the D4RL benchmark, outperforming state-of-the-art offline RL algorithms.

During the training, BPPO first initializes the behavior policy $\pi_\beta$ and target policy $\pi_\theta$. Then, it estimates the behavior policy $\pi_\beta$ using behavior cloning to mimic the behavior demonstrated in the offline dataset. Next, the target policy $\pi_\theta$ is optimized by the PPO objective with a behavior constraint, ensuring that the target policy remains close to the behavior policy. After that, it estimates the advantage function $A^{\pi_\beta}$ using the behavior policy $\pi_\beta$ to evaluate the quality of actions taken by the target policy. Finally, updating the target policy by maximizing the PPO objective with that estimated advantage function, and adjusting the policy parameters to improve performance. In the implementation, a state value network and Q value network are pre-trained using the state, action, and reward from the offline dataset.

In practice, to help the system approximate the target state accurately and quickly, we need to include the distance between the states of every time step and the target state in the reward. So the reward function of time step $t$, state $u_t$, target state $u_d$ and action $\w_t$ here is defined as
\begin{align*}
    r(t, \u_t, \u_d, \w_t) = -\int_{\Omega}|\u_t-\u_d|^2\mathrm{d}\x-{\alpha}\int_\Omega|\w_t|^2\mathrm{d}\x,
\end{align*}
where $\Omega$ is the space domain and $\alpha$ is the weight of energy. We take the Adam optimizer \cite{kingma2014adam} to train the networks and update the temperature parameter. The detailed values of hyperparameters are provided in Table \ref{tab:BPPO1d}.

\begin{table}[ht]
  \begin{center}
    \caption{\textbf{Hyperparameters of 1D BPPO.} The full observation partial control, partial observation full control, and partial observation partial control settings share the same hyperparameters.}
     \label{tab:BPPO1d}
    \begin{tabular}{l|l} 
    \multicolumn{2}{l}{}\\
    \hline
      \text {Hyperparameter name} & {Value}  \\
      \hline
       \multicolumn{1}{l|}{Hyperparameters for 1D Burgers’ equation control:} & \multicolumn{1}{l}{}\\
      \hline
        State value network: & \\
        Learning rate of value network & $1\times10^{-4}$ \\
        Steps of value network & $2\times10^{6}$ \\
        Number of layers of value network & 3 \\
        Batch size of value network & 512 \\
        Number of hidden dimensions of value network & 512 \\
        \hline
        Q value network: & \\
        Learning rate of Q network & $1\times10^{-4}$ \\
        Steps of Q network & $2\times10^{6}$ \\
        Number of layers of Q network & 2 \\
        Batch size of Q network & 512 \\
        Number of hidden dimensions of Q network & 1024 \\
        Target Q network updates per step & 2 \\
        Soft update factor & 0.005 \\
        Discount factor for reward & 0.99 \\
        \hline
        Behavior cloning: & \\
        Learning rate of BC & $1\times10^{-4}$ \\
        Training batch size of BC & 512 \\
        Number of episodes of BC & $5\times10^{5}$ \\
        \hline
        BPPO: & \\
        Number of episodes of BPPO & $1\times10^{2}$ \\
        Number of layers of policy networks  & 2 \\
        Number of hidden dimensions of policy networks & 1024 \\
        Learning rate of BPPO & $1\times10^{-5}$ \\
        Training batch size of BPPO & 512 \\
        Clip ratio of BPPO & 0.25 \\
        Weight decay factor & 0.96 \\
        Weight of advantage function & 0.9 \\
        Size of replay buffer & $2\times10^{6}$ \\
        Activation function & ReLU \\
      \hline
    \end{tabular}
  \end{center}
\end{table}

\section{2D Jellyfish and Smoke Control Baselines} \label{app:baseline2d}

\subsection{MPC and SL}\label{subsec:MPC}
Model Predictive Control (MPC) \cite{schwenzer2021review} is a control strategy that solves an optimization problem repeatedly to determine the optimal control inputs for a dynamic system. It operates over a finite prediction horizon, optimizing a cost function and applying only the first control action. In the 2D jellyfish movement control problem, MPC uses the control sequences $\w$ and the fluid states as internal state variables. 
Without the need to train a control agent model, MPC relies on 2D surrogate models mentioned in Appendix \ref{app:surrogate_model} to estimate future states based on the current state and control. We use backpropagation to compute the gradient, update the control action sequences, and optimize the control objective $\mathcal{J}$ in Eq. (\ref{eq:jellyfish_obj}). For every time step, we get the optimized sequences from this time step forward in this way, and only the first control sequence of the optimized control sequences is applied. Compared with MPC, the Supervised learning (SL) method \cite{hwang2022solving} only optimizes the entire control sequences and employs the entire sequences.

MPC is an optimization technique that aims to optimize the control of complex dynamic systems by considering future predictions. This approach can optimize performance measures over a future time horizon, handle systems with multiple variables and constraints, adapt to changes in the system behavior, and offer good performance even in the presence of nonlinearity. Nevertheless, using MPC comes with a high cost, both in terms of computational resources and time. Additionally, adapting the optimization hyperparameters for MPC can be a challenging task. In our experiment, both MPC and SL face difficulties when trying to generate smooth opening angle control curves, even when constraints $R(\hat{\w})$ are included in the optimization objective $\mathcal{J}$. In the case of multi-objective optimization problems, it becomes even more challenging for them to simultaneously achieve both the higher speed (bigger $\bar{v}$) and the control curves smoothness (smaller $R(\hat{\w})$).

\subsection{SAC}
For the 2D case, the algorithm and basic architecture of SAC are the same as the 1D case in Appendix \ref{app:SAC1d}. When designing the reward function for the 2D jellyfish movement control, we find the periodic condition of the opening angle curves is hard to constrain. So to satisfy the periodic condition better, we include the distance between $\w_t$ of every time step $t$ and $\w_0$. Also, we both consider the squared and absolute error of $(\w_t-\w_0)$ since they respectively constrain the periodic condition when $(\w_t-\w_0)$ is large and small. As a result, the reward function of time step $t$, force $F_t$, opening angle $(\w_{t-1}, \w_t)$ and condition angle $\w_0$ is defined as 
\begin{align*}
    r(t, \w_{t-1}, \w_t, \w_0)=(T-t)*F_t-\lambda_1(\w_t-\w_{t-1})^2-\lambda_2((\w_t-\w_{0})^2 + |\w_t-\w_{0}|), 
\end{align*}
where $\lambda_1$, $\lambda_2$ are weights of different terms. As for the 2D incompressible fluid control, we set the reward function as the percentage of smoke passing through the target bucket.

We take the Adam optimizer \cite{kingma2014adam} to update the weights of networks and the temperature parameter as in the 1D experiment. The hyperparameters are reported in Table \ref{tab:SAC2d}. In particular, we take the best checkpoint to evaluate the final performance, thus the actual number of training episodes for each setting ranges from about 100 to about 300.

\begin{table}[ht]
  \begin{center}
    \caption{\textbf{Hyperparameters of 2D SAC.} The full observation and partial observation settings share the same hyperparameters.}
     \label{tab:SAC2d}
    \begin{tabular}{l|l} 
    \multicolumn{2}{l}{}\\
    \hline
      \text {Hyperparameter name} & {Value}  \\
      \hline
       \multicolumn{1}{l|}{Hyperparameters for 2D Jellyfish movement control:} & \multicolumn{1}{l}{}\\
      \hline
      Weight of the constraint of periodic condition $\beta$ & 0.001 \\
      Discount factor for reward & 0.5 \\
      Target smoothing coefficient & 0.05 \\
      Learning rate of critic loss & 0.0003 \\
      Learning rate of entropy loss & 0.0003 \\
      Learning rate of policy loss & 0.0003 \\
      Training batch size & 2048 \\
      Number of episodes & 350 \\
      Number of model updates per simulator step & 20 \\
      Value target updates per step & 15 \\
      Size of replay buffer & 11400001 \\
      Number of trajectories interacted with the environment per step & 5 \\
      Activation function & ELU \\
      Clipping's range of policy network's standard deviation output & $\left[e^{-5},e^{-2}\right]$ \\
        \hline
    \end{tabular}
  \end{center}
\end{table}

\subsection{BC}
For the 2D task, the algorithm and basic architecture of BC are the same as the 1D case in Appendix \ref{app: bc1d}. In Behavior Cloning, the policy network is trained using supervised learning techniques, where the objective is to minimize the difference between the predicted actions and the actions taken by the non-expert in the training data. The loss function is typically the mean squared error between the predicted actions and actions in training data. The training data, consisting of state-action pairs, is collected from non-expert random demonstrations and stored in a dataset. In the inference, we evaluated the same objective function as the SAC. The detailed values of hyperparameters are provided in Table \ref{tab:bc2d}.

\begin{table}[ht]
  \begin{center}
    \caption{\textbf{Hyperparameters of 2D BC.} The full observation and partial observation settings share the same hyperparameters.}
     \label{tab:bc2d}
    \begin{tabular}{l|l} 
    \multicolumn{2}{l}{}\\
    \hline
      \text {Hyperparameter name} & {Value}  \\
      \hline
       \multicolumn{1}{l|}{Hyperparameters for 2D control:} & \multicolumn{1}{l}{}\\
      \hline
      Learning rate & $1\times10^{-4}$ \\
      Training batch size & 512 \\
      Number of episodes & $5\times10^{3}$ \\
      Size of replay buffer & $2\times10^{6}$ \\
      Activation function & ELU \\
      Clipping's range of policy network's standard deviation output & $[-5,-2]$ \\
        \hline
    \end{tabular}
  \end{center}
\end{table}

\subsection{BPPO}
For the 2D task, the algorithm and basic architecture of BPPO are the same as the 1D case in Appendix \ref{app: BPPO1d}. In jellyfish movement control, when designing the reward function, we find the periodic condition of the opening angle curves is hard to constrain. So to satisfy the periodic condition better, we include the distance between $\w_t$ of every time step $t$ and $\w_0$. Also, we both consider the squared and absolute error of $(\w_t-\w_0)$ since they respectively constrain the periodic condition when $(\w_t-\w_0)$ is large and small. As a result, the reward function of time step $t$, force $F_t$, opening angle $(\w_{t-1}, \w_t)$ and condition angle $\w_0$ is defined as 
\begin{align*}
    r(t, \w_{t-1}, \w_t, \w_0)=(T-t)*F_t-\lambda_1(\w_t-\w_{t-1})^2-\lambda_2((\w_t-\w_{0})^2 + |\w_t-\w_{0}|), 
\end{align*}
where $\lambda_1$, $\lambda_2$ are weights of different terms. The hyperparameters are reported in Table \ref{tab:BPPO2d}.

\begin{table}[ht]
  \begin{center}
    \caption{\textbf{Hyperparameters of 2D BPPO.} The full observation and partial observation settings share the same hyperparameters.}
     \label{tab:BPPO2d}
    \begin{tabular}{l|l} 
    \multicolumn{2}{l}{}\\
    \hline
      \text {Hyperparameter name} & {Value}  \\
      \hline
       \multicolumn{1}{l|}{Hyperparameters for 2D control:} & \multicolumn{1}{l}{}\\
      \hline
        State value network: & \\
        Learning rate of value network & $1\times10^{-4}$ \\
        Steps of value network & $2\times10^{3}$ \\
        Number of layers of value network & 3 \\
        Batch size of value network & 512 \\
        \hline
        Q value network: & \\
        Learning rate of Q network & $1\times10^{-4}$ \\
        Steps of Q network & $2\times10^{3}$ \\
        Number of layers of Q network & 3 \\
        Batch size of Q network & 512 \\
        Target Q network updates per step & 2 \\
        Soft update factor & 0.005 \\
        Discount factor for reward & 0.99 \\
        \hline
        Behavior cloning: & \\
        Learning rate of BC & $1\times10^{-4}$ \\
        Training batch size of BC & 512 \\
        Number of episodes of BC & $5\times10^{3}$ \\
        \hline
        BPPO: & \\
        Number of episodes of BPPO & $1\times10^{2}$ \\
        Number of layers of policy networks  & 3 \\
        Learning rate of BPPO & $1\times10^{-5}$ \\
        Training batch size of BPPO & 512 \\
        Clip ratio of BPPO & 0.25 \\
        Weight decay factor & 0.96 \\
        Weight of advantage function & 0.9 \\
        Size of replay buffer & $2\times10^{6}$ \\
        Activation function & ELU \\
      \hline
    \end{tabular}
  \end{center}
\end{table}

\section{Surrogate Models}\label{app:surrogate_model}
\subsection{1D Surrogate Model}
For the control problem of 1D Burgers' equation, our 1D surrogate model is based on the previous paper \cite{hwang2022solving}, which uses 2 autoencoders to model dynamics in the latent space. The neural simulator architecture and training details are shown in Table \ref{tab:1d_surrogate_model}.

\begin{table}[ht]
  \begin{center}
    \caption{\textbf{Hyperparameters of 1D surrogate model}.}
     \label{tab:1d_surrogate_model}
    \begin{tabular}{l|l|l|l} 
    \hline
      \multirow{2}{*}{\text {Hyperparameter name}} &
      Full observation, & Partial observation, & Partial observation,\\ &
      partial control & full control & partial control
      \\
      \hline
      \multicolumn{4}{c}{Autoencoder of state}\\
      \hline
      Convolution kernel size &5&5&5\\
      Convolution padding &2&2&2\\
      Activation function &ELU&ELU&ELU\\
      Latent vector size &256&128&128\\
      \hline
      \multicolumn{4}{c}{Autoencoder of force}\\
      \hline
      Convolution kernel size &5&5&5\\
      Convolution padding &2&2&2\\
      Activation function &ELU&ELU&ELU\\
      Latent vector size &256&256&256\\
      \hline
      \multicolumn{4}{c}{Training}\\
      \hline
      Training batch size & 5100 & 5100& 5100 \\
      Optimizer &Adam&Adam&Adam\\
      Learning rate &1e-3&1e-3&1e-3\\
      Training epochs & 500 & 500 & 500 \\
      Learning rate scheduler & cosine annealing & cosine annealing & cosine annealing\\
      \hline
   \end{tabular}
  \end{center}
\end{table}

\subsection{2D Jellyfish Force Models}
\label{app:surrogate_force_model}
\textbf{Dataset.} In 2D jellyfish movement control experiments, we train a force surrogate to approximate the computation of the average speed of the jellyfish for the guidance of inference, which is implemented by a neural network and is thus differentiable. The training data consists of pressure, boundary, and force data in the training trajectories. Each training trajectory amounts to $\tilde{T}=40$ training samples. Therefore, we have 1.2 million training samples and 4 thousand testing samples in total.  

\textbf{Model.} The model's input contains pressure, boundary mask, and offsets with shape $4\times 64 \times 64 $ at a certain time step, and the output is the corresponding forces of $x$ and $y$ directions. The model architecture is the down-sampling part of a U-Net \cite{ronneberger2015u} that embeds the input features into a 512-dimensional hidden representation; then we use a linear function with output dimension two to output forces.

\textbf{Training.}  We use MSE (mean squared error) loss between the ground truth and predicted forces to train the force surrogate model. The optimizer is Adam \cite{kingma2014adam}. The batch size is 64. The model is trained for 10 epochs. The learning rate starts from $1 \times 10^{-4}$ and multiplies a factor of 0.1 every three epochs. After training, the relative $l_2$ test error is 0.4\%.

\subsection{2D Jellyfish Boundary Mask and Offsets Updater}\label{app:surrogate_bd_updater}
\textbf{Dataset.} In 2D jellyfish movement control experiments, we train a boundary mask and offsets updater surrogate to approximate the transition of boundary mask and offsets from time step $0$ to $t$. Thus each training trajectory amounts to $\tilde{T}-1=39$ training samples. 

\textbf{Model.} The input is the boundary mask and offsets at time step $0$ with shape $3\times 64 \times 64$, and the difference of the opening angle from $0$ to $t$. The output is the boundary mask and offsets at time step $t$ with shape $3\times 64 \times 64$. The model architecture is the U-Net \cite{ronneberger2015u} with additional scalar input, similar to the denoising network in DDPM \cite{ho2020denoising}, where the input scalar diffusion step is replaced by the angle difference in our model. 

\textbf{Training.}  We use MSE (mean squared error) loss between the ground truth and predicted boundary mask and offsets to train this surrogate model. Hyperparameters of training are the same as those of the force surrogate model.

\subsection{2D Jellyfish Simulator}
\textbf{Dataset.} In 2D jellyfish movement control experiments, we need to train a surrogate model as a solver of the physical dynamics for the baseline methods like SAC (online) and MPC, because of their iterative nature. Conversely, our diffusion method \textit{does not} need this surrogate model. This model approximates the transition of states under the boundary condition from time step $t$ to $t+1$. Thus each training trajectory amounts to $\tilde{T}-1=39$ training samples. We train two versions of this model for full/partial observation settings.

\textbf{Model.} This surrogate model is also implemented by the U-Net \cite{ronneberger2015u}. The model input is the states, boundary mask, and offsets at time $t$ with shape $6\times 64 \times 64$ for the full observation setting and $4\times 64 \times 64$ for the partial observation setting. The output is the predicted states at time step $t+1$, with shape $3\times 64 \times 64$ for the full observation setting and $1\times 64 \times 64$ for the partial observation setting. 

\textbf{Training.}  We use MSE loss between the ground truth and predicted states to train this surrogate model. Hyperparameters of training are the same as those of the force surrogate model.

\section{Additional Results of Experiments}
\label{app:additional_results}

\begin{table}[t]\small
\centering
\caption{\textbf{More results of 1D Burgers's equation control.} \textit{Italic} font denotes unfair results, bold font denotes the best model among fairly evaluated baselines, and underline denotes the second best model among them.}
\begin{tabular}{l|cc|cc|cc}
\hline
\toprule
& \multicolumn{1}{c|}{PO-FC}             & \multicolumn{1}{c|}{FO-PC}             & \multicolumn{1}{c}{PO-PC}             \\
& $\J$ $\downarrow$ & $\J$ $\downarrow$ & $\J$ $\downarrow$  \\
    \midrule
    SAC (online) & 0.01567  & 0.034092  & 0.02768 \\
    \midrule
    \proj-lite & \underline{0.01139}  & \textbf{0.00037}  & \textbf{0.00494}  \\
    \proj & \textbf{0.01103}  & \textbf{0.00037}  & \textbf{0.00494}  \\
\bottomrule
\end{tabular}
\label{tab:1d_sac_table}
\end{table}

\subsection{\proj for 1D Burgers' Equation}
\label{app:1d_more}

\begin{figure}
    \centering
    \includegraphics[width=0.95\textwidth]{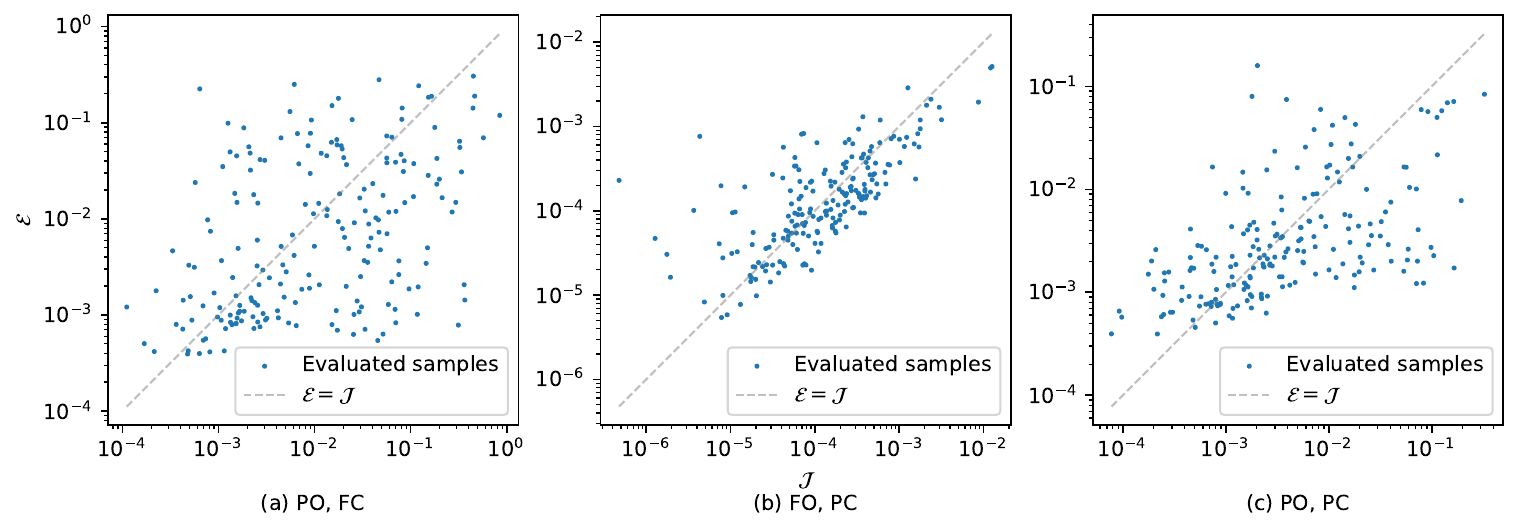}
    \caption{\textbf{$\J$ and state MSE $\mathcal{E}$ of different samples in different settings.}}
    \label{fig:app_cor_J_actual_and_state_mse}
\end{figure}

Since online RL methods are effective in a wide domain of control tasks, we tested the performance of the classical RL method, SAC (online), as shown in \autoref{tab:1d_sac_table}. 
The online SAC belongs to a different setting than the offline training setting considered in our work. Even with access to more information from the environment, online SAC is less competitive to our \proj in Burgers' control task.

As \proj can jointly generate the trajectory $\u$ and control sequence $\w$, we demonstrate the MSE of the diffused trajectory $\u$ from the trajectory computed by the ground-truth solver $\u_{\text{ground truth}}$ given the diffused control sequence $\w$. 
This should be differentiated from $\J$ reported in Table \ref{tab:1d_main_table} which is the MSE of the trajectory computed by the ground-truth solver given $\w$ from the target trajectory $\u^*$.
The former MSE is denoted as $\mathcal{E}\coloneqq\|\u-\u_{\text{ground truth}}\|$ in Figure \ref{fig:app_cor_J_actual_and_state_mse}. 

It can be observed from Figure \ref{fig:app_cor_J_actual_and_state_mse} that there is a clear correlation between $\J$ and $\mathcal{E}$. This is because, the higher the prediction error is, the less accurate the solution to the inverse control problem would be. The correlation proves that our \proj learns the dynamics of the system (i.e., the dependency of $\u$ on $\w$), based on which the control sequence is generated. In partial observation settings, the randomness is larger than in the full observation setting, which demonstrates the challenge of controlling partially observed systems. 
The reference line denotes $\J=\mathcal{E}$ where the control objective completely depends on the prediction accuracy. In Figure \ref{fig:app_cor_J_actual_and_state_mse} (b), samples cluster tightly around the line, which implies \proj can find near-optimal control sequence based on the predicted trajectory in the corresponding level of prediction accuracy. This ability is remarkable in that \proj does not rely on explicit gradient descent like those in the adjoint method and SL.

\subsection{2D Jellyfish Movement}
\label{app:2d_more}

In this subsection, we record extensive results of 2D jellyfish movement control. To further compare baselines with \proj in more settings, we change $\zeta$ in $\mathcal{J}$ and results of $\zeta=500$, $\zeta=1000$ and $\zeta=2000$ are listed in Table \ref{tab:2D_full} and Table \ref{tab:2D_part}. The results show that \proj outperforms other baselines in diverse settings with different objectives.

\begin{table}[t]\small
\centering
\caption{\textbf{Full observation setting of 2D jellyfish movement control.} Bold font denotes the best model, and underline denotes the second best model.}
\resizebox{1.0\linewidth}{!}{
\begin{tabular}{l|ccc|ccc|ccc}
\hline
\toprule
             & \multicolumn{3}{c|}{$\zeta=500$}             & \multicolumn{3}{c|}{$\zeta=1000$}    & \multicolumn{3}{c}{$\zeta=2000$} \\
            & $\bar{v}$ $\uparrow$ &  $R(\w)$ $\downarrow$ & $\mathcal{J}$ $\downarrow$ & $\bar{v}$ $\uparrow$ &  $R(\w)$ $\downarrow$ & $\mathcal{J}$ $\downarrow$ & $\bar{v}$ $\uparrow$ &  $R(\w)$ $\downarrow$& $\mathcal{J}$ $\downarrow$  \\\midrule
MPC                    &    -38.21      &    0.1743      &   212.49     &      25.72     &    0.0112       &     109.17  &   44.24       &    0.0755      &    31.29    \\
SL                     &    90.09      &       0.2570   &    166.89    &     -76.94     &     0.1286      &     205.57  &     -62.96     &   0.0648       &   127.8     \\
SAC (pseudo-online)    &   -134.13      &      0.0204    &   154.49     &     -166.96    &     0.0069      &      178.14  &  -309.18        &     0.0040     &   313.19     \\
SAC (offline)          &    -159.83      &     0.0119     &    171.73    &    -158.66     &   0.0069        &     165.58  &  -278.52        &     0.0051     &    283.58    \\
BC   &    15.61     &   0.0589   &  43.33  &   30.48      &  0.0629      &   32.44   &   33.5      &  0.0677    &       34.23       \\
BPPO   &   88.64      &   0.0776   &  -11.07  &   \underline{107.67}      &  0.0867      &  \underline{-20.93}   & -20.12        &  0.0621    &    82.27          \\
\midrule
\proj-lite             &   \underline{151.57}       &    0.0939      &    \underline{-57.69}    &     95.04     &  0.0746         &     -20.47  &    \underline{150.53}      &   0.0923       &   \underline{-58.23}      \\
\proj                  &    \textbf{310.02}      &   0.2344        &   \textbf{-75.64}      &     \textbf{279.87}     &   0.2058        &    \textbf{-74.11}  &     \textbf{270.56}      &    0.1755      &     \textbf{-95.08}        \\ 
\bottomrule 
\end{tabular}
}
\label{tab:2D_full}
\end{table}

\begin{table}[t]\small
\centering
\caption{\textbf{Partial observation setting of 2D jellyfish movement control.} Bold font denotes the best model, and underline denotes the second best model.}
\resizebox{1.0\linewidth}{!}{
\begin{tabular}{l|ccc|ccc|ccc}
\hline
\toprule
             & \multicolumn{3}{c|}{$\zeta=500$}             & \multicolumn{3}{c|}{$\zeta=1000$}    & \multicolumn{3}{c}{$\zeta=2000$} \\
            & $\bar{v}$ $\uparrow$ &  $R(\w)$ $\downarrow$ & $\mathcal{J}$ $\downarrow$ & $\bar{v}$ $\uparrow$ &  $R(\w)$ $\downarrow$& $\mathcal{J}$ $\downarrow$ & $\bar{v}$ $\uparrow$ &  $R(\w)$ $\downarrow$ & $\mathcal{J}$ $\downarrow$  \\\midrule
MPC                    &    -179.85      &  0.2534        &   433.3     &    -150.51       &    0.1791       &     329.59       &   -116.75       &   0.1397       &    256.46            \\
SL                     &    44.04      &    0.2895      &   245.43     &    -102.98       &   0.1188        &     221.79       &    -152.45      &    0.0661      &       218.53         \\
SAC (pseudo-online)    &    21.87      &    0.0798      &    57.96    &     -153.09      &    0.0057       &     158.82       &   -206.43       &     0.0048     &       211.2         \\
SAC (offline)          &    -149.32      &   0.0156       &   164.93     &    -206.21       &    0.0058       &     211.96       &   -258.96       &    0.0029      &     261.87           \\
BC          &    35.49     &   0.057   &  21.48  &    20.08     &   0.0556     &    35.48  &  \underline{35.14}   & 0.0594  &     \underline{24.27}       \\
BPPO          &    \underline{104.28}    &  0.0746    &  \underline{-29.67}  &    \underline{54.83}     &    0.0518    &  \underline{-3.02}    &  16.97       &   0.0598   &     42.83         \\
\midrule
\proj-lite             &    -0.35      &    0.0782      &   78.56     &    2.92          &      0.0779     &       74.97       &   -10.89      &     0.0760     &      86.93      \\
\proj                  &    \textbf{173.27}      &     0.1636     &  \textbf{-9.66}     &    \textbf{150.21}        &     0.1269      &      \textbf{-23.32}          &    \textbf{101.25}      &   0.1025       &    \textbf{1.21}   \\ 
\bottomrule 
\end{tabular}
}
\label{tab:2D_part}
\end{table}

\begin{table}[ht]
  \begin{center}
    \caption{\textbf{Results of myopic failure modes of SAC on 2D jellyfish movement control}.}
\begin{tabular}{l|cccc}
\hline
$\lambda_0$ &{Average speed ($\bar{v}$)} & \multicolumn{1}{l}{$R(\w)$} & objective  $\J$  & periodicity error    \\ \hline
SAC (weight=100)	& 59.82 &	0.0263 &	-33.48 &	0.35187       \\ 
SAC (weight=1000)	& -166.96 &	0.0112	& 178.14 &	0.02299 \\ 
\proj    &  279.87	& 0.2058 &	-74.11 &	0.01157 \\ 
 \hline
\end{tabular}
\label{tab:sac_myopic}
\end{center}
\end{table}

\subsection{Myopic failure modes of SAC}\label{app:myopic_failure}
To further confirm the advantage of diffusion models over the baseline SAC in alleviating the myopic failure modes, we perform additional 2D experiments. Specifically, when solving the jellyfish movement problem using SAC, we incorporated a constraint to achieve periodic motion of the jellyfish in the reward function, namely, we added a hyperparameter as the weight of the term $d(w_T, w_0)$ in Eq. \eqref{eq:jellyfish_obj}. The challenge caused by the periodicity condition is that the average speed and $R(\w)$ are calculated over the whole horizon while the periodicity condition is only evaluated at the final time step of the horizon.
Thus, when the control objective consists of all three terms, it requires global optimization over the whole horizon to balance different terms. In Table \ref{tab:sac_myopic}, we present the original results (weight=1000, corresponding to Table \ref{tab:2D_results}) and the results after reducing the weighting of this term (weight=100). It can be observed that while the speed of the jellyfish increases (though still not reaching the results of \proj), the $R(\w)$ and constraint term for periodicity sharply rise. 
From these results and the additional results in Table \ref{tab:2D_full} and Table \ref{tab:2D_part} where we test the performance of SAC under different weights $\zeta$ of $R(\w)$, we conclude that SAC struggles to provide satisfactory policies that simultaneously satisfy multiple conflicting constraints with different time coverage, resulting in a myopic failure model. The reason may be that it is difficult to estimate the effect of each term in the control objective in each time step in the iterative planning fashion \cite{ajay2022conditional}, which this issue could be well addressed by our diffusion models with an inherent nature of global optimization.

\begin{table}[t]\small
\centering
\caption{\textbf{Efficiency comparison on 1D Burgers’ equation.} Inference time is tested on a Tesla-V100 GPU with 8 CPUs.}
\begin{tabular}{lll}
\toprule
Methods                           & Training time (hours) & Inference time (seconds) \\
\midrule
\proj-lite                  & 1.7 (1 A100-80G GPU, 8 CPUs)  & 21.13    \\
\proj                       & 4.4 (1 A100-80G GPU, 8 CPUs)  & 58.97    \\
\proj-DDIM (8 sampling steps)& 1.7 (1 A100-80G GPU, 8 CPUs) & 0.53     \\
PID                            &  0.1 (1 A100-80G GPU, 8 CPUs)    & 3.61     \\
SL                             & 2.6 (1 V100-32G GPU, 12 CPUs)    & 74.85    \\
SAC                            & 10.5 (1 A6000-48G GPU, 16 CPUs)    & 0.11     \\
BC                             & 8.8 (1 V100-32G GPU, 12 CPUs)    & 1.22         \\
BPPO                           & 8.9 (1 V100-32G GPU, 12 CPUs)    & 0.82     \\
\bottomrule 
\end{tabular}
\label{tab:1d_efficiency}
\end{table}

\begin{table}[t]\small
\centering
\caption{\textbf{Efficiency comparison on 2D jellyfish movement.} Inference time is tested on a Tesla-V100 GPU with 8 CPUs.}
\begin{tabular}{lll}
\toprule
Methods                         & Training time (hours) & Inference time (seconds) \\
\midrule
\proj              & 62 (2 A100-80G GPUs, 32 CPUs)           &  252.2   \\
\proj-DDIM (8 sampling steps) & 62 (2 A100-80G GPUs, 32 CPUs) &   12.6   \\
MPC               & 52.1 (1 A100-80G GPU, 16 CPUs)	                 & 1401.7 \\
SL                & 52.1 (1 A100-80G GPU, 16 CPUs)	                 &   133.5  \\
SAC              & 9.5 (1 A100-80G, 16 CPUs)	                  &   0.2   \\
BC                & 2.8 (1 A100-80G, 16 CPUs)	                 &  0.99        \\
BPPO              & 3.0 (1 A100-80G GPU, 16 CPUs)                 &  1.08    \\
\bottomrule 
\end{tabular}
\label{tab:2d_efficiency}
\end{table}

\subsection{Efficiency Comparison}\label{app:efficiency}
We test the training and inference (control) efficiency of \proj and baselines on both 1D Burgers’ equation and 2D jellyfish movement tasks. 
On the 1D Burgers’ equation task, we test the efficiency of partial observation and full control (PO-FC); on the 2D jellyfish movement task, we test the efficiency of full observation (FO).

The results are presented in Table \ref{tab:1d_efficiency} and Table \ref{tab:2d_efficiency}.
For training efficiency, since models with different sizes are trained on different machines, we report the training time of models along with the machine information.
The inference efficiency is tested on a single Tesla-V100 GPU with 8 CPUs and the average inference times over all test samples are reported. 
For training efficiency, we have the following observations. On 1D Burgers’ equation, our \proj is comparable with SL and more efficient than reinforcement learning (RL) methods like SAC, BC, and BPPO. On 2D jellyfish movement, our \proj  costs similar training time compared with MPC and SAC, but more time than RL methods.
For inference efficiency, these results reveal that although \proj is not as efficient as RL methods, it has competitive efficiency compared to
SL and MPC. In particular, by using the fast sampling method DDIM \cite{song2020denoising}, \proj could be accelerated significantly by reducing the number of sampling steps.

\section{Effect of Hyperparameters}\label{app:hyperparams}
\subsection{Effect of $\gamma$}\label{app:hyperparam_gamma}

\textbf{1D Burgers' Equation control}.
In the 1D Burgers control task, the prior reweighting is intrinsically unnecessary. The evaluated control target follows the same distribution as the training set, and the samples in the training set are defined to be the optimal control. Therefore, the generated control sequences following $p(w|u_0,u_T)$ are already the (near) optimal distribution which alleviates the need for prior reweighting. 

We tested different scheduling of the prior reweighting $\gamma$ and found their performance similar. Finally, we use the same cosine schedule as that in the DDPM noise for clarity. Our experiment results also revealed that the performance of \proj in 1D Burgers equation control is insensitive to the prior reweighting intensity $\gamma$ as we reported in Table \ref{tab:1d_main_table}. Besides, we conducted a more comprehensive experiment on the prior reweighting intensity $\gamma$ in all three settings (FO-PC; PO-FC; PO-PC) as in Table \ref{tab:ablation_1D_gamma_FOPC}, \ref{tab:ablation_1D_gamma_POFC}, and \ref{tab:ablation_1D_gamma_POPC}. The results show that prior reweighting does not significantly impact the performance of \proj in 1D Burgers control task, where State MSE denotes the deviation between generated $u$ and the ground truth $u_{\text{gt}}$ given by the numerical simulator based on generated $f$.
\begin{table}[tb]
    \centering
    \caption{\textbf{Results of different $\gamma$ in \proj on FO-PC 1D Burgers equation control task.}}
    \begin{tabular}{l|l|l|l|l|l}
    \toprule
        $\gamma$ & 0.0 & 0.3 & 0.5 & 0.7 & 1.0 \\ \midrule
        $\mathcal{J}_{\text{actual}}$ & 0.00038 & 0.00037 & 0.00037 & 0.00037 & 0.00037 \\ 
        State MSE & 0.00063 & 0.00043 & 0.00034 & 0.00028 & 0.00025 \\ \bottomrule
    \end{tabular}
    \label{tab:ablation_1D_gamma_FOPC}
\end{table}

\begin{table}[tb]
    \centering
    \caption{\textbf{Results of different $\gamma$ in \proj on PO-FC 1D Burgers equation control task.}}
    \begin{tabular}{l|l|l|l|l|l}
    \toprule
        $\gamma$ & 0.0 & 0.3 & 0.5 & 0.7 & 1.0 \\ \midrule
        $\mathcal{J}_{\text{actual}}$ & 0.01159 & 0.01155 & 0.01152 & 0.01148 & 0.01139 \\ 
        State MSE & 0.02643 & 0.02642 & 0.02642 & 0.02641 & 0.02639 \\ \bottomrule
    \end{tabular}
    \label{tab:ablation_1D_gamma_POFC}
\end{table}

\begin{table}[tb]
    \centering
    \caption{\textbf{Results of different $\gamma$ in \proj on PO-PC 1D Burgers equation control task.}}
    \begin{tabular}{l|l|l|l|l|l}
    \toprule
        $\gamma$ & 0.0 & 0.3 & 0.5 & 0.7 & 1.0 \\ \midrule
        $\mathcal{J}_{\text{actual}}$ & 0.00493 & 0.00493 & 0.00493 & 0.00493 & 0.00494\\ 
        State MSE & 0.00675 & 0.00667 & 0.00665 & 0.00664 & 0.00663 \\ \bottomrule
    \end{tabular}
    \label{tab:ablation_1D_gamma_POPC}
\end{table}

\begin{table}[ht]
  \begin{center}
    \caption{\textbf{Results of different $\gamma$ in \proj on 2D jellyfish movement control task}.}
\begin{tabular}{c|c|ccc}
\toprule
$\gamma_1$ &$\xi$ & \multicolumn{1}{c}{Average speed ($\bar{v}$)} & \multicolumn{1}{l}{$R(\w)$} & objective  $\J$ \\ 
\midrule
0.6&0.4      & 410.6                                                   & 0.2581                                   & -152.51  \\
0.7&0.3      & 279.87                                                 & 0.2058                                   & -74.11  \\
0.8&0.2      & 197.18                                                 & 0.1312                                    & -65.99   \\
0.9&0.1      & 76.97                                                    & 0.0741                                   & -2.84    \\
1.0&0        & 95.04                                                 & 0.0746                                    & -20.47   \\
1.1&-0.1     & 81.41                                                    & 0.0742                                   & -7.21   \\
1.2&-0.2     & 84.56                                                   & 0.0736                                    & -10.93   \\
1.3&-0.3     & 65.12                                                   & 0.0725                                   & 7.38    \\
1.4&-0.4     & 65.02                                                & 0.0734                                  & 8.43     \\
1.5&-0.5     & 64.07                                                     & 0.0752                                    & 11.1     \\ \bottomrule
\end{tabular}
\label{tab:effect_gamma}
\end{center}
\end{table}

\begin{figure}[ht]
\begin{center}
    \includegraphics[width=0.5\textwidth]{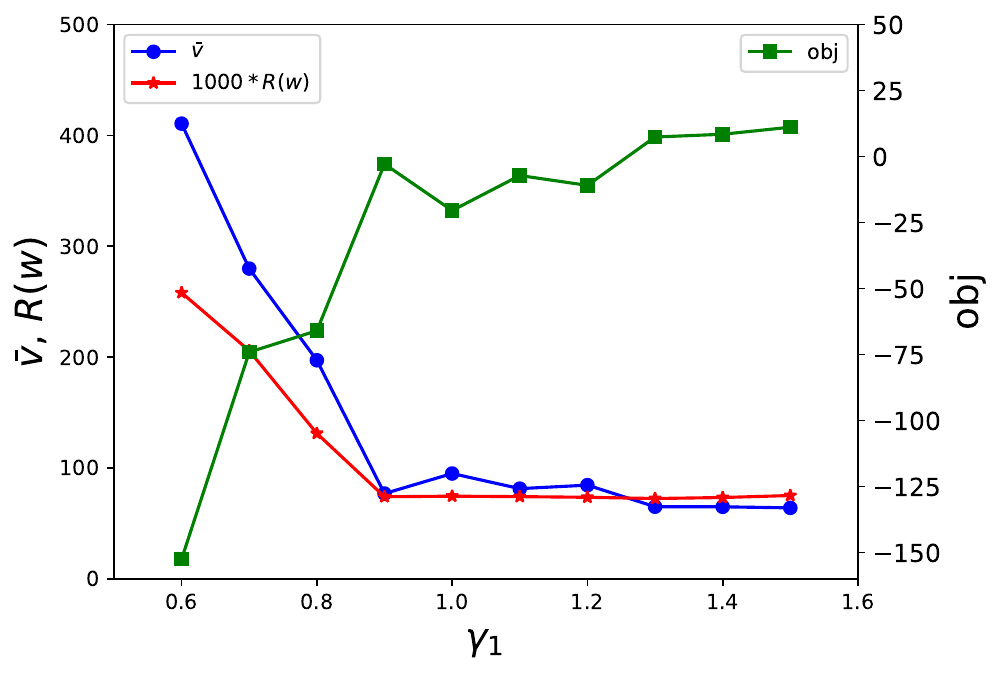}
\end{center}
\caption{\textbf{Results of different $\gamma$ in \proj on 2D jellyfish movement control task}.}
\label{fig:figure_effect_gamma}
\vskip -0.1in
\end{figure}

\textbf{2D jellyfish movement control}. Performance of \proj is determined by the hyperparameter $\gamma$. Since diffusion models denoise gradually,
 we use a varying sequence of $\gamma=\{\gamma_k\}_{k=1}^K$ to subtract prior in \proj.
Specifically, the schedule of $\gamma$ is set as $\gamma_k=1-\xi\cdot \beta_{K-k}, k=1,\cdots,K$, where $\xi$ is a fixed coefficient to control the scale of $\gamma$ and $\beta=\{\beta_k\}_{k=0}^K$ is the schedule of variances of noise in DDPM \cite{ho2020denoising}. In our implementation, we use the sigmoid schedule of $\beta$ \cite{jabri2022scalable}. 
The total number of inference steps is $K=1000$. Thus we only need to tune $\xi$ to examine the effect of $\gamma$. When $\xi<0$, \proj is prone to restrict $\w$ within its prior distribution of training dataset in inference. When $\xi>0$, \proj is more likely to generate new kinds of $\w$ beyond training ones. When $\xi=0$, \proj degenerates to \proj-lite. In 2D experiments, We set default $\xi=0.3$ and the corresponding $\gamma_1=0.7$ as we empirically find this value performs well and steadily. We present the performance of \proj on the 2D jellyfish movement control task under different $\gamma$ in Figure \ref{fig:figure_effect_gamma} and Table \ref{tab:effect_gamma}. We can observe that the performance increases along with decreasing of $\gamma_1$. When $\gamma_1<0.6$, invalid generated control sequences emerge because the prior is largely overlooked. Thus the valid interval for $\gamma$ of prior reweighting on this task is $[0.6,1.0]$. It is interesting to find that when $\gamma_1>1$, the performance decreases. This may be caused by the strict constraint of the prior distribution of $p(\w,\c)$, which results in generating control sequences similar to those from training datasets and thus not good. 

\subsection{Effect of $\lambda$}\label{app:hyperparam_lambda}

\textbf{1D Burgers' Equation control}.
In 1D Burgers control, \proj uses guidance conditioning since it empirically performs better than explicit guidance as shown in Table \ref{tab:ablation_1D_condition_or_guidance}. Therefore, our model does not use $\lambda$ to generate the optimal control. However, when we consider the energy cost as in Figure \ref{fig:pareto1d}, we use explicit guidance to control the magnitude of control signals as it is a more natural way to incorporate the objective $\lambda \mathcal{J}_{\text{energy}}$ into the overall diffusion objective $E_\theta ^{(\gamma)}(u,w,c) + \lambda \mathcal{J}_{\text{energy}} $ where the condition $c$ is $u_{T}=u_{d}$. 

The guidance intensity uses a cosine schedule \cite{nichol_improved_2021} where $\lambda=\lambda_0 \beta_k,~k=1,2,...,K$ where $\beta_k$ starts from $\beta_1=0.001$ to $\beta_K=1$. The schedule decreases as the diffusion step decreases during inference, where the gradually decreasing guidance intensity provides strong guides in the initial stage while reducing the impact on the near-clean samples.

In such a setting, varying $\lambda$ changes how the generated samples are shifted towards lower $\mathcal{\text{energy}}$ regions, but an excessively large $\lambda$ causes the sample to deviate from the physically feasible region. Results in Table \ref{tab:ablation_1D_lambda_FOPC}, \ref{tab:ablation_1D_lambda_POFC}, and \ref{tab:ablation_1D_lambda_POPC} show that the behavior of \proj under different $\lambda$ is expected. Note that ``physically feasible'' samples here are samples with lower $\mathcal{J}_{\text{actual}}$.

\begin{table}[hbt]
    \centering
    \caption{\textbf{Results of two different guidance type of \proj in 1D Burgers equation control task.}}
    \begin{tabular}{l|l|l|l}
    \toprule
        Guidance Type & FO-PC & PO-FC & PO-PC \\ \midrule
        Explicit Guidance & 0.02789 & 0.03257 & 0.05584 \\ 
        Conditioning Guidance & 0.00037 & 0.01139 & 0.00494 \\ \bottomrule
    \end{tabular}\label{tab:ablation_1D_condition_or_guidance}
\end{table}

\begin{table}[htb]
    \centering
    \caption{\textbf{Results of different $\lambda$ for $\mathcal{J}_{\text{energy}}$ of \proj in FO-PC 1D Burgers equation control task.}}
    \resizebox{1.0\linewidth}{!}{
    \begin{tabular}{l|l|l|l|l|l|l|l|l}
    \toprule
        $\lambda$ & 0 & 10 & 100 & 500 & 1000 & 10000 & 100000 & 1000000  \\ \midrule
        $\mathcal{J}_{\text{actual}}$ & 0.00037 & 0.00037 & 0.00037 & 0.00036 & 0.00037 & 0.00147 & 0.02202 & 0.06444 \\
        State MSE & 0.00025 & 0.00025 & 0.00024 & 0.00024 & 0.00024 & 0.00083 & 0.00802 & 0.01838 \\ 
        $\mathcal{J}_{\text{energy}}$ & 1320.98206 & 1309.31104 & 1237.06958 & 1061.67761 & 946.07617 & 584.32098 & 225.50281 & 38.25005 \\ \bottomrule
    \end{tabular}
    }
    \label{tab:ablation_1D_lambda_FOPC}
\end{table}
\begin{table}[htb]
    \centering
    \caption{\textbf{Results of different $\lambda$ for $\mathcal{J}_{\text{energy}}$ of \proj in PO-FC 1D Burgers equation control task.}}
    \resizebox{1.0\linewidth}{!}{
    \begin{tabular}{l|l|l|l|l|l|l|l|l}
    \toprule
        $\lambda$ & 0 & 10 & 100 & 500 & 1000 & 10000 & 100000 & 1000000  \\ \midrule
        $\mathcal{J}_{\text{actual}}$ & 0.01139 & 0.01152 & 0.01252 & 0.01906 & 0.02100 & 0.04411 & 0.08472 & 0.11934 \\ 
        State MSE & 0.02639 & 0.02641 & 0.02661 & 0.02919 & 0.02990 & 0.03857 & 0.05472 & 0.06471 \\ 
        $\mathcal{J}_{\text{energy}}$ & 862.98633 & 860.05164 & 839.11932 & 770.45819 & 704.68665 & 311.21915 & 69.87619 & 8.93373 \\ \bottomrule
    \end{tabular}
    }
    \label{tab:ablation_1D_lambda_POFC}
\end{table}
\begin{table}[htb]
    \centering
    \caption{\textbf{Results of different $\lambda$ for $\mathcal{J}_{\text{energy}}$ of \proj in PO-PC 1D Burgers equation control task.}}
    \resizebox{1.0\linewidth}{!}{
    \begin{tabular}{l|l|l|l|l|l|l|l|l}
    \toprule
        $\lambda$ & 0 & 10 & 100 & 500 & 1000 & 10000 & 100000 & 1000000  \\ \midrule
        $\mathcal{J}_{\text{actual}}$ & 0.00494 & 0.00500 & 0.00549 & 0.00794 & 0.01426 & 0.02411 & 0.03979 & 0.08153 \\ 
        State MSE & 0.00663 & 0.00676 & 0.00768 & 0.01048 & 0.01525 & 0.02766 & 0.03635 & 0.05459 \\ 
        $\mathcal{J}_{\text{energy}}$ & 1314.90698 & 1297.32849 & 1192.89026 & 945.77179 & 793.28442 & 331.45758 & 108.39325 & 16.13734 \\ \bottomrule
    \end{tabular}
    }
    \label{tab:ablation_1D_lambda_POPC}
\end{table}

\textbf{2D jellyfish movement control}. In our jellyfish movement control task, we use explicit guidance with $\lambda$ involved. Since diffusion models denoise gradually, similar to the hyperparameter $\gamma$, we use a varying sequence of $\lambda=$ {$\lambda_k$}$ =\lambda_0 \times ${ $\beta_k$}, where {$\beta_k$} is the schedule of variances of noise in DDPM [2] increasing monotonically from $\beta_1=0.0003$ to $\beta_K=1$. The motivation of this schedule is that the confidence of gradient estimation becomes stronger along with the denoising process from $k=K$ to $k=1$. The results are shown in the Table \ref{tab:effect_lambda_2d}. We can observe that as $\lambda_0$ (which determines the $\lambda$ schedule) increases from 0 to 0.6, the performance of \proj improves consistently. Compared to Table \ref{tab:2D_results}, \proj always outperforms baselines for a wide range of $0.1 \leq \lambda_0\leq0.5$. When $\lambda_0\geq0.6$, the results are no longer valid since the generated control sequences may be unfeasible. This reflects that $\lambda$ should not be overly large to make it safe, which is a consistent conclusion with our previous analysis.

\begin{table}[ht]
  \begin{center}
    \caption{\textbf{Results of different $\lambda$ on 2D jellyfish movement control}.}
\begin{tabular}{l|ccc}
\toprule
$\lambda_0$ &{Average speed ($\bar{v}$)} & \multicolumn{1}{l}{$R(\w)$} & objective  $\J$      \\ \midrule
0.6       & -                                           & -                            & -       \\ 
0.5       & 362.82                                      & 0.192                        & -170.79 \\ 
0.4       & 322.04                                      & 0.182                        & -140.04 \\ 
0.3       & 279.87                                      & 0.206                        & -74.11  \\ 
0.2       & 192.77                                      & 0.114                        & -78.57  \\ 
0.1       & 103.89                                      & 0.091                        & -12.32  \\ 
0         & -86.69                                      & 0.042                        & 128.67  \\ \bottomrule
\end{tabular}
\label{tab:effect_lambda_2d}
\end{center}
\end{table}

\section{Extensions to Finer-grained Jellyfish Movement Control Task}
\label{app:finer_2d}

To test the scalability of DiffConPDE in handling high-dimensional and complex dynamic systems, we extend the jellyfish movement control experiment from rigid boundaries of wings to soft ones.

\subsection{Experimental Setting}

Similar to previous setups in Appendix \ref{app:2d_experiment}, this extension aims to control the movement of a flapping jellyfish with two wings in a 2D fluid field where fluid flows with constant initial speed. However, in this task, we parametrize the jellyfish's boundary as the coordinate change ($\Delta x$, $\Delta y$) for each cell within the boundary, which serves as the control sequence $\w$. Undoubtedly, this is a high-dimensional and complex control task. Firstly, the control sequence $\w$ is elevated to three dimensions. In addition to ensuring optimization towards the control objective, maintaining consistency in the movement of different cells within the jellyfish boundary is also crucial.  Bearing this in mind, we have chosen to employ DiffConPDE-lite for this experimental setup to minimize potential disruptions to the jellyfish boundary.  Regarding the PDE state, the experiment adopts the full observation setup. All other settings remain the same with Appendix \ref{app:2d_experiment}.

\subsection{Data Generation}
In this extension, we leverage data previously generated under the vanilla setting.  Additionally, we utilize opening angles $\Theta$, The coordinates of the rotation center $\mathbf{h}$, and the boundary mask to generate the required control sequence $\w$.  We represent $\w$ as a tensor of shape $[\tilde{T}, 2, 64, 64]$. Each cell has two features representing the changes in the two coordinates. For each trajectory, $\w$ is generated as follows: Firstly, for the initial $\w_0$ and final $\w_T$, we set all elements in the tensor to $0$. Then, for each $ w_{[1, T-1]}$ corresponding to the cells within the boundary, we first obtain the relative coordinates $c = (x, y)$ of the cell with respect to the rotation center using the boundary mask and the coordinates of the rotation center $\mathbf{h}$. Afterwards, we construct the rotation matrix $M_t(\Theta_t)$ at different time steps using the opening angle $\Theta_{[1, T-1]}$:
\begin{equation*}
    M_t(\Theta_t) =
    \left[
    \begin{array}{cc}
        \cos(\Theta_t) & -\sin(\Theta_t) \\
        \sin(\Theta_t) & \cos(\Theta_t)
    \end{array}
    \right].
\end{equation*}
The coordinate change $w_t$ can be obtained using the following formula:
\begin{equation*}
    w_t = c M_t - c
\end{equation*}
Additionally, for cells that are not within the boundary, the corresponding $w_{[1, T-1]}$ is also directly set to $(0, 0)$. Therefore, all the data involved in this experiment are as follows: 
\begin{itemize}
    \item Coordinate change of cells $\w$: shape $[\tilde{T}, 2, 64, 64]$. For each step, we save the coordinate change of each cell within the boundary.
        \item PDE states $u$: shape $[\tilde{T}, 3, 64, 64]$. For each step, we save the states of the fluid field consisting of velocity in $x$ and $y$ directions and pressure. To save space, we downsample the resolution from $128\times128$ to $64\times64$.
    \begin{itemize}
        \item velocity: $[\tilde{T}, 2, 64, 64]$.
        \item pressure: $[\tilde{T}, 1, 64, 64]$.
    \end{itemize}
    \item opening angels $\Theta$: shape $[\tilde{T}]$. For each step, we save the opening angle in radians. 
    \item boundary mask and offsets $b$: $[\tilde{T}, 3, 64, 64]$. For each step, we save the mask of merged wings with half coordinates of boundary points and offsets in both $\x$ and $y$ directions. The resolution is $64\times64$, compatible with that of the states.
    \begin{itemize}
        \item mask: $[\tilde{T}, 1, 64, 64]$.
        \item offsets: $[\tilde{T}, 2, 64, 64]$.
    \end{itemize}
    \item force: shape $[\tilde{T}, 2]$. For each step, the simulator outputs the horizontal and vertical force from the fluid to the jellyfish. The horizontal force is regarded as a thrust to jellyfish if positive and a drag otherwise.
\end{itemize}

\subsection{Model}
Similar to the vanilla setting, this experiment also employs U-net as the backbone for the diffusion model. The U-net architecture remains consistent. The input of the U-net includes PDE state $\u ([T, 3, 64, 64])$, control sequence $\w ([T, 2, 64, 64])$, initial boundary mask and offset $([1, 3, 64, 64])$. It is worth noting that to align the initial mask and offset with others in terms of shape, we expand them along the time dimension, resulting in the final shapes of $[T, 3, 64, 64]$. Therefore, the shape of the input to the model is $[T, 8, 64, 64]$, and the output of the model includes both noise of the predicted state and the control sequence, with a shape of $[T, 5, 64, 64]$.

\subsection{Training, Inference, and Evaluation}
\paragraph{Training.}

During the training phase, the diffusion-flow task adopts a similar setup, utilizing the Mean Squared Error (MSE) between the model predictions and Gaussian noise as the loss function (Eq.\ref{eq:training_obj}). The batch size is chosen as 16, and the training involves 180,000 iterations. The learning rate starts at $1\times 10^{-4}$ and increases by a factor of 0.1 at the 50,000th and 150,000th iterations. More training details are presented in Table \ref{tab:2D_psi_arch_flow}.

\begin{table}[ht]
  \begin{center}
    \caption{\textbf{Hyperparameters of network architecture and training for the Finer-grained Jellyfish Boundary Control Task}.}
     \label{tab:2D_psi_arch_flow}
    \begin{tabular}{l|l} %
    \multicolumn{2}{l}{}\\
    \hline
      \text {Hyperparameter name} & {full observation} \\
      \hline

      Batch size &16 \\
      Optimizer &Adam \\
      Learning rate &0.001\\
      Loss function &MSE\\
      \hline
   \end{tabular}
  \end{center}
\end{table}

\paragraph{Inference.}
The pipeline of inference is shown in Figure \ref{fig:2d_flow_inference}.
Both diffused variables $\u_{[0, T]}$ and $\w_{[0, T]}$ are initialized from Gaussian prior and gradually denoised from denoising step $k = 1000$ to $k = 0$ based on denoising networks and guidance. Similar to the vanilla setting, due to the introduction of the initial boundary mask and offset as additional inputs, the number of channels in the model's input and output are inconsistent.
For guidance, we fix $\lambda = 5 \times 10^{-5}$ and we also utilized a surrogate model to approximate the force of fluid on the jellyfish. In the $k$-th denoising step, the inputs of this model include the noise-free state $\hat{\u}_{[0, T],k}$, $\hat{\w}_{[0, T], k}$ estimated from $\u_{[0, T], k}$,$\w_{[0, T], k}$ by Eq. \eqref{eq:estimate_x0}, initial boundary mask and offset. The model output is the force. The remaining aspects such as model architecture, training details, etc., remain the same. 
As for the regularization term $R(\w)$,  we approximate the opening angle of the jellyfish's wings at time $t$ using the change of the coordinate $(\Delta x_t$, $\Delta y_t)$ and the initial coordinates $(x, y)$ of each cell to achieve regularization.  Firstly, we utilize the following formula to approximate the angle change of each cell:
\begin{equation*}
\hat{\theta_t} = \sqrt{(\Delta x_t^2 + \Delta y_t^2)/(x^2 + y^2 )}
\end{equation*}
Then, we calculate the average value of the angles of different cells, which serves as an approximation of the angle between the two wings of the jellyfish at time $t$, denoted as $\hat{\Theta}_t$. It is evident that optimizing this term intuitively minimizes the coordinate changes of each cell, aligning with the objective of regularization.
The control objective $\J$ in Eq. \eqref{eq:jellyfish_obj} is computed as a summation of force and $R(\hat{\w}_{[0, T]})$, and we fix $\zeta = 1000$.  Then the gradients of the objective $\J$ in terms of $\hat{\u}_{[0, T]}$ and $\hat{\w}_{[0, T]}$ are computed and used in guidance.   
Because this experiment selects DiffConPDE-lite, all these gradients are directly subtracted from $[\mathbf{u}_{[0, T],k},\mathbf{w}_{[0, T],k}]$ to generate $[\mathbf{u}_{[0, T],k-1},\mathbf{w}_{[0, T],k-1}]$.

\begin{figure}[h]
\begin{center}
    \includegraphics[width=0.8\textwidth]{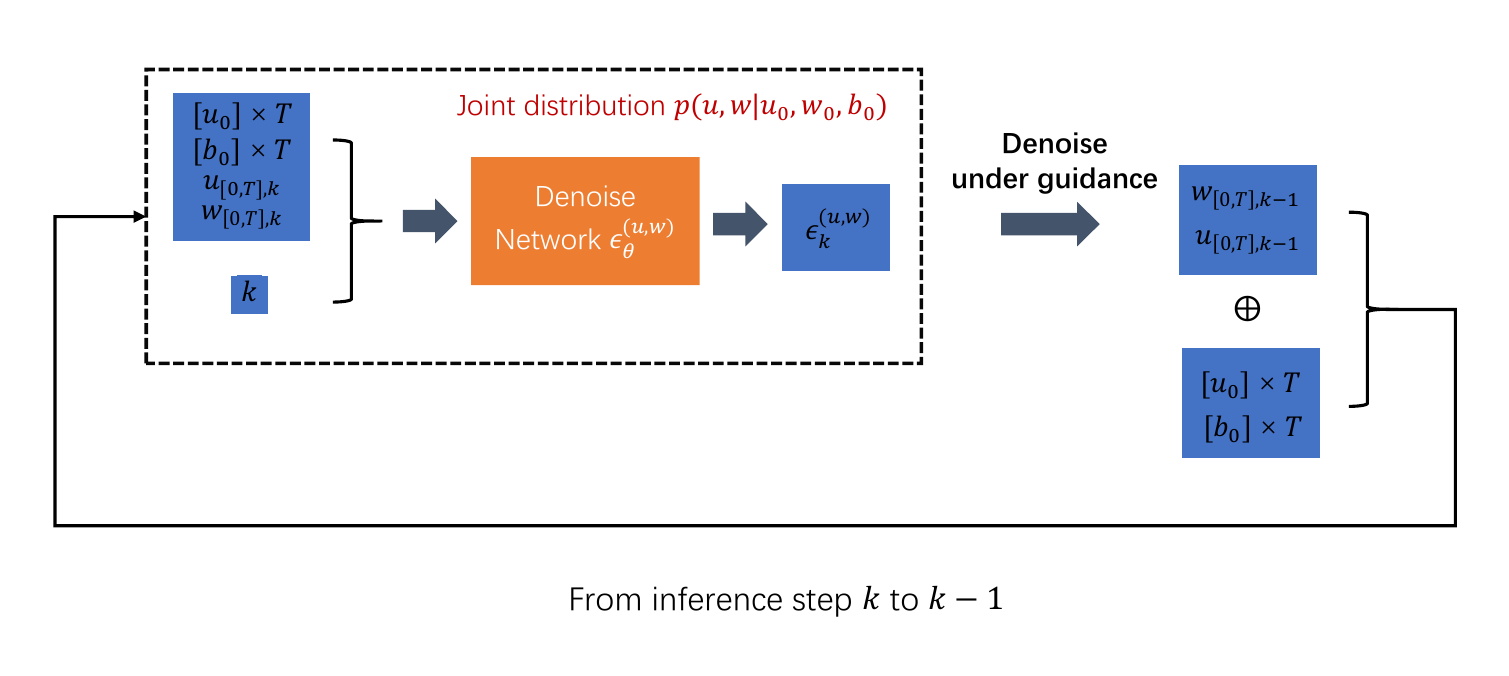}
\end{center}
\caption{\textbf{Inference of our finer-grained jellyfish boundary control task}.}
\label{fig:2d_flow_inference}
\vskip -0.1in
\end{figure}

\paragraph{Evaluation}
The inference outputs coordinate change for each cell of $T = 20$ steps for 50 testing samples. To align with the evaluation metrics of the vanilla setting, we first need to calculate the opening angle $\Theta_t$ using the coordinate change of the cells at time $t$. Firstly, at time $t \in [1, T-1]$, for each cell, we obtain the rotated coordinates $(x'_t, y'_t)$ using the initial coordinates $(x, y)$ and the coordinate change $(\Delta x_t, \Delta y_t)$. Then, combined with the coordinates of the rotation center $\mathbf{h}$, we can respectively obtain the radial distance of the cell relative to the rotation center before and after rotation, denoted as $r$ and $r'$. Then, according to the law of cosines, we can obtain the rotation angle $\delta \theta$ of the cell at time $t$ compared to the initial time:
$$
|\Delta \theta_t| = \frac{{r'_t}^2 + r_t^2 - (\delta x_t^2 + \delta y_t^2)}{2r'_tr_t}.
$$
Afterward, we take the average of the angle changes of all cells within the boundary as the angle change of the jellyfish's two wings at time $t$, denoted as $\Delta \Theta_t$.
Finally, based on the average value of the flow in the $x$-direction at time 
$t$, the opening or closing status of the jellyfish relative to the initial moment is determined. If it is negative, it indicates opening; otherwise, it suggests closing. This process allows us to obtain the specific angle change for each time step compared to the first moment:
\begin{eqnarray*}
\Theta_t = 
\begin{cases}
    \Theta_0 + |\Delta \Theta_t| &     \Delta \bar{x}_t < 0\\
    \Theta_0 - |\Delta \Theta_t| &    \Delta \bar{x}_t > 0\\
    \Theta_0 &                \Delta \bar{x}_t = 0.
\end{cases}
\end{eqnarray*}
After obtaining the theta sequence, we can follow the vanilla evaluation procedure.

\subsection{Results}
We present the results in Table \ref{tab:diff_flow_exp1}. From the table, it can be observed that our method optimized the objective $\mathcal{J}$ to 84.31 for this task. Compared to the results based on the original setting in Section 4.2, our method is still competitive with baselines (shown in Table \ref{tab:2D_results}) in this more challenging setting. An example of optimized soft boundary shapes is illustrated in Figure \ref{fig:diff_flow}.
These results demonstrate the good scalability of our method when facing higher-dimensional and complex control tasks. 
\begin{table}[h]
  \begin{center}
\caption{\textbf{Performance of \proj-lite on the finer-grained jellyfish boundary control task}.}
\begin{tabular}{ccc}
\toprule
 \multicolumn{1}{l}{Average speed ($\bar{v}$)} & \multicolumn{1}{l}{$R(\w)$} & objective ($\mathcal{J}$)              \\ 
 \midrule
-33.91 & 0.0504 & 84.31 \\
 \bottomrule
\end{tabular}
\label{tab:diff_flow_exp1}
\end{center}
\end{table}

\begin{figure}[h]
\begin{center}
    \includegraphics[width=1.0\textwidth]{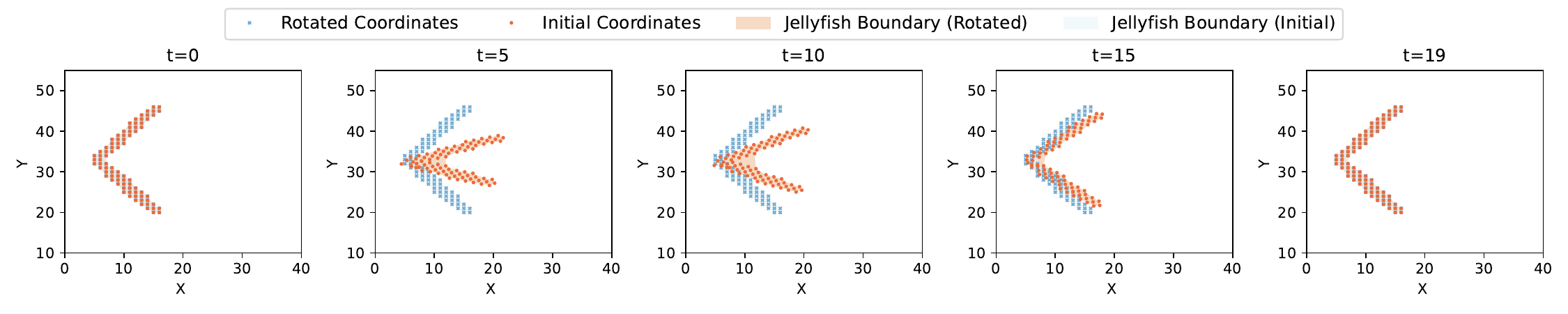}
\end{center}
\caption{\textbf{Inference results of soft boundaries on the finer-grained jellyfish control task}.}
\label{fig:diff_flow}
\vskip -0.1in
\end{figure}

\section{Limitation and Future Work}\label{app:future_work}
There are still several limitations of \proj that inspire future work. Firstly, our approach is data-driven, so it is not guaranteed to achieve optimal solutions and also lacks estimation of the error gap between the generated control sequence and the optimal ones. 
Secondly, the training of \proj is currently conducted in an offline fashion, without interaction with a ground-truth solver. Incorporating solvers into the training framework could facilitate real-time feedback, enabling the model to adapt dynamically to the environment and discover novel strategies and solutions. 
Furthermore, our proposed \proj presently operates in an open-loop manner, as it does not consider real-time feedback from solvers. Integrating such feedback would empower the algorithm to adjust its control decisions based on the evolving state of the environment.

\section{Social Impact Statements}\label{app:social_impact}
The method we propose offers a means to actively interact with the physical world and achieve specific control objectives. This approach presents significant opportunities for various domains, including fluid control, robotic control, controllable nuclear fusion, and more. However, it is imperative to remain vigilant about potential negative consequences that may arise from this technology to prevent its application to unethical or illegal control issues.

\end{document}